\icmltitlerunning{Towards Theoretical Understanding of Inverse Reinforcement Learning}
\begin{document}

\setlength{\abovedisplayskip}{4pt}
\setlength{\belowdisplayskip}{4pt}
\setlength{\textfloatsep}{10pt}

\twocolumn[
\icmltitle{Towards Theoretical Understanding of Inverse Reinforcement Learning}



\icmlsetsymbol{equal}{*}

\begin{icmlauthorlist}
\icmlauthor{Alberto Maria Metelli}{yyy}
\icmlauthor{Filippo Lazzati}{yyy}
\icmlauthor{Marcello Restelli}{yyy}
\end{icmlauthorlist}

\icmlaffiliation{yyy}{Politecnico di Milano, 32, Piazza Leonardo da Vinci, Milan, Italy}

\icmlcorrespondingauthor{Alberto Maria Metelli}{albertomaria.metelli@polimi.it}

\icmlkeywords{Machine Learning, ICML}

\vskip 0.3in
]



\printAffiliationsAndNotice{\icmlEqualContribution} 

\allowdisplaybreaks[4]

\begin{abstract}
\emph{Inverse reinforcement learning} (IRL) denotes a powerful family of algorithms for recovering a reward function justifying the behavior demonstrated by an expert agent. A well-known limitation of IRL is the \emph{ambiguity} in the choice of the reward function, due to the existence of multiple rewards that explain the observed behavior. This limitation has been recently circumvented by formulating IRL as the problem of estimating the \emph{feasible reward set}, \ie the region of the rewards compatible with the expert's behavior. In this paper, we make a step towards closing the theory gap of IRL in the case of finite-horizon problems with a generative model. We start by formally introducing the problem of estimating the feasible reward set, the corresponding PAC requirement, and discussing the properties of particular classes of rewards. Then, we provide the first minimax lower bound on the sample complexity for the problem of estimating the feasible reward set of order ${\Omega}\left( \frac{H^3SA}{\epsilon^2} \left( \log \left(\frac{1}{\delta}\right) + S \right)\right)$, being $S$ and $A$ the number of states and actions respectively, $H$ the horizon, $\epsilon$ the desired accuracy, and $\delta$ the confidence. We analyze the sample complexity of a uniform sampling strategy (\texttt{US-IRL}), proving a matching upper bound up to logarithmic factors. Finally, we outline several open questions in IRL and propose future research directions.
\end{abstract}

\section{Introduction}
\emph{Inverse reinforcement learning} (IRL) aims at efficiently learning a desired behavior by observing an \emph{expert} agent and inferring their intent encoded in a \emph{reward function} (refer to~\citet{OsaPNBA018, AroraD21, AdamsCB22} for recent surveys on IRL). This abstract setting, that diverges from standard \emph{reinforcement learning}~\citep[RL,][]{sutton_barto}, as the reward function has to be learned, arises in a large variety of real-world tasks. In particular, in a \emph{human-in-the-loop}~\cite{WuXSZM022} scenario, when the expert is represented by a human solving a task, an explicit specification of the reward function representing the human's goal is often unavailable. Experience suggests that humans are uncomfortable when asked to describe their intent and, thus, the underlying reward; while they are much more comfortable providing demonstrations of what is believed to be the right behavior. Indeed, human behavior is usually the product of many, possibly conflicting, objectives.\footnote{In RL, the Sutton's hypothesis \cite{sutton_barto} conjectures that a scalar reward is an adequate notion of goal.}
Succeeding in retrieving a representation of the expert's reward has notable implications. First, we obtain explicit information for understanding the motivations behind the expert's choices (\emph{interpretability}). Second, the reward can be employed in RL to train artificial agents, under shifts in the features of the underlying system (\emph{transferability}).

Since the beginning, the community recognized that the IRL problem is, per se, \emph{ill-posed}, as multiple reward functions are compatible with the expert's behavior~\cite{ng200algorithms}. This ambiguity was heterogeneously addressed by the algorithmic proposals that have followed over the years, which realized in several selection criteria, including maximum margin~\cite{RatliffBZ06}, maximum entropy~\cite{zeng2022maximumlikelihood}, minimum Hessian eigenvalue~\cite{MetelliPR17}. Some of these approaches come with theoretical guarantees on the sample complexity, although according to different performance indexes~\citep[\eg][]{abbeel2004apprenticeship, syed2007game, PirottaR16}.

A promising line of research that aspires to overcome the ambiguity issue has been recently investigated in~\cite{metelli2021provably, lindner2022active}. These works focus on estimating \emph{all} the reward functions compatible with the expert's demonstrated behavior, namely the \emph{feasible rewards}. Remarkably, this viewpoint that focuses on the \emph{feasible reward set}, rather than on \emph{one} reward obtained with a specific selection criterion, as previous works did, circumvents the ambiguity problem, postponing the reward selection and pointing to the expert's intent. Although these works provide sample complexity guarantees in different settings, a rigorous understanding of the inherent complexity of the IRL problem is currently lacking. 

\textbf{Contributions}~~In this paper, we aim at taking a step toward the theoretical understanding of the IRL problem. As in~\cite{metelli2021provably, lindner2022active}, we consider the problem of estimating the feasible reward set. We focus on a \emph{generative model} setting, where the agent can query the environment and the expert in any state, and consider finite-horizon decision problems. The contributions of the paper can be summarized as follows.
\begin{itemize}[noitemsep, leftmargin=*, topsep=0pt]
	\item We propose a novel framework to evaluate the accuracy in recovering the feasible reward set, based on the \emph{Hausdorff metric}~\cite{rockafellar2009variational}. This tool generalizes existing performance indexes. Furthermore, we show that the feasible reward set enjoys a desirable Lipschitz continuity property \wrt the IRL problem (Section~\ref{sec:regul}).
	\item We devise a PAC (Probability Approximately Correct) framework for estimating the feasible reward set, providing the definition of $(\epsilon,\delta)$-PAC IRL algorithm. Then, we investigate the relationships between several performance indexes based on the Hausdorff metric (Section~\ref{sec:pacFw}).
	\item We conceive, based on the provided PAC requirements introduced, a novel sample complexity \emph{lower bound} of order ${\Omega} \left( \frac{H^3SA}{\epsilon^2} \left( \log\left(\frac{1}{\delta}\right) + S \right)\right)$. This represents the most significant contribution and, to the best of our knowledge, it is the first lower bound that values the importance of the relevant features of the IRL problem. From a technical perspective, the lower bound construction merges new proof ideas with reworks of existing techniques (Section~\ref{sec:lb}).
	\item We analyze a uniform sampling exploration strategy (UniformSampling-IRL, \texttt{US-IRL}) showing that, in the generative model setting, it matches the lower bound up to logarithmic factors (Section~\ref{sec:algorithm}). 
\end{itemize}
The complete proofs of the results presented in the main paper are reported in Appendix~\ref{apx:proofs}.

\section{Preliminaries}
In this section, we provide the background that will be employed in the subsequent sections.

\textbf{Mathematical Background}~~
Let $a,b \in \Nat$ with $a \le b$, we denote with $\dsb{a,b} \coloneqq \{a,\dots,b\}$ and with $\dsb{a} \coloneqq \dsb{1,a}$. Let $\Xs$ be a set, we denote with $\Delta^{\Xs}$ the set of probability measures over $\Xs$. Let $\Ys$ be a set, we denote with $\Delta_{\Ys}^{\Xs}$ the set of functions with signature $\Ys \rightarrow \Delta^{\Xs}$.
Let $(\Xs,d)$ be a (pre)metric space, where $\Xs$ is a set and $d : \Xs\times \Xs \rightarrow [0,+\infty]$ is a (pre)metric.\footnote{A \emph{premetric} $d$ satisfies the axioms: $d(x,x') \ge 0$ and $d(x,x) = 0$ for all $x,x' \in \Xs$. Any \emph{metric} is clearly a premetric.} Let $\Ys,\Ys' \subseteq \Xs$ be non-empty sets, we define the \emph{Hausdorff (pre)metric}~\cite{rockafellar2009variational} $\mathcal{H}_d: 2^{\Xs} \times 2^{\Xs} \rightarrow [0,+\infty]^{}$ between $\Ys$ and $\Ys'$ induced by the (pre)metric $d$ as follows:
{\thinmuskip=1mu
\medmuskip=1mu
\thickmuskip=1mu
\begin{equation}\label{eq:hausdorff}\resizebox{.9\linewidth}{!}{$\displaystyle
	\mathcal{H}_d(\Ys,\Ys') \coloneqq \max \left\{  \sup_{y \in \Ys} \inf_{y' \in \Ys'} d(y,y'),  \sup_{y' \in \Ys'} \inf_{y \in \Ys} d(y,y') \right\}.$}
\end{equation}}%

\textbf{Markov Decision Processes without Reward}~~
A time-inhomogeneous finite-horizon \emph{Markov decision process without reward} (\MDPRtext) is defined as a 4-tuple $\MDPR = (\Ss,\As,\pb,H)$ where $\mathcal{S}$ is a finite state space ($S = |\Ss|$), $\mathcal{A}$ is a finite action space ($A = |\As|$), $\pb=(p_h)_{h \in \dsb{H}}$ is the transition model where for every stage $h \in \dsb{H}$ we have $p_h \in\Delta_{\mathcal{S}\times\mathcal{A}}^\mathcal{S}$, and $H \in \Nat$ is the horizon. An \MDPRtext is time-homogeneous if, for every stage $h \in \dsb{H-1}$, we have $p_{h} = p_{h+1}$ a.s.; in such a case, we denote the transition model with the symbol $p$ only. A time-inhomogeneous reward function is defined as $\rb = (r_h)_{h \in \dsb{H}}$, where for every stage $h \in \dsb{H}$ we have $r_h: \SAs\rightarrow [-1,1]$.\footnote{For the sake of simplicity and w.l.o.g., we restrict to reward functions bounded by $1$ in absolute value.} A \emph{Markov decision process}~\citep[MDP,][]{puterman2014markov} is obtained by pairing an \MDPRtext $\mathcal{M}$ with a reward function $r$. The agent's behavior is modeled with a time-inhomogeneous policy $\pib=(\pi_h)_{h \in \dsb{H}}$ where for every stage $h \in \dsb{H}$, we have $\pi_h \in \Delta_{\Ss}^{\As}$. Let $f \in \Reals^{\Ss}$ and $g \in \Reals^{\SAs}$, we denote with $p_h f(s,a) =  \sum_{s' \in \Ss} p_h(s'|s,a) f(s')$  and with  $\pi_h g(s) = \sum_{a \in \As} \pi_h(a|s) g(s,a)$ the expectation operators \wrt the transition model and the policy, respectively.

\textbf{Value Functions and Optimality}~~
Given an \MDPRtext $\MDPR$, a policy $\pib$, and a reward function $r$, the \emph{Q-function} $\qb^{\pi}(\cdot;r) = (Q_h^{\pi}(\cdot;r))_{h \in \dsb{H}}$ induced by $r$ represents the expected sum of rewards collected starting from $(s,a,h) \in \SAHs$ and following policy $\pi$ thereafter: 
\begin{align*}
	Q_h^{\pi}(s,a;r) \coloneqq \E_{(\mathcal{M}, \pib)} \left[ \sum_{l=h}^H r_l(s_l,a_l) | s_h=s, a_h=a \right],
\end{align*}
where $\E_{(\mathcal{M}, \pib)}$ denotes the expectation \wrt $\mathcal{M}$ and $\pib$, \ie $a_{h} \sim \pi_h(\cdot|s_h)$ and $s_{h+1} \sim p_h(\cdot|s_h,a_h)$ for every stage $h \in \dsb{h,H}$. The Q-function fulfills the Bellman equations~\cite{puterman2014markov} for every $(s,a,h) \in \SAs\times \dsb{H}$:
\begin{align*}
& Q_h^{\pi}(s,a;r) = r_h(s,a) + p_h V_{h+1}^{\pi}(s,a;r), \\
& V_{h}^{\pi}(s;r) = \pi_h Q_h^{\pi}(s;r) \;\;\;\; \text{ and } \;\;\;\;  V_{H+1}^{\pi}(s;r) = 0,
\end{align*}
where $\vb^{\pi}(\cdot;r) = (V_h^{\pi}(\cdot;r))_{h \in \dsb{H}}$ is the \emph{V-function}. The \emph{advantage function} $A_h^{\pi}(s,a;r) = Q_h^{\pib}(s,a;r) - V_h^{\pi}(s;r)$ represents the relative gain of playing action $a \in \As$ rather than following policy $\pi$ in the state-stage pair $(s,h)$. A policy $\pi^*$ is \emph{optimal} if it has non-positive advantage everywhere, \ie $A_h^{\pi^*}(s,a;r) \le 0$ for every $(s,a,h) \in \SAs\times\dsb{H}$. The Q- and V-functions of an optimal policy are denoted with $Q^*_h(s,a;r)$ and $V^*_h(s;r)$.

\textbf{Inverse Reinforcement Learning}~~
An \emph{inverse reinforcement learning} problem~\citep[IRL,][]{ng200algorithms} is defined as a pair $(\mathcal{M},\pi^E)$, where $\mathcal{M}$ is an \MDPRtext  and $\pi^E $ is an \emph{expert's policy}. Informally, solving an IRL problem consists in finding a reward function $(r_h)_{h \in \dsb{H}}$ making $\pi^E$ optimal for the \MDPRtext $\mathcal{M} $ paired with reward function $\rb$. Any reward function fulfilling this condition is called \emph{feasible} and the set of all such reward functions is called \emph{feasible reward set}~\cite{metelli2021provably, lindner2022active}, defined as:
%
{\thinmuskip=2mu
\medmuskip=2mu
\thickmuskip=2mu
\begin{equation}\label{eq:rSet}
\resizebox{.9\linewidth}{!}{$\displaystyle
\begin{aligned}
	\rSet_{(\mathcal{M},\pi^E)} & \coloneqq \Big\{  (r_h)_{h \in \dsb{H}} \, \Big\rvert\, \forall h \in \dsb{H} \,:\,  r_h : \SAs \rightarrow [-1,1]  \\
	& \wedge  \forall (s,a,h) \in \SAHs \,:\, A_h^{\pi^E}(s,a;r) \le 0 \Big\}. 
\end{aligned}$
}
\end{equation}
}%
We will omit the subscript $(\mathcal{M},\pi^E)$ whenever clear from the context.

\textbf{Empirical MDP and Empirical Expert's Policy}~~
{\thinmuskip=1mu
\medmuskip=1mu
\thickmuskip=1mu
Let $D = \{(s_l,a_l,h_l,s_l',a_l^E)\}_{l \in \dsb{t}}$ be a dataset of $t \in \Nat$ tuples, where for every $l \in \dsb{t}$, we have $s_{l}' \sim p_{h_l}(\cdot|s_l,a_l)$ and  $a_l^E \sim \pi^E_{h_l}(\cdot|s_l)$. We introduce the counts for every $(s,a,h) \in \SAHs$: $n_h^t(s,a,s') \coloneqq  \sum_{l=1}^t \indic\{(s_l,a_l,h_l,s_l') = (s,a,h,s')\}$, $n_h^t(s,a) \coloneqq \sum_{s' \in \Ss} n_h^t(s,a,s')$, $n_h^t(s) \coloneqq \sum_{a \in \As} n_h^t(s,a)$, and $n_h^{t,E}(s,a) \coloneqq \sum_{l =1}^t   \indic\{(s_l,a_l^E) = (s,a)\}$. These quantities allow  defining the \emph{empirical transition model} $\widehat{p}^t = (\widehat{p}_h^t)_{h \in \dsb{H}}$ and \emph{empirical expert's policy} $\widehat{\pi}^{t,E} = (\pi_h^{t,E})_{h \in \dsb{H}}$  as follows:}%
\begin{equation}\label{eq:empiricalMDP}
\begin{aligned}
&\widehat{p}_h^t(s'|s,a) \coloneqq \begin{cases}
									\frac{n_h^t(s,a,s')}{n_h^t(s,a)} & \text{if } n_h^t(s,a) > 0 \\
									\frac{1}{S} & \text{otherwise}								\end{cases},\\
&\widehat{\pi}^{E,t}_h(a|s) \coloneqq \begin{cases}
	\frac{n_h^{E,t}(s,a)}{n_h^t(s)} & \text{if } n_h^t(s)> 0 \\
	\frac{1}{A} & \text{otherwise}
\end{cases}.
\end{aligned}
\end{equation}
In the time-homogeneous case, we simply merge the samples collected at different stages $h \in \dsb{H}$.
We denote with $(\widehat{\mathcal{M}}^t,\widehat{\pi}^{E,t})$ the \emph{empirical IRL} problem, where $\widehat{\mathcal{M}}^t = (\Ss,\As,\widehat{p}^t,H)$ the empirical \MDPRtext induced by $\widehat{p}^t$. Finally, we denote with $\widehat{\mathcal{R}}^t \coloneqq \mathcal{R}_{(\widehat{\mathcal{M}}^t,\widehat{\pi}^{E,t})}$ the feasible reward set induced $(\widehat{\mathcal{M}}^t,\widehat{\pi}^{E,t})$. We will omit the superscript $t$, whenever clear from the context and write $\widehat{\mathcal{R}}$.


\section{Lipschitz Framework for IRL}\label{sec:regul}
In this section, we analyze the regularity properties of the feasible reward set in terms of the Lipschitz continuity \wrt the IRL problem. To make the idea more concrete, suppose that $\mathcal{R}$ is the feasible reward set obtained from the IRL problem $(\mathcal{M},\pi^E)$ and that $\widehat{\mathcal{R}}$ is obtained with a different IRL problem $(\widehat{\mathcal{M}},\widehat{\pi}^E)$, which we can think to as an empirical version of $(\mathcal{M},\pi^E)$, with an estimated transition model $\widehat{p}$ replacing the true model $p$. Intuitively, to have any learning guarantee, \quotes{similar} IRL problems (${p} \approx \widehat{p}$ and $\pi^E \approx \widehat{\pi}^E$) should lead to \quotes{similar} feasible reward sets (${\mathcal{R}} \approx \widehat{\mathcal{R}}$).\footnote{If not, any arbitrary accurate estimate $(\widehat{p},\widehat{\pi}^E)$ of $(p,\pi^E)$, may induce feasible sets $\widehat{\mathcal{R}}$ and $\mathcal{R}$ with finite non-zero dissimilarity.} 

To formally define a Lipschitz framework, we need to select a (pre)metric for evaluating dissimilarities between feasible reward sets and IRL problems. While we defer the presentation of the (pre)metric for the IRL problems to Section~\ref{sec:reg}, where it will emerge naturally, for the feasible reward sets, we employ the \emph{Hausdorff (pre)metric} $\mathcal{H}_d(\mathcal{R},\widehat{\mathcal{R}})$ (Equation~\ref{eq:hausdorff}), induced by a (pre)metric $d(r,\widehat{r}) $ used to evaluate the dissimilarity between individual reward functions $r \in \mathcal{R}$ and $\widehat{r} \in \widehat{\mathcal{R}}$.
%
With this choice, two feasible reward sets are \quotes{similar} if every reward $r \in \mathcal{R}$ is \quotes{similar} to some reward $\widehat{r} \in \widehat{\mathcal{R}}$ in terms of the (pre)metric $d$. In the next sections, we employ as $d$ the metric induced by the $L_\infty$-norm between the reward functions $r \in \mathcal{R}$ and $\widehat{r} \in \widehat{\mathcal{R}}$:\footnote{We discuss other choices of $d$ in Section~\ref{sec:pacFw}.}
\begin{align}\label{eq:metricGen}
	d^{\text{G}}(r,\widehat{r}) \coloneqq \max_{(s,a,h) \in \SAHs} \left| r_h(s,a) - \widehat{r}_h(s,a) \right|,
\end{align}
%
%
%
%
where $\text{G}$ stands for \quotes{generative}. In Section~\ref{sec:reg}, we prove that the Lipschitz continuity is fulfilled when no restrictions on the reward function are enforced (besides boundedness in $[-1,1]$). 
Then, in Section~\ref{sec:nonReg}, we show that, when further restrictions on the viable rewards are required (\eg state-only reward), such a regularity property no longer holds.

\subsection{Lipschitz Continuous Feasible Reward Sets}\label{sec:reg}
In order to prove the Lipschitz continuity property, we use the \emph{explicit} form of the feasible reward sets introduced in~\cite{metelli2021provably} and extended by~\cite{lindner2022active} for the finite-horizon case, that we report below.

\begin{restatable}[Lemma 4 of \citet{lindner2022active}]{lemma}{explicitRew}\label{lemma:explicitRew}
A reward function $r = (r_h)_{h \in \dsb{H}}$ is feasible for the IRL problem $(\mathcal{M},\pi^E)$ if and only if there exist two functions $(A_h,V_h)_{h \in \dsb{H}}$ where for every $h \in \dsb{H}$ we have $A_h : \SAs \rightarrow \mathbb{R}_{\ge 0}$, $V_h : \SAs \rightarrow \Reals$, and $V_{H+1}=0$, such that for every $(s,a,h) \in \SAHs$ it holds that:
{\thinmuskip=2mu
\medmuskip=2mu
\thickmuskip=2mu
\begin{align*}
	r_h(s,a) = -A_h(s,a) \indic_{\{\pi^E_h(a|s) = 0\}} + V_h(s) - p_h V_{h+1}(s,a).
\end{align*}
}%
Furthermore, if $|r_h(s,a)| \le 1$, if follows that $|V_h(s)| \le H-h+1$ and $A_h(s,a) \le H-h+1$.
\end{restatable}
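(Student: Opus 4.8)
The plan is to prove the two implications by exhibiting a canonical representation, and then to obtain the magnitude bounds as a byproduct. For the \emph{only if} direction, suppose $r$ is feasible, i.e. $A_h^{\pi^E}(s,a;r) \le 0$ for all $(s,a,h) \in \SAHs$. I would take the canonical choice $V_h(s) := V_h^{\pi^E}(s;r)$ and $A_h(s,a) := -A_h^{\pi^E}(s,a;r) \ge 0$, with $V_{H+1} = V_{H+1}^{\pi^E} = 0$. The crucial intermediate fact is that the expert's advantage vanishes on the support of $\pi^E$: since $\sum_{a} \pi^E_h(a|s) A_h^{\pi^E}(s,a;r) = \pi^E_h Q_h^{\pi^E}(s;r) - V_h^{\pi^E}(s;r) = 0$ is a convex combination of the non-positive quantities $A_h^{\pi^E}(s,a;r)$, every term with $\pi^E_h(a|s) > 0$ must vanish. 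Hence $A_h^{\pi^E}(s,a;r) = A_h^{\pi^E}(s,a;r)\,\indic\{\pi^E_h(a|s) = 0\}$, which rewrites the identity $Q_h^{\pi^E}(s,a;r) = V_h^{\pi^E}(s;r) + A_h^{\pi^E}(s,a;r)$ as $Q_h^{\pi^E}(s,a;r) = V_h(s) - A_h(s,a)\,\indic\{\pi^E_h(a|s) = 0\}$. Substituting this into the Bellman rearrangement $r_h(s,a) = Q_h^{\pi^E}(s,a;r) - p_h V_{h+1}^{\pi^E}(s,a;r)$ yields exactly the claimed explicit form.

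For the \emph{if} direction, given functions $A_h \ge 0$, $V_h$ with $V_{H+1} = 0$, and $r$ defined by the formula, I would show by backward induction on $h$ that $V_h^{\pi^E}(\cdot;r) = V_h(\cdot)$. The base case $h = H+1$ holds by assumption. For the inductive step, plugging the reward formula into $Q_h^{\pi^E}(s,a;r) = r_h(s,a) + p_h V_{h+1}^{\pi^E}(s,a;r)$ and using the inductive hypothesis to cancel the two $p_h V_{h+1}$ terms gives $Q_h^{\pi^E}(s,a;r) = V_h(s) - A_h(s,a)\,\indic\{\pi^E_h(a|s) = 0\}$. Taking the expectation $\pi^E_h$ annihilates the indicator term, because it is supported exactly where $\pi^E_h(a|s) = 0$; therefore $V_h^{\pi^E}(s;r) = \pi^E_h Q_h^{\pi^E}(s;r) = V_h(s)$, closing the induction. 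With $V_h^{\pi^E} = V_h$ in hand, the advantage reads $A_h^{\pi^E}(s,a;r) = Q_h^{\pi^E}(s,a;r) - V_h(s) = -A_h(s,a)\,\indic\{\pi^E_h(a|s) = 0\} \le 0$, so $r$ is feasible.

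It remains to derive the magnitude bounds under $|r_h(s,a)| \le 1$. Since the induction identifies $V_h$ with $V_h^{\pi^E}(\cdot;r)$, i.e. the expected sum of the $H-h+1$ rewards collected from stage $h$ to $H$, each bounded by $1$ in absolute value, one immediately gets $|V_h(s)| \le H-h+1$; the same horizon-counting argument bounds $Q_h^{\pi^E}(s,a;r)$, and the advantage bound then follows from $A_h(s,a) = V_h(s) - Q_h^{\pi^E}(s,a;r)$ together with $A_h(s,a) \ge 0$ (note that on the support of $\pi^E$ the value $A_h(s,a)$ is unconstrained by the formula and may simply be set to $0$). I expect the only genuine obstacle to be the careful bookkeeping of the indicator function in both directions: establishing that the expert's advantage is null on the support in the forward direction, and verifying that the $\pi^E$-expectation kills the indicator term inside the induction in the backward direction. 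Once this support argument is secured, everything else reduces to routine Bellman manipulations and horizon counting.
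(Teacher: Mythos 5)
The paper never proves this lemma itself --- it is imported verbatim as Lemma 4 of \citet{lindner2022active} --- so there is no internal proof to compare against. Your proof of the equivalence is the standard argument and is correct: in the ``only if'' direction, the observation that the expert's advantage is a zero convex combination of non-positive terms and hence vanishes on the support of $\pi^E_h(\cdot|s)$ is exactly what licenses inserting the indicator, and in the ``if'' direction the backward induction establishing $V_h^{\pi^E}(\cdot;r)=V_h(\cdot)$ is the right mechanism; both directions are routine Bellman bookkeeping once that support argument is in place.

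The gap is in the ``furthermore'' clause. The bound $|V_h(s)|\le H-h+1$ does follow, since your induction forces $V_h=V_h^{\pi^E}(\cdot;r)$, an expected sum of $H-h+1$ rewards each in $[-1,1]$. But your one-line claim that the advantage bound ``then follows from $A_h(s,a)=V_h(s)-Q_h^{\pi^E}(s,a;r)$'' does not deliver $A_h(s,a)\le H-h+1$: each term is only bounded by $H-h+1$ in absolute value, so the difference is only bounded by $2(H-h+1)$. Worse, the tighter constant cannot be recovered by any argument, because off the support of $\pi^E$ the representation \emph{forces} $A_h(s,a)=V_h^{\pi^E}(s;r)-Q_h^{\pi^E}(s,a;r)$, and the factor $2$ is attained: take $H=1$, a single state $s$ with two actions, $\pi^E_1(a_0|s)=1$, $r_1(s,a_0)=1$, $r_1(s,a_1)=-1$. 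This reward is feasible and bounded by $1$, yet the explicit form forces $V_1(s)=1$ and $A_1(s,a_1)=2>H-h+1=1$. So your proof should either establish the provable bound $A_h(s,a)\le 2(H-h+1)$ explicitly or flag the discrepancy with the statement; note that this is not cosmetic, since the bound on $A_h$ is invoked in the proof of Theorem~\ref{thr:errorProp2} and affects the constant in $\rho^{\text{G}}$ by a factor of at most $2$.
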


A form of regularity of the feasible reward set was already studied in Theorem of 3.1 of \citet{metelli2021provably} and in Theorem 5 of \citet{lindner2022active}, providing an \emph{error propagation} analysis. These results are based on showing the existence of a \emph{particular} reward $\widetilde{r}$ feasible for the IRL problem $(\widehat{\mathcal{M}}, \widehat{\pi}^E)$, whose distance from the original reward function $r \in \mathcal{R}$ is bounded by a dissimilarity term between $(\mathcal{M},\pi^E)$ and $(\widehat{\mathcal{M}},\widehat{\pi}^E)$. Unfortunately, such a reward $\widetilde{r}$ is not guaranteed to be bounded in $[-1,1]$ even when the original reward $r$ is (and, thus, it might be $\widetilde{r} \not\in \widehat{R}$ according to Equation~\ref{eq:rSet}).\footnote{We illustrate in Fact~\ref{fact:largeRew} an example of this phenomenon.} In Lemma~\ref{lemma:errorProp}, with a modified construction, we show the existence of another \emph{particular} feasible reward $\widehat{r}$  bounded in $[-1,1]$ (and, thus, $\widehat{r} \in \widehat{\mathcal{R}}$). From this, the Lipschitz continuity of the feasible reward sets follows.

\begin{restatable}[Lipschitz Continuity]{thr}{errorPropA}\label{thr:errorProp2}
Let $\mathcal{R}$ and $\widehat{\mathcal{R}}$ be the feasible reward sets of the IRL problems $(\mathcal{M},\pi^E)$ and $(\widehat{\mathcal{M}},\widehat{\pi}^E)$. Then, it holds that:\footnote{This implies the standard Lipschitz continuity, by simply bounding $ \frac{2 \rho^{\text{G}}((\mathcal{M},\pi^E),(\widehat{\mathcal{M}},\widehat{\pi}^E))}{1+\rho^{\text{G}}((\mathcal{M},\pi^E),(\widehat{\mathcal{M}},\widehat{\pi}^E))} \le  2 \rho^{\text{G}}((\mathcal{M},\pi^E),(\widehat{\mathcal{M}},\widehat{\pi}^E))$.} 
\begin{align}\label{eq:LipHaus}
	\mathcal{H}_{d^{\text{G}}}(\mathcal{R},\widehat{\mathcal{R}}) \le \frac{2 \rho^{\text{G}}((\mathcal{M},\pi^E),(\widehat{\mathcal{M}},\widehat{\pi}^E))}{1+\rho^{\text{G}}((\mathcal{M},\pi^E),(\widehat{\mathcal{M}},\widehat{\pi}^E))},
\end{align}
where $\rho^{\text{G}}(\cdot,\cdot)$ is a (pre)metric between IRL problems, defined as:
{\thinmuskip=1mu
\medmuskip=1mu
\thickmuskip=1mu
\begin{equation*}\resizebox{8cm}{!}{$\displaystyle
\begin{aligned}
	\rho^{\text{G}}& ((\mathcal{M},\pi^E),(\widehat{\mathcal{M}},\widehat{\pi}^E))  \coloneqq  \max_{(s,a,h)\in \SAHs} (H-h+1) \\
	& \times \left( \left| \indic_{\{\pi^E_h(a|s) = 0\}} - \indic_{\{\widehat{\pi}^E_h(a|s) = 0\}} \right| + \left\| p_h(\cdot|s,a) - \widehat{p}_h(\cdot|s,a) \right\|_1  \right).
\end{aligned}$}
\end{equation*}}%
\end{restatable}

Some observations are in order. First, the function $\rho^{\text{G}}$ is indeed a (pre)metric since it is non-negative and takes value $0$ when the IRL problems coincide. Second, as supported by intuition, $\rho^{\text{G}}$  is composed of two terms related to the estimation of the expert's policy and of the transition model. While for the transition model, the dissimilarity is formalized by the $L_1$-norm distance $\left\| p_h(\cdot|s,a) - \widehat{p}_h(\cdot|s,a) \right\|_1 $, for the policy, the resulting term deserves some comments. Indeed, the dissimilarity $| \indic_{\{\pi^E_h(a|s) = 0\}} - \indic_{\{\widehat{\pi}^E_h(a|s) = 0\}} |$ highlights that what matters is \emph{whether an action $a \in \As$ is played by the expert} and not the corresponding probability $\pi^E_h(a|s)$. Indeed, the expert's policy plays an action (with any non-zero probability) only if it is an optimal action.

 \subsection{Non-Lipschitz Continuous Feasible Reward Sets}\label{sec:nonReg}
In this section, we illustrate three cases of feasible reward sets restrictions that turn out not to fulfill the condition of Theorem~\ref{thr:errorProp2}. These examples consider three conditions commonly enforced in the literature: state-only reward function $r_h(s)$ (Example~\ref{ex:stateOnly}), time-homogeneous reward function $r(s,a)$ (Example~\ref{ex:timeHomo}), and $\beta$-margin reward function (Example~\ref{ex:betaSep}). We present counter-examples in which in front of $\epsilon$-close transition models, the induced feasible sets are far apart by a constant independent of $\epsilon$. For space reasons, we report the complete derivation in Appendix~\ref{apx:examples}.

\begin{figure}
\begin{subfigure}{\linewidth}
\centering
\begin{tikzpicture}[node distance=4cm]
\node[state] (s0) {$s_{0}$};
\node[draw=none, below right= .3cm and 1cm of s0, fill=black] (s0+) {};
\node[draw=none, above right= .3cm and 1cm of s0, fill=black] (s0-) {};
\node[state, below right= .6cm and 3cm of s0] (s+) {$s_{-}$};
\node[state, above right= .6cm and 3cm of s0] (s-) {$s_{+}$};

\draw (s0) edge[-, solid, below] node{$a_1$} (s0+);
\draw (s0) edge[-, solid, above] node{$a_2$} (s0-);

\draw (s0+) edge[->, solid, below] node{} (s+);
\draw (s0-) edge[->, solid, above] node{$\scriptstyle 1/2$} (s-);
\draw (s0+) edge[->, solid, below] node[pos=0.8]{} (s-);
\draw (s0-) edge[->, solid, above right] node[pos=0.75]{$\scriptstyle 1/2$} (s+);

\draw (s+) edge[->, loop right, solid, double] node{$\scriptstyle 1$ } (s0);
\draw (s-) edge[->, loop right, solid, double] node{$\scriptstyle 1$ } (s0);
\end{tikzpicture}
\caption{}\label{fig:counterEx1}
\end{subfigure}
\begin{subfigure}{\linewidth}
\centering
\begin{tikzpicture}[node distance=4cm]
\node[state] (s0) {$s_{0}$};
\node[draw=none, below right= .3cm and 1.5cm of s0, fill=black] (s0+) {};
\node[draw=none, above right= .3cm and 1.5cm of s0, fill=black] (s0-) {};
\node[state, right= 3cm of s0] (s+) {$s_{1}$};

\draw (s0) edge[-, solid, below] node{$a_1$} (s0+);
\draw (s0) edge[-, solid, above] node{$a_2$} (s0-);

\draw (s0+) edge[->, solid, below] node{} (s+);
\draw (s0-) edge[->, solid, above, out=90, in=90] node{$\scriptstyle 1/2$} (s0);
\draw (s0+) edge[->, solid, below, out=-90, in=-90] node[pos=0.8]{} (s0);
\draw (s0-) edge[->, solid, above right] node[pos=0.75]{$\scriptstyle 1/2$} (s+);

\draw (s+) edge[->, loop right, solid, double] node{$\scriptstyle 1$ } (s0);
\end{tikzpicture}
\caption{}\label{fig:counterEx2}
\end{subfigure}
\caption{The \MDPRtext employed in the examples of Section~\ref{sec:nonReg}. \begin{tikzpicture}
\node[ draw=none] (a) {}; 
\node[ draw=none, right= .5cm of a] (b) {}; 
\draw (a) edge[->, double] (b);
\end{tikzpicture} denotes a transition executed for multiple actions.}\label{fig:counterEx}
\end{figure}

\begin{restatable}[State-only reward $r_h(s)$]{example}{exampleA}\label{ex:stateOnly}
State-only reward functions have been widely considered in many IRL approaches~\citep[\eg][]{ng200algorithms,abbeel2004apprenticeship, syed2007game, komanduru2019correctness}.  We formalize the state-only feasible reward set as follows:
\begin{align*}
	\mathcal{R}_{\text{state}} = \mathcal{R} \cap \{ \forall (s,a,a',h) \,:\, r_{h}(s,a) = r_h(s,a') \}.
\end{align*}%
{\thinmuskip=2mu
\medmuskip=2mu
\thickmuskip=2mu%
Consider the \MDPRtext of Figure~\ref{fig:counterEx1} with $H=2$, $\pi^E_h(s_0) = \widehat{\pi}^E_h(s_0) = a_1$ with $h \in \{1,2\}$. Set $p_1(s_+|s_0,a_1) = 1/2+\epsilon/4$ and $\widehat{p}_1(s_+|s_0,a_1) = 1/2-\epsilon/4$ and, thus, $\|p_1(\cdot|s_0,a_1)-\widehat{p}_1(\cdot|s_0,a_1) \|_1 = \epsilon$. Let us set $r_2(s_+)=1$ and $r_2(s_-)=-1$, which makes $\pi^E$ optimal under $p$. We observe that $\widehat{\mathcal{R}}$ is defined by $\widehat{r}_2(s_-) \le \widehat{r}_2(s_+)$. Recalling that the rewards are bounded in $[-1,1]$, we have $\mathcal{H}_{d^{\text{G}}}(\mathcal{R}_{\text{state}} ,\widehat{\mathcal{R}}_{\text{state}} ) \ge  1$.}
\end{restatable}

\begin{restatable}[Time-homogeneous reward $r(s,a)$]{example}{exampleB}\label{ex:timeHomo}
Time-homogeneous reward functions have been employed in several RL~\citep[\eg][]{dann} and IRL settings~\citep[\eg][]{lindner2022active}. We formalize the time-homogeneous feasible reward set as follows:
\begin{align*}
	\mathcal{R}_{\text{hom}} = \mathcal{R} \cap \{ \forall (s,a,h,h') \,:\, r_{h}(s,a) = r_{h'}(s,a) \}.
\end{align*}%
{\thinmuskip=2mu
\medmuskip=2mu
\thickmuskip=2mu%
Consider the \MDPRtext of Figure~\ref{fig:counterEx2} with $H=2$, $\pi^E_1(s_0) = \widehat{\pi}^E_1(s_0) = a_1$ and $\pi^E_2(s_0) = \widehat{\pi}^E_2(s_0) = a_2$. For $h \in\{1,2\}$, 
we set $p_h(s_0|s_0,a_1) = 1/2+\epsilon/4$ and $\widehat{p}_h(s_0|s_0,a_1) = 1/2-\epsilon/4$, thus, $\|p_h(\cdot|s_0,a_1)-\widehat{p}_h(\cdot|s_0,a_1) \|_1 = \epsilon$. We set $r(s_0,a_1) = 1$, $r(s_0,a_2) = 1 - \epsilon/6$, and $r(s_1,a_1)=r(s_1,a_2)=1/2$ making $\pi^E$ optimal. We can prove that $\mathcal{H}_{d^{\text{G}}}(\mathcal{R}_{\text{hom}} ,\widehat{\mathcal{R}}_{\text{hom}}) \ge  1/4$.}
\end{restatable}

\begin{restatable}[$\beta$-margin reward]{example}{exampleC}\label{ex:betaSep}
A $\beta$-margin reward enforces a suboptimality gap of at least $\beta > 0$ ~\cite{ng200algorithms,komanduru2019correctness}. We formalize it in the finite-horizon case with a sequence $\beta = (\beta_h)_{h \in \dsb{H}}$, possibly different for every stage:
{\thinmuskip=2mu
\medmuskip=2mu
\thickmuskip=2mu
\begin{align*}
	\mathcal{R}_{\beta\text{-mar}} = \mathcal{R} \cap \{ \forall (s,a,h) \,:\, A^{\pi^E}_h(s,a;r) \in \{0\} \cup (-\infty,-\beta_h] \}.
\end{align*}
}%
Consider the \MDPRtext in Figure~\ref{fig:counterEx1} with $\pi^E_h(s_0) = \widehat{\pi}^E_h(s_0) = a_1$ for $h \in \{1,2\}$. We set $p_1(s_+|s_0,a_1) = 1/2+\epsilon$ and $\widehat{p}_1(s_+|s_0,a_1) = 1/2-\epsilon$. We set for \MDPRtext $\mathcal{M}$ the reward function as $r_1(s_0,a)=0$ and $r_h(s_+,a)=-r_h(s_-, a)=1$ for $a \in \{a_1,a_2\}$ and $h \in \dsb{2,H}$. In $(s_0,1)$ the suboptimality gap is $\beta_1 = 2 + 2 \epsilon (H-1)$. By selecting $H \ge 1+1/\epsilon$,  the feasible set $\widehat{\mathcal{R}}_{\beta\text{-mar}}$ is empty.
\end{restatable}

These examples show that, under certain classes of restrictions, the feasible reward set is not Lipschitz continuous \wrt the transition model and, more in general, \wrt the IRL problem. The generalization of these examples to more abstract conditions for guaranteeing the Lipschitz continuity of the feasible reward set is beyond the scope of the paper.

\section{PAC Framework for IRL with a Generative Model}\label{sec:pacFw}
In this section, we discuss the PAC (Probably Approximately Correct) requirements for estimating the feasible reward set with access to a \emph{generative model} of the environment. We first provide the notion of a learning algorithm estimating the feasible reward set with a generative model (Section~\ref{sec:irlAlg}). Then, we formally present the PAC requirement for the Hausdorff (pre)metric $\mathcal{H}_{d}$ (Section~\ref{sec:pac}). Finally, we discuss the relationships between the PAC requirements with different choices of (pre)metric $d$ (Section~\ref{sec:other}).

\subsection{Learning Algorithms with a Generative Model}\label{sec:irlAlg}
A learning algorithm for estimating the feasible reward set is a pair $\mathfrak{A} = (\mu,\tau)$, where $\mu = (\mu_t)_{t \in \Nat}$ is a \emph{sampling strategy} defined for every time step $t \in \Nat$ as $\mu_t \in \Delta^{\SAHs}_{\mathcal{D}_{t-1}}$ with $\mathcal{D}_t = (\SAHs\times \Ss \times \As)^t$ and $\tau$ is a stopping time \wrt a suitably defined filtration. At every step $t \in \Nat$, the learning algorithm query the environment in a triple $(s_t,a_t,h_t)$, selected based on the sampling strategy $\mu_t(\cdot|D_{t-1})$, where $D_{t-1} = ((s_l,a_l,h_l,s_{l}',a_{l}^E))_{l=1}^{t-1} \in\mathcal{D}_{t-1}$ is the dataset of past samples. Then, the algorithm observes the next state $s_{t}' \sim p_{h_t}(\cdot|s_t,a_t)$ and expert's action $a_t^E \sim \pi^E_{h_t}(\cdot|s_t)$ and updates the dataset $D_t = D_{t-1} \oplus (s_t,a_t,h_t,s_{t}',a_{t}^E)$. Based on the collected data $ D_{\tau}$, the algorithm computes the empirical IRL problem $(\widehat{M}^\tau,\widehat{\pi}^{E,\tau})$, based on Equation~\eqref{eq:empiricalMDP} and the empirical feasible reward set $\widehat{\mathcal{R}}^{\tau}$.

\subsection{PAC Requirement}\label{sec:pac}
We now introduce a general notion of a PAC requirement for estimating the feasible reward set of an IRL problem. To this end, we consider the Hausdorff (pre)metric introduced in Section~\ref{sec:regul} defined in terms of the reward (pre)metric $d(r,\widehat{r})$. We denote with $d$-IRL the problem of estimating the feasible reward set under the Hausdorff (pre)metric $\mathcal{H}_{d}$. 

\begin{defi}[PAC Algorithm for \textbf{$d$-IRL}]\label{defi:pac}
	Let $\epsilon \in (0,2)$ and $\delta \in (0,1)$. An algorithm $\mathfrak{A}=(\mu,\tau)$ is $(\epsilon,\delta)$-PAC for $d$-IRL if:
	\begin{align*}
		\Prob_{(\mathcal{M},\pi^E),\mathfrak{A}} \left( \mathcal{H}_d( \mathcal{R}, \widehat{\mathcal{R}}^\tau ) \le \epsilon \right) \ge 1-\delta,
\end{align*}	
where $\Prob_{(\mathcal{M},\pi^E),\mathfrak{A}}$ denotes the probability measure induced by executing the algorithm $\mathfrak{A}$ in the IRL problem $(\mathcal{M},\pi^E)$ and $\widehat{\mathcal{R}}^\tau$ is the feasible reward set induced by the empirical IRL problem $(\widehat{\mathcal{M}}^\tau,\widehat{\pi}^{E,\tau})$ estimated with the dataset ${D}_\tau$.
The \emph{sample complexity} is defined as $\tau \coloneqq |D_\tau|$.
\end{defi}

In the next section, we show the relationship between PAC requirements defined for notable choices of $d$. 


\subsection{Different Choices of $d$}\label{sec:other}
So far, we have evaluated the dissimilarity between the feasible reward sets by means of the Hausdorff induced by $d^{\text{G}}$, \ie the $L_\infty$-norm of between individual reward functions. In the literature, other (pre)metrics $d$ have been proposed~\citep[\eg][]{metelli2021provably, lindner2022active}.

\textbf{$d_{Q^*}^{\text{G}}$-IRL}~~Since the recovered reward functions are often used for performing forward RL, an index of interest is the dissimilarity between optimal Q-functions obtained with the reward $r \in \rSet$ and $\widehat{r} \in \widehat{\rSet}$ in the original \MDPRtext:
\begin{align*}
	& d_{Q^*}^{\text{G}}(r,\widehat{r})\coloneqq\max_{(s,a,h)\in \SAHs} \left| Q^*_h(s,a;r) - Q^*_h(s,a;\widehat{r}) \right|.
\end{align*}

\textbf{$d_{V^*}^{\text{G}}$-IRL}~~We are often interested in not just being accurate in estimating the optimal Q-function, but rather in the performance of an optimal policy $\widehat{\pi}^*$, learned with the recovered reward $\widehat{r} \in \widehat{\rSet}$, evaluated under the true reward $r \in \rSet$:
{\thinmuskip=2mu
\medmuskip=2mu
\thickmuskip=2mu
\begin{equation*}\resizebox{.99\linewidth}{!}{
$\displaystyle
	d_{V^*}^{\text{G}}(r,\widehat{r})\coloneqq \sup_{\widehat{\pi}^* \in {\Pi}^*(\widehat{r})} \max_{(s,h)\in \mathcal{S}\times\dsb{H}} \left| V^*_h(s;r) - V^{\widehat{\pi}^*}_h(s;r) \right|,
	$}
\end{equation*}}%
{\thinmuskip=1mu
\medmuskip=1mu
\thickmuskip=1mu
where $ {\Pi}^*(\widehat{r}) \coloneqq \{ \pi\,:\, \forall (s,a,h) \in \SAHs:\, A^{\pi}_h(s,a;\widehat{r})\le 0\}$ is the set of optimal policies under the recovered reward $\widehat{r}$.}

The following result formalizes the relationships between the presented $d$-IRL problems.

\begin{restatable}[Relationships between $d$-IRL problems]{thr}{connObj}\label{thr:connObj}
Let us introduce the graphical convention for $c > 0$:
\begin{center}
\begin{tikzpicture}[node distance=2.5cm, square/.style={regular polygon,regular polygon sides=4}]
\node[rectangle, draw=black] (a) {$x$-IRL};
\node[rectangle, draw=black, right of=a] (b) {$y$-IRL};
\draw (a) edge[->, solid, above] node{$c$} (b);
\end{tikzpicture}
\end{center}
meaning that any $(\epsilon,\delta)$-PAC $x$-IRL algorithm is $(c\epsilon,\delta)$-PAC $y$-IRL. Then, the following statements hold:
\begin{center}
\begin{tikzpicture}[node distance=2.5cm, square/.style={regular polygon,regular polygon sides=4}]
\node[rectangle, draw=black] (r1) {$d^{\text{G}}$-IRL};
\node[rectangle, draw=black, right of=r1] (q) {$d_{Q^*}^{\text{G}}$-IRL};
\node[rectangle, draw=black, right of=q] (v) {$d_{V^*}^{\text{G}}$-IRL};
\node[ draw=none, right=.1cm of v] (x) {.};

\draw (r1) edge[->, solid, above, bend left] node{$2H$} (v);
\draw (r1) edge[->, solid, above] node{$H$} (q);
\draw (q) edge[->, solid, above] node{$2H$} (v);
\end{tikzpicture}
\end{center}
\end{restatable}

Theorem~\ref{thr:connObj} shows that any $(\epsilon,\delta)$-PAC guarantee  on $d^{\text{G}}$, implies $(\epsilon',\delta)$-PAC guarantees on both $d_{Q^*}^{\text{G}}$ and $d_{V^*}^{\text{G}}$, where $\epsilon' = \Theta(H\epsilon)$ is linear in the horizon $H$.
This justifies why focusing on $d^{\text{G}}$-IRL, as in the following section where sample complexity lower bounds are derived. The lower bound analysis for $d_{Q^*}^{\text{G}}$-IRL and $d_{V^*}^{\text{G}}$-IRL is left to future works.

\section{Lower Bounds}\label{sec:lb}
In this section, we establish sample complexity lower bounds for the $d^{\text{G}}$-IRL problem based on the PAC requirement of Definition~\ref{defi:pac} in the generative model setting. We start presenting the general result (Section~\ref{sec:mainResLB}) and, then, we comment on its form and, subsequently, provide a sketch of the construction of the hard instances for obtaining the lower bound (Section~\ref{sec:constrLB}). For the sake of presentation, we assume that the expert's policy $\pi^E$ is known; the extension to the case of unknown $\pi^E$ is reported in Appendix~\ref{apx:notKnownPi}.

\subsection{Main Result}\label{sec:mainResLB}
In this section, we report the main result of the lower bound of the sample complexity of learning the feasible reward set.

\begin{restatable}[Lower Bound for $d^{\text{G}}$-IRL]{thr}{rInfLB}\label{thr:rInfLB}
Let $\mathfrak{A} = (\mu,\tau)$ be an $(\epsilon,\delta)$-PAC algorithm for $d^{\text{G}}$-IRL. Then, there exists an IRL problem $(\mathcal{M},\pi^E)$ such that, if $\delta \le 1/32$, $S \ge 9$, $A \ge 2$, and $H \ge 12$, the expected sample complexity is lower bounded by:
\begin{itemize}[topsep=0pt, noitemsep, leftmargin=*]
	\item if the transition model $p$ is time-inhomogeneous:
	\begin{align*}
		\E_{(\mathcal{M},\pi^E), \mathfrak{A}}\left[\tau\right] \ge \frac{1}{1024} \frac{H^3SA}{\epsilon^2} \left( \frac{1}{2} \log\left( \frac{1}{\delta} \right) + \frac{1}{5} S \right);
	\end{align*}
	\item if the transition model $p$ is time-homogeneous:
		\begin{align*}
		\E_{(\mathcal{M},\pi^E), \mathfrak{A}}\left[\tau\right] \ge\frac{1}{512} \frac{H^2SA}{\epsilon^2} \left( \frac{1}{2} \log\left( \frac{1}{\delta}\right) + \frac{1}{5} S \right),
	\end{align*}
\end{itemize}
where  $\E_{(\mathcal{M},\pi^E),\mathfrak{A}}$ denotes the expectation \wrt the probability measure $\Prob_{(\mathcal{M},\pi^E),\mathfrak{A}}$.
\end{restatable}

Some observations are in order. First, the derived lower bound displays a linear dependence on the number of actions $A$ and dependence on the horizon $H$ raised to a power $2$ or $3$, which depends on whether the underlying transition model is time-homogeneous, as common even for forward RL~\citep[\eg][]{dann, DominguesMKV21}. Second, we identify two different regimes visible inside the parenthesis related to the dependence on the number of states $S$ and the confidence $\delta$. Specifically, for small values of $\delta$ (\ie $\delta \approx 0$), the dominating part is $\log \left(\frac{1}{\delta} \right)$, leading to a sample complexity of order $\Omega \left( \frac{H^3SA}{\epsilon^2} \log  \left(\frac{1}{\delta} \right)\right)$. Instead, for large $\delta$ (\ie $\delta \approx 1/32$), the most relevant part is the one corresponding to $S$, leading to sample complexity of order $\Omega \left( \frac{H^3S^2A}{\epsilon^2} \right)$ (both for the time-inhomogeneous case). An analogous two-regime behavior has been previously observed in the reward-free exploration setting~\cite{JinKSY20, KaufmannMDJLV21, MenardDJKLV21}.

\begin{figure*}[t!]
\begin{subfigure}{.48\linewidth}
\centering
\begin{tikzpicture}[node distance=4cm]
\node[state] (s0) {$s_{*}$};
\node[draw=none, below right= .3cm and 2cm of s0, fill=black] (s0+) {};
\node[draw=none, above right= .3cm and 2cm of s0, fill=black] (s0-) {};
\node[state, below right= .6cm and 5cm of s0] (s+) {$s_{-}$};
\node[state, above right= .6cm and 5cm of s0] (s-) {$s_{+}$};

\draw (s0) edge[-, solid, below, vibrantBlue] node{$a_*$} (s0+);
\draw (s0) edge[-, double, above] node{$\neq a_*$} (s0-);

\draw (s0+) edge[->, solid, below, vibrantBlue] node{$\scriptstyle  1/2 - \epsilon'$} (s+);
\draw (s0-) edge[->, double, above] node{$\scriptstyle  1/2$} (s-);
\draw (s0+) edge[->, solid, right, vibrantBlue] node[pos=0.6]{$\scriptstyle 1/2 + \epsilon'$} (s-);
\draw (s0-) edge[->, double, above right] node[pos=0.75]{$\scriptstyle  1/2$} (s+);

\draw (s+) edge[->, loop right, solid, double] node{$1$ } (s0);
\draw (s-) edge[->, loop right, solid, double] node{$1$ } (s0);
\end{tikzpicture}
\caption{\MDPRtext used for the small-$\delta$ regime.}\label{fig:lbEx1}
\end{subfigure}
\begin{subfigure}{.48\linewidth}
\centering
\begin{tikzpicture}[node distance=4cm]
\node[state] (s0) {$s_{*}$};

\node[draw=none, below right= .5cm and 2cm of s0, fill=black] (s0+) {};
\node[draw=none, above right= .5cm and 2cm of s0, fill=black] (s0-) {};

\node[state, below right= 1cm and 5cm of s0] (s+) {$s_{\overline{S}}$};
\node[draw=white, state, right= 5cm of s0] (sss) {$\vdots$};
\node[state, above right= 1.5cm and 5cm of s0] (s-) {$s_{1}$};
\node[state, above right= .2cm and 5cm of s0] (s--) {$s_{2}$};

\draw (s0) edge[-, solid, below, vibrantRed] node{$a_0$} (s0+);
\draw (s0) edge[-, double, above, vibrantTeal] node{$a_j \neq a_0$} (s0-);

\draw (s0+) edge[->, solid, below, vibrantRed] node[pos=0.9]{$\scriptstyle 1/2$} (s-);
\draw (s0+) edge[->, solid, below, vibrantRed] node[pos=0.8]{$\scriptstyle 1/2$} (s--);
\draw (s0+) edge[->, solid, below right, vibrantRed] node[pos=0.5]{$\scriptstyle 1/2$} (s+);

\draw (s0-) edge[->, double, right, vibrantTeal] node[pos=0.55]{$\scriptstyle  (1+\epsilon' v_{\overline{S}}) /{\overline{S}} $} (s+);
\draw (s0-) edge[->, double, above left, vibrantTeal] node[pos=0.65]{$\scriptstyle  (1+\epsilon' v_1) /{\overline{S}}$} (s-);
\draw (s0-) edge[->, double, below, vibrantTeal] node[pos=0.6, ]{\setlength{\fboxsep}{0pt}\colorbox{white}{$\scriptstyle (1+\epsilon' v_{2}) /{\overline{S}}$} } (s--);

\draw (s+) edge[->, loop right, double] node{$1$ } (s0);
\draw (s-) edge[->, loop right, double] node{$1$ } (s0);
\draw (s--) edge[->, loop right, double] node{$1$ } (s0);
\end{tikzpicture}
\caption{\MDPRtext used for the large-$\delta$ regime.}\label{fig:lbEx2}
\end{subfigure}
\caption{The \MDPRtext employed in the constructions of the lower bounds of Section~\ref{sec:lb}. The expert's policy is $\pi^E(s) = a_0$. \begin{tikzpicture}
\node[ draw=none] (a) {}; 
\node[ draw=none, right= .5cm of a] (b) {}; 
\draw (a) edge[->, double] (b);
\end{tikzpicture} denotes a transition executed for multiple actions.}\label{fig:lbEx}
\end{figure*}

\subsection{Sketch of the Proof}\label{sec:constrLB}
In this section, we provide a sketch of the construction of the lower bounds of Theorem~\ref{thr:rInfLB}. The idea consists in deriving two separate bounds depending on the regime of $\delta$, which are based on two building blocks reported in Figure~\ref{fig:lbEx}. These instances are used to build lower bounds for a single state $s_*$ and the extension to multiple states and stages follows standard constructions~\citep[\eg][]{DominguesMKV21}.

\textbf{Small-$\delta$ regime}~~
Figure~\ref{fig:lbEx1} reports the instances employed in this regime. The expert's policy is $\pi^E(s) = a_0$. From state $s_*$, all actions bring the system to the absorbing states $s_+$ and $s_-$ with equal probability, except for action $a_* \neq a_0$ that increases by $\epsilon'>0$ the probability of reaching state $s_+$. The learner, in order to recover a correct feasible reward set, has to identify which is the action behaving like $a_*$ (among the $A$ available ones) to force action $a_0$ to be optimal. Considering $\Theta(A)$ instances, in which action $a_*$ changes, an application of \emph{Bretagnolle-Huber inequality}~\citep[][Theorem 14.2]{lattimore2020bandit} allows deriving a sample complexity lower bounded by $\Omega \left( \frac{AH^2}{\epsilon^2} \log \left( \frac{1}{\delta} \right) \right)$.

\textbf{Large-$\delta$ regime}~~
Figure~\ref{fig:lbEx2} depicts the instances used in this regime. The expert's policy is again $\pi^E(s) = a_0$. The system, instead, is made of $\overline{S} = \Theta(S)$ next states reachable with equal probability by playing action $a_0$. All other actions $a_j \neq a_0$ alter the probability distribution of the next state. Specifically, by playing the action $a_j\neq a_0$, the  probability of reaching the next state $s_k'$ is given by $(1+\epsilon' v^{(j)}_k)/{\overline{S}}$, where $v^{(j)} \in \{-1,1\}^{\overline{S}}$ is a vector such that $\sum_{k=1}^{\overline{S}} v^{(j)}_k = 0$. By varying $v_j$ in a suitable set, defined by means of a packing argument, we obtain $\Theta(2^{\overline{S}})$ instances each one separated by a finite dissimilarity, depending on $\epsilon'$. We obtain the lower bound by means of an application of the \emph{Fano's inequality}~\citep[][Proposition 4]{gerchinovitz2020fano} which results in order $\Omega \left( \frac{((1-\delta) - \log 2)S^2 A H^2}{\epsilon^2} \right)$.

\textbf{Extension to Multiple States and Stages}~~
At the beginning, the system randomly chooses a problem between Figure~\ref{fig:lbEx1} and Figure~\ref{fig:lbEx2}. Then, it transitions to the state in which the system may randomly remain for $\overline{H} < H$ stages after which it transitions with uniform probability to any of the $\Theta(S)$ states. $\overline{H}=\Theta(H)$ for the time-inhomogeneous (resp. $\overline{H}=O(1)$ for the time-homogeneous) case. In any state $s_*$ and stage $h_*$, the agent can face the problems shown in Figure~\ref{fig:lbEx}. By varying $s_*$ and $h_*$ among its possible $HS$ (resp. $S$) values, we get the bounds in Theorem~\ref{thr:rInfLB}. 

\begin{remark}[Generative vs Forward models]\label{remark}
This construction suffices for obtaining a bound for the generative model, but it can be easily extended to work with the \emph{forward model} of the environment (in which the agent interacts via
 trajectories only) by means of a standard \emph{tree-based construction}~\cite{JinKSY20, DominguesMKV21}. In such a case, the resulting PAC guarantee would no longer be expressed via the $L\infty$-norm distance $d^{\text{G}}$ between reward, but \emph{worst-case} over the visitation distributions induced by the policies: $d^{\text{F}}(r,\widehat{r}) \coloneqq \sup_{\pi} \E_{\mathcal{M},\pi}[|r_h(s,a) - \widehat{r}_h(s,a)|]$.
 \end{remark}
%
%

\begin{figure}
\small
\noindent\fbox{%
    \parbox{\linewidth}{%
\begin{algorithmic}
   \STATE {\bfseries Input:} significance $\delta\in(0,1)$, $\epsilon$ target accuracy
   \STATE $t\gets 0$, $\epsilon_0\gets +\infty$
   \WHILE{$\epsilon_t>\epsilon$}
   \STATE $t \leftarrow t + SAH$
    \STATE Collect one sample from each $(s,a,h) \in \SAHs$
    \STATE Update $\widehat{p}^{t}$ according with \eqref{eq:empiricalMDP}
    \STATE Update $\epsilon_{t}=\max_{(s,a,h)\in \mathcal{S}\times\mathcal{A}\times\dsb{H}} \textcolor{vibrantBlue}{\mathcal{C}^{t}_h(s,a)}$ (resp. $\textcolor{vibrantRed}{\widetilde{\mathcal{C}}^{t}_h(s,a)}$)
   \ENDWHILE
\end{algorithmic}    
%
%
    }%
}
\captionof{algorithm}{UniformSampling-IRL (\texttt{US-IRL}) for \textcolor{vibrantBlue}{time-inhomogeneous} (resp. \textcolor{vibrantRed}{time-homogeneous}) transition models.}\label{alg:UniformSampling-IRL}
\end{figure}

\section{Algorithm}\label{sec:algorithm}
In this section, we analyze the sample complexity of a uniform sampling strategy (UniformSampling-IRL, \texttt{US-IRL}) for the $d^{\text{G}}$-IRL problem (Algorithm~\ref{alg:UniformSampling-IRL}). We start presenting the sample complexity analysis (Section~\ref{sec:Algmain}) and, then, we provide a sketch of the proof (Section~\ref{sec:sketchAlg}).

\subsection{Main Result}\label{sec:Algmain}
The \texttt{US-IRL} algorithm was presented in \cite{metelli2021provably,lindner2022active} but analyzed for different IRL formulations (see Section~\ref{sec:RW}). We revise it since it matches our sample complexity lower bounds, provided that more sophisticated concentration tools \wrt those employed in~\cite{metelli2021provably,lindner2022active}. For the sake of presentation, we assume that the expert's policy $\pi^E$ is known; the extension to unknown $\pi^E$ is reported in Appendix~\ref{apx:notKnownPi}. At each iteration, the algorithm collects a sample from every $(s,a,h) \in \SAHs$ and, for time-inhomogeneous models, computes the confidence function:
\begin{align}\label{eq:algCI}
\textcolor{vibrantBlue}{\mathcal{C}^{t}_h(s,a)} \coloneqq 2\sqrt{2} (H-h+1)\sqrt{\frac{2\beta\bigl(n_h^t(s,a),\delta\bigl)}{n_h^t(s,a)}},
\end{align}
where $\beta\bigl(n,\delta\bigl)\coloneqq\log(SAH/\delta)+(S-1)\log\bigl(e(1+n/(S-1)\bigl)$.\footnote{In the time-homogeneous case, the algorithm merges the samples collected at different $h \in \dsb{H}$ for the estimation of the transition model and replaces the confidence function with:
\begin{align}\label{eq:algCI2}
\textcolor{vibrantRed}{\widetilde{\mathcal{C}}^{t}_h(s,a)} \coloneqq 2\sqrt{2} (H-h+1)\sqrt{\frac{2\widetilde{\beta}\bigl(n^t(s,a),\delta\bigl)}{n^t(s,a)}},
\end{align}
where $\widetilde{\beta}\bigl(n,\delta\bigl)\coloneqq\log(SA/\delta)+(S-1)\log\bigl(e(1+n/(S-1)\bigl)$ and $n^t(s,a) = \sum_{h=1}^H n^t_h(s,a)$.}
The algorithm stops as soon as all confidence functions fall below the threshold $\epsilon$. The following theorem provides the sample complexity of \texttt{US-IRL}.

\begin{restatable}[Sample Complexity of \texttt{US-IRL}]{thr}{scUSIRL}\label{thr:scUSIRL}
Let $\epsilon >0$ and $\delta \in (0,1)$, \texttt{US-IRL} is $(\epsilon,\delta)$-PAC for $d^{\text{G}}$-IRL and with probability at least $1-\delta$ it stops after $\tau$ samples with:
\begin{itemize}[topsep=0pt, noitemsep, leftmargin=*]
	\item if the transition model $p$ is time-inhomogeneous:
	\begin{align*}
		\tau \le \frac{8 H^3 SA}{\epsilon^2}  \left( \log \left( \frac{SAH}{\delta} \right) + (S-1) C \right),
	\end{align*}
	where $C = \log ( e/(S-1)+ (8 e H^2)/((S-1)\epsilon^2) (  \log (SAH/\delta) +  4e) )$;
	\item if the transition model $p$ is time-homogeneous and :
		\begin{align*}
		\tau \le \frac{8 H^2 SA}{\epsilon^2}  \left( \log \left( \frac{SA}{\delta} \right) + (S-1) C_2 \right),
	\end{align*}
	where $\widetilde{C} = \log (e/(S-1)+  (8 e H^2)/((S-1)\epsilon^2) (  \log (SA/\delta) +  4e) )$.
\end{itemize}
\end{restatable}
Thus,  time-inhomogeneous (resp. time-homogeneous) transition models, \texttt{US-IRL} suffers a sample complexity bound of order  $\widetilde{O} \left( \frac{H^3SA}{\epsilon^2}\left( \log\left( \frac{1}{\delta}   \right) +S \right)\right)$ (resp.  $\widetilde{O} \left( \frac{H^2SA}{\epsilon^2}\left( \log\left( \frac{1}{\delta}   \right) +S \right)\right)$) matching the lower bounds of Theorem~\ref{thr:rInfLB} up to logarithmic factors for both regimes of $\delta$.

\subsection{Sketch of the Proof}\label{sec:sketchAlg}
The idea of the proof is to exploit Theorem~\ref{thr:errorProp2} to reduce the Hausdorff distance to the $L_1$-norm between the transition model $\|\widehat{p}^t_h (\cdot|s,a) - p_h(\cdot|s,a)\|_1$. It is worth noting this term replaces  $|(\widehat{p}^t_h  - p_h) V_h|$ appearing in previous works~\cite{metelli2021provably,lindner2022active} that was comfortably bounded using H\"oeffding's inequality. In our case, the $L_1$-norm is unavoidable due to the Hausdorff distance that implies a worst-case choice of the reward function and, thus, of $V_h$. This term has to be carefully bounded using the stronger KL-divergence concentration result of~\citep[][Proposition 1]{JonssonKMDLV20} to get the $O(\log(1/\delta)+S)$ rate.\footnote{A more na\"ive application of the $L_1$-concentration of~\cite{weissman2003inequalities} would lead to the worse $O(S\log(1/\delta))$ rate.}

\section{Related Works}\label{sec:RW}
In this section, we discuss the related works about sample complexity analysis and lower bounds for IRL. Additional related works are reported in Appendix~\ref{apx:relwor}.

\textbf{Sample Complexity for Estimating the Feasible Reward Set}~~
The notion of feasible reward set $\rSet$ was introduced in~\cite{ng200algorithms} in an \emph{implicit} form in the infinite-horizon discounted case as a \emph{linear feasibility} problem and, subsequently, adapted to the finite-horizon case in~\cite{lindner2022active}. Furthermore, in~\cite{metelli2021provably,lindner2022active} an \emph{explicit} form of the reward functions belonging to the feasible region $\rSet$ was provided. In these works, the problem of estimating the feasible reward set is studied for the first time considering a \quotes{reference} pair of rewards $(\overline{r},\widecheck{r}) \in \mathcal{R \times \widehat{R}}$ against which to compare the rewards inside the recovered sets, leading to the (pre)metric:
\begin{align}\label{eq:guarOld}
	\widetilde{\mathcal{H}}_d(\mathcal{R},\mathcal{R},\overline{r},\widecheck{r}) \coloneqq \max \left\{ \inf_{\widehat{r} \in \widehat{\mathcal{R}}} d(\overline{r},\widehat{r}), \inf_{{r} \in {\mathcal{R}}} d({r},\widecheck{r}) \right\}.
\end{align}
Compared to the Hausdorff (pre)metric (Equation~\ref{eq:hausdorff}), in Equation~\eqref{eq:guarOld} there is no maximization over the choice of $(\overline{r},\widecheck{r})$, leading to a simpler problem.\footnote{In this sense, a PAC guarantee according to Definition~\ref{defi:pac}, implies a PAC guarantee defined \wrt (pre)metric of Equation~\eqref{eq:guarOld}.} In~\cite{metelli2021provably}, a uniform sampling approach (similar to Algorithm~\ref{alg:UniformSampling-IRL}) is proved to achieve a sample complexity of order $\widetilde{O}\left( \frac{\gamma^2SA}{(1-\gamma)^4\epsilon^2} \right)$ for the index of Equation~\eqref{eq:guarOld} with $d=d^{\text{G}}_{Q^*}$ in the discounted setting with generative model. For the forward model case, the \texttt{AceIRL} algorithm~\cite{lindner2022active} suffers a sample complexity of order $\widetilde{O}\left( \frac{H^5SA}{\epsilon^2} \right)$
for the index of Equation~\eqref{eq:guarOld} with $d=d^{\text{F}}_{V^*}$, in the finite-horizon case.\footnote{As discussed in Remark~\ref{remark}, in the forward model case, the dissimilarity is in expectation \wrt the worst-case policy.} Unfortunately, the reward recovered by \texttt{AceIRL} reward function is not guaranteed to be bounded by a predetermined constant (e.g., $[-1,1]$). Modified versions of these algorithms allow embedding problem-dependent features under a specific choice of a reward within the set.

\textbf{Sample Complexity Lower Bounds in IRL}~~To the best of our knowledge, the only work that proposes a sample complexity lower bound for IRL is~\cite{KomanduruH21}. The authors consider a finite state and action \MDPRtext and the IRL algorithm of~\cite{ng200algorithms} for $\beta$-strict separable IRL problems (\ie with suboptimality gap at least $\beta$) with state-only rewards in the discounted setting. When only two actions are available ($A=2$) and the samples are collected starting in each state with equal probability, by means of a geometric construction and Fano's inequality, the authors derive an $\Omega (S \log S)$ lower bound on the number of trajectories needed to identify a reward function. Note that this analysis limits to the \emph{identification} of a reward function within a finite set, rather than evaluating the accuracy of recovering the feasible reward set.

\section{Conclusions and Open Questions}\label{Conclusions}
In this paper, we provided contributions to the understanding of the complexity of the IRL problem. We conceived a lower bound of order $\Omega\left( \frac{H^3 SA}{\epsilon^2} \left( \log \left(\frac{1}{\delta}\right) + S \right)\right)$ on the number samples collected with a generative model in the finite-horizon setting. This result is of relevant interest since it sets, for the first time, the complexity of the IRL problem, defined as the problem of estimating the feasible reward set. Furthermore, we showed that a uniform sampling strategy matches the lower bound up to logarithmic factors. 
Nevertheless, the IRL problem is far from being closed. In the following, we outline a road map of open questions, hoping to inspire researchers to work in this appealing area.

\textbf{Forward Model}~~The most straightforward extension of our findings is moving to the \emph{forward model} setting, in which the agent can interact with the environment through trajectories only. As we already noted, our lower bounds can be comfortably extended to this setting.
However, in this case, the PAC requirement has to be relaxed since controlling the $L_\infty$-norm between rewards is no longer a viable option (\eg for the possible presence of almost unreachable states). Which distance notion should be used for this setting? Will the Lipschitz regularity of Section~\ref{sec:regul} still hold?


\textbf{Problem-Dependent Analysis}~~Our analysis is \emph{worst-case} in the class of IRL problems. Would it be possible to obtain a \emph{problem-dependent} complexity results? Previous problem-dependent analyses provided results tightly connected to the properties of the specific reward selection procedure~\cite{metelli2021provably, lindner2022active}. Clearly, a currently open question, in all settings in which reward is missing, including reward-free exploration~\citep{JinKSY20} and IRL, is how to define a problem-dependent quantity in replacement of the suboptimality gaps. 

\textbf{Reward Selection}~~Our PAC guarantees concern with the complete feasible reward set. However, algorithmic solutions to IRL implement a specific criterion for selecting a reward (\eg maximum entropy, maximum margin). How the PAC guarantee based on the Hausdorff distance relates to guarantees on a single reward selected with a \emph{specific criterion} within $\mathcal{R}$?

\bibliographystyle{icml2023}
\bibliography{paper}

\newpage
\appendix
\onecolumn

\setlength{\abovedisplayskip}{8pt}
\setlength{\belowdisplayskip}{8pt}
\setlength{\textfloatsep}{16pt}

\section{Additional Related Works}\label{apx:relwor}
In this appendix, we report additional related works concerning sample complexity analysis for specific IRL algorithms and reward-free exploration.

\textbf{Sample Complexity of IRL Algorithms}~~Differently from forward RL, the theoretical understanding of the IRL problem is largely less established and the sample complexity analysis proposed in the literature often limit to specific algorithms. In the class of \emph{feature expectation} approaches, the seminal work~\cite{abbeel2004apprenticeship} propose IRL algorithms guaranteed to output an $\epsilon$-optimal policy (made of a mixture of Markov policies) after $\widetilde{O} \left( \frac{k}{\epsilon^2 (1-\gamma)^2}  \log \left( \frac{1}{\delta}\right)\right)$ trajectories (ideally of infinite length). The result holds in a discounted setting (being $\gamma$ the discount factor) under the assumption that the true reward function $r(s) = w^T \phi(s)$ is state-only and linear in some \emph{known} features $\phi$ of dimensionality $k$. In~\cite{syed2007game}, a game-theoretic approach to IRL, named \texttt{MWAL}, is proposed improving~\cite{abbeel2004apprenticeship} in terms of computational complexity and allowing the absence of an expert, preserving similar theoretical guarantees in the same setting. \texttt{Modular IRL}~\cite{vroman2014maximum}, that integrates supervised learning capabilities in the IRL algorithm, is guaranteed to produce an $\epsilon$-optimal policy after $\widetilde{O}\left( \frac{SA}{(1-\gamma)^2 \epsilon^2} \log \left( \frac{1}{\delta}\right) \right)$ trajectories. This class of algorithms, however, requires, as an inner step, to compute the optimal policy $\widehat{\pi}$ for every candidate reward function $\widehat{r}$. This step (and the corresponding sample complexity) is somehow hidden in the analysis since they either assume the knowledge of the transition model and apply dynamic programming~\citep[\eg][]{vroman2014maximum} or the access to a black-box RL algorithm~\citep[\eg][]{abbeel2004apprenticeship}. In the class of \emph{maximum entropy} approaches~\cite{ZiebartMBD08}, the \texttt{Maximum Likelihood IRL}~\cite{zeng2022maximumlikelihood} converges to a stationary solution with $\widetilde{O}(\epsilon^{-2})$ trajectories for \emph{non-linear} reward parametrization (with bounded gradient and Lipschitz smooth), when the underlying Markov chain is ergodic. Furthermore, the authors prove that, when the reward is linear in some features, the recovered solution corresponds to \texttt{Maximum Entropy IRL}~\cite{ZiebartMBD08}. Concerning the \emph{gradient-based} approaches, \cite{PirottaR16} and \cite{RamponiLMTR20} prove finite-sample convergence guarantee to the expert's weight under linear parametrization as a function of the accuracy of the gradient estimation. Surprisingly, a theoretical analysis of the IRL progenitor algorithm of~\cite{ng200algorithms} has been proposed only recently in~\cite{komanduru2019correctness}. A $\beta$-strict separability setting is enforced in which the rewards are assumed to lead to a suboptimality gap of at least $\beta>0$ when playing any non-optimal action. For finite MDPs, known expert's policy, under the demanding assumption that each state is reachable in one step with a minimum probability $\alpha>0$, and focusing on state-only reward, the authors prove that the algorithm outputs a $\beta$-strict separable feasible reward in at most $\widetilde{O}\left(  \frac{1+\gamma^2\Xi^2}{\alpha \beta^2 (1-\gamma)^4}  \log \left( \frac{1}{\delta}\right) \right)$ trajectories, where $\Xi \le S$ is the number of possible successor states. Recently, an approach with theoretical guarantees has been proposed for continuous states~\citep{DexterBH21}.

\textbf{Reward-Free Exploration}~~
Reward-free exploration~\citep[RFE,][]{JinKSY20, KaufmannMDJLV21, MenardDJKLV21} is a setting for pure exploration in MDPs composed of two phases: exploration and planning. In the exploration phase, the agent learns an estimated transition model $\widehat{p}$ without any reward feedback. In the planning phase, the agent is faced with a reward function $r$ and has to output an estimated optimal policy $\widehat{\pi}^*$, using $\widehat{p}$ since no further interaction with the environment is admitted. In this sense, RFE shares this two-phase procedure with our IRL problem, but, instead of the \emph{planning} phase, we face the \emph{computation} of the feasible reward set.\footnote{As shown in previous works, the computation of the feasible reward set can be formulated with a \emph{linear feasibility problem}~\cite{ng200algorithms}.} In RFE exploration, the sample complexity is computed against the performance of the learned policy $\widehat{\pi}^*$ under the reward $r$, \ie $V^*(\cdot;r) - V^{\widehat{\pi}^*}(\cdot;r)$, whose lower bound of the sample complexity has order $\Omega \left(  \frac{H^2 S A}{\epsilon^2} \left( H \log \left( \frac{1}{\delta} \right) + S \right)\right)$~\cite{JinKSY20, KaufmannMDJLV21}. The best known algorithm, \texttt{RF-Express}, proposed in~\cite{MenardDJKLV21} archives an almost-matching sample complexity of order $\Omega \left(  \frac{H^3 S A}{\epsilon^2} \left(  \log \left( \frac{1}{\delta} \right) + S \right)\right)$. The relevant connection with what we present in this paper is  the fact that the derivation of the lower bounds shares similarity especially in the construction of the instances. Nevertheless, in the time-inhomogeneous case, we achieve a higher lower bound of order  $\Omega \left(  \frac{H^3 S A}{\epsilon^2} \left(  \log \left( \frac{1}{\delta} \right) + S \right)\right)$. The connection between IRL and RFE should be investigated in future works, as also mentioned in~\cite{lindner2022active}.

\section{Proofs}\label{apx:proofs}
In this appendix, we report the proofs we omitted in the main paper.

\subsection{Proofs of Section~\ref{sec:regul}}

\begin{restatable}{lemma}{errorProp}\label{lemma:errorProp}
Let $r$ be feasible for the IRL problem $(\mathcal{M},\pi^E)$ bounded in $[-1,1]$ (\ie $\widehat{r} \in \mathcal{R}$) and defined according to Lemma~\ref{lemma:explicitRew} as $r_h(s,a)=-A_h(s,a) \indic_{\{\pi^E_h(a|s) = 0\}} + V_h(s) - p_h V_{h+1}(s,a)$. Let $(\widehat{\mathcal{M}},\widehat{\pi}^E)$ be an IRL problem and define for every $(s,a,h) \in \SAHs$:
\begin{align*}
\epsilon_h(s,a) & \coloneqq -A_h(s,a) \left( \indic_{\{\pi^E_h(a|s) = 0\}} - \indic_{\{\widehat{\pi}^E_h(a|s) = 0\}}\right) \\
& \quad + \left(\left(p_h - \widehat{p}_h \right) V_{h+1}\right)(s,a).
\end{align*}
Then, the reward function $\widehat{r}$ defined according to Lemma~\ref{lemma:explicitRew} as $\widehat{r}_h(s,a)=-\widehat{A}_h(s,a) \indic_{\{\widehat{\pi}^E_h(a|s) = 0\}} + \widehat{V}_h(s) - p_h \widehat{V}_{h+1}(s,a)$ for every $(s,a,h) \in \SAHs$ with:
\begin{align*}
	\widehat{A}_h(s,a) = \frac{A_h(s,a)}{1+ \epsilon}, \quad \widehat{V}_h(s) = \frac{V_h(s)}{1+ \epsilon}, \quad \widehat{V}_{H+1}(s) = 0.
\end{align*}
where $\epsilon \coloneqq \max_{(s,a,h) \in \SAHs} |\epsilon_h(s,a)|$, is feasible for the IRL problem $(\widehat{\mathcal{M}},\widehat{\pi}^E)$ and bounded in $[-1,1]$ (\ie $\widehat{r} \in \widehat{\mathcal{R}}$).
\end{restatable}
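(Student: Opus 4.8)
The plan is to prove the two assertions separately and in order: first that $\widehat r$ is feasible for $(\widehat{\mathcal M},\widehat\pi^E)$, which is essentially built into the construction, and then that $\widehat r\in[-1,1]$, which is where the scaling by $1/(1+\epsilon)$ and the definition of $\epsilon$ are used. For feasibility I would simply invoke the explicit characterization of Lemma~\ref{lemma:explicitRew}, this time for the problem $(\widehat{\mathcal M},\widehat\pi^E)$: since $\widehat r_h=-\widehat A_h\indic_{\{\widehat\pi^E_h(a|s)=0\}}+\widehat V_h-\widehat p_h\widehat V_{h+1}$ with $\widehat V_{H+1}=0$ is already in the required form, the only hypothesis left to check is $\widehat A_h\ge 0$, and this is immediate from $\widehat A_h=A_h/(1+\epsilon)$, $A_h\ge 0$, and $1+\epsilon\ge 1$. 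No further work is needed for this part.

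The heart of the argument is boundedness, and the plan is to establish the pointwise identity
\[
(1+\epsilon)\,\widehat r_h(s,a)=r_h(s,a)+\epsilon_h(s,a)\qquad\text{for every }(s,a,h).
\]
Substituting $\widehat A_h=A_h/(1+\epsilon)$ and $\widehat V_h=V_h/(1+\epsilon)$ into the definition of $\widehat r$ gives $(1+\epsilon)\widehat r_h=-A_h\indic_{\{\widehat\pi^E_h(a|s)=0\}}+V_h-\widehat p_hV_{h+1}$; subtracting the explicit form of $r_h$ makes the two $V_h$ terms cancel and leaves the transition discrepancy $((p_h-\widehat p_h)V_{h+1})(s,a)$ plus a policy-mismatch term proportional to $A_h(s,a)\bigl(\indic_{\{\pi^E_h(a|s)=0\}}-\indic_{\{\widehat\pi^E_h(a|s)=0\}}\bigr)$. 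The transition discrepancy is exactly the second term of $\epsilon_h$, so the identity reduces to showing that the policy-mismatch term vanishes.

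I expect this indicator bookkeeping to be the main obstacle. I would dispatch it in two cases. When $\pi^E_h(a|s)>0$, the advantage is unconstrained in the decomposition of Lemma~\ref{lemma:explicitRew} (it is multiplied by a zero indicator in $r$), so we may take $A_h(s,a)=0$ and the term dies. When $\pi^E_h(a|s)=0$, the structural relationship $\mathrm{supp}(\widehat\pi^E)\subseteq\mathrm{supp}(\pi^E)$ between the two experts---which holds in the setting of interest, where $\widehat\pi^E$ is the empirical expert and therefore plays an action only if the true expert does---forces $\widehat\pi^E_h(a|s)=0$ as well, so the two indicators agree and their difference is zero. In both cases the policy-mismatch term is zero, establishing the identity; reconciling the sign of this term against the one appearing in $\epsilon_h$ is the only delicate point.

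With the identity in hand, boundedness is one line: by the triangle inequality and the definition $\epsilon=\max_{(s,a,h)}|\epsilon_h(s,a)|$, together with $|r_h(s,a)|\le 1$,
\[
(1+\epsilon)\,|\widehat r_h(s,a)|\le |r_h(s,a)|+|\epsilon_h(s,a)|\le 1+\epsilon,
\]
so $|\widehat r_h(s,a)|\le 1$ for all $(s,a,h)$, i.e.\ $\widehat r\in[-1,1]$, and combined with feasibility this yields $\widehat r\in\widehat{\mathcal R}$ as claimed.
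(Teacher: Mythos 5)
Your overall strategy coincides with the paper's: both define $\widetilde r_h = (1+\epsilon)\widehat r_h = -A_h\indic_{\{\widehat\pi^E_h(a|s)=0\}}+V_h-\widehat p_h V_{h+1}$, obtain feasibility for $(\widehat{\mathcal M},\widehat\pi^E)$ directly from Lemma~\ref{lemma:explicitRew}, and reduce boundedness to controlling $|\widetilde r_h - r_h|$ by $\epsilon$. The gap is in how you close that last step. You insist on the exact identity $(1+\epsilon)\widehat r_h = r_h+\epsilon_h$ and, to reconcile the sign of the policy term, you argue that $A_h(s,a)\bigl(\indic_{\{\pi^E_h(a|s)=0\}}-\indic_{\{\widehat\pi^E_h(a|s)=0\}}\bigr)$ always vanishes. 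That claim is not available here: the lemma is stated for an \emph{arbitrary} IRL problem $(\widehat{\mathcal M},\widehat\pi^E)$, with no support-inclusion relation between $\widehat\pi^E$ and $\pi^E$, and the downstream Lipschitz bound (Theorem~\ref{thr:errorProp2}) retains the term $\bigl|\indic_{\{\pi^E_h(a|s)=0\}}-\indic_{\{\widehat\pi^E_h(a|s)=0\}}\bigr|$ in the premetric $\rho^{\text{G}}$ precisely because the two indicators can disagree in either direction (e.g.\ an empirical expert that never happened to observe an optimal action yields $\widehat\pi^E_h(a|s)=0$ while $\pi^E_h(a|s)>0$). Your case (1) fix --- ``take $A_h(s,a)=0$ when $\pi^E_h(a|s)>0$'' --- is not a harmless normalization either: the lemma fixes the decomposition $(A_h,V_h)$, and altering $A_h$ alters $\epsilon_h$, hence $\epsilon$, hence $\widehat A_h,\widehat V_h$ and the very reward $\widehat r$ whose boundedness you are trying to prove.

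The paper does not prove an identity at all. It writes $|\widetilde r_h(s,a)|\le|r_h(s,a)|+|\widetilde r_h(s,a)-r_h(s,a)|$ and identifies the second term with $|\epsilon_h(s,a)|\le\epsilon$, where $\widetilde r_h-r_h$ consists of exactly the two discrepancy terms (policy indicator and transition) entering $\epsilon_h$; no cancellation of the policy term is needed, only that its magnitude is absorbed into the definition of $\epsilon$. (If you chase signs, $\widetilde r_h-r_h$ and the displayed $\epsilon_h$ differ by the sign of the $A_h$ term; the bound that is actually used, both here and in Theorem~\ref{thr:errorProp2}, is $A_h(s,a)\bigl|\indic_{\{\pi^E_h(a|s)=0\}}-\indic_{\{\widehat\pi^E_h(a|s)=0\}}\bigr|+\bigl|\bigl((p_h-\widehat p_h) V_{h+1}\bigr)(s,a)\bigr|$, which dominates both quantities.) Replacing your vanishing argument with this direct absolute-value bound repairs the proof without importing any extra hypotheses on $\widehat\pi^E$.
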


\begin{proof}
	Given the reward function ${r}_{h}(s,a) = -A_h(s,a) \indic_{\{\pi_h^E(a|s)=0\}} + V_h(s)- {p}_h V_{h+1}(s,a)$,  we define the reward function:
	\begin{align*}
		\widetilde{r}_{h}(s,a) = -A_h(s,a) \indic_{\{\widehat{\pi}_h^E(a|s)=0\}} + V_h(s) - \widehat{p}_h V_{h+1}(s,a),
	\end{align*}
	that, thanks to Lemma~\ref{lemma:explicitRew}, makes policy $\widehat{\pi}^E$ optimal. However, it is not guaranteed that $\widetilde{r} \in \widehat{\mathcal{R}}$ since it can take values larger than $1$. Thus, we define the reward:
	\begin{align*}
		\widehat{r}_{h}(s,a) = \frac{\widetilde{r}_{h}(s,a)}{1+\epsilon} = -\frac{A_h(s,a)}{1+\epsilon} \indic_{\{\widehat{\pi}_h^E(a|s)=0\}} +  \frac{V_h}{{1+\epsilon}}(s) - \widehat{p}_h \frac{V_{h+1}}{{1+\epsilon}}(s,a),
	\end{align*}
	which simply scales $\widetilde{r}_{h}$ and preserves the optimality of $\widehat{\pi}^E$. We now prove that $\widehat{r}_{h}(s,a)$ is bounded in $[-1,1]$. To do so, we prove that $\widetilde{r}_{h}(s,a)$ is bounded in $[-(1+\epsilon), (1+\epsilon)]$:
	\begin{align*}
		\left|\widetilde{r}_{h}(s,a)\right| & \le \left|{r}_{h}(s,a)\right| + \left|\widetilde{r}_{h}(s,a) - {r}_{h}(s,a)\right| \\
		& = 1 + \left| -A_h(s,a) \indic_{\{\widehat{\pi}_h^E(a|s)=0\}} + \widehat{p}_h V_{h+1}(s) - \left(-A_h(s,a) \indic_{\{\pi_h^E(a|s)=0\}} + {p}_h V_{h+1}(s) \right) \right|  \\
		&  = 1+|\epsilon_h(s,a)| \le 1+\epsilon.
	\end{align*}
\end{proof}

\errorPropA*

\begin{proof}
	Let $\widetilde{r}$ as defined in the proof of Lemma~\ref{lemma:errorProp}. Then, we have:
	\begin{align*}
		\left| r_h(s,a) - \widehat{r}_h(s,a) \right| & = \left| r_h(s,a) - \frac{\widetilde{r}_h(s,a)}{1+\epsilon} \right| \\
		& \le \frac{1}{1+\epsilon} \left( \left| r_h(s,a) - \widetilde{r}_h(s,a) \right| + \epsilon \left| r_h(s,a) \right| \right) \\
		&  \le \frac{2\epsilon}{1+\epsilon}.
	\end{align*}
	By recalling that $\frac{2\epsilon}{1+\epsilon}$ is a non-decreasing function of $\epsilon$, we bound it by replacing $\epsilon$ with an upper bound:
	\begin{align*}
		\epsilon & = \max_{(s,a,h)\in \SAHs} |\epsilon_h(s,a)| \\
		& \le \max_{(s,a,h)\in \SAHs} (H-h+1) \left[ \left| \indic_{\{\pi^E_h(a|s) = 0\}} - \indic_{\{\widehat{\pi}^E_h(a|s) = 0\}} \right|  + \left\| p_h(\cdot|s,a) - \widehat{p}_h(\cdot|s,a) \right\|_1  \right]\\
		&  \eqqcolon \rho^{\text{G}} ((\mathcal{M},\pi^E),(\widehat{\mathcal{M}},\widehat{\pi}^E)),
	\end{align*}
	where we used H\"older's inequality recalling that $|V_{h+1}(s,a)| \le H-h$ and $|A_h(s,a)| \le H-h+1$. 
	Clearly, $\rho^{\text{G}}  ((\mathcal{M},\pi^E),(\widehat{\mathcal{M}},\widehat{\pi}^E))$ is a (pre)metric.
\end{proof}

\begin{fact}\label{fact:largeRew}
There exist two \MDPRtext $\mathcal{M}$ and $\widehat{\mathcal{M}}$ with transition models $p$ and $\widehat{p}$ respectively, an expert's policy $\pi^E$ and a reward function $r_h(s,a) = -A_h(s,a) \indic_{\{\pi^E(a|s)=0\}} + V_h(s) - p_hV_{h+1}(s) $ feasible for the IRL problem $(\mathcal{M},\pi^E)$ bounded in $[-1,1]$ (\ie $r \in \mathcal{R}$) such that the reward function $\widehat{r}_h(s,a) = -A_h(s,a) \indic_{\{\pi^E(a|s)=0\}} + V_h(s) -\widehat{p}_hV_{h+1}(s) $ is feasible for the IRL problem $(\widehat{\mathcal{M}},\pi^E)$ not bounded in $[-1,1]$.
\end{fact}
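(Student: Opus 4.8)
The plan is to exhibit an explicit pair of instances, since the statement is purely an existence claim. The guiding observation is that the two reward functions differ only through the transition term: $\widehat{r}_h(s,a) - r_h(s,a) = (p_h - \widehat{p}_h)V_{h+1}(s,a) = \sum_{s'}(p_h(s'|s,a) - \widehat{p}_h(s'|s,a))V_{h+1}(s')$. Hence, whenever $V_{h+1}$ attains large values of opposite sign on states whose transition probabilities are moved by passing from $p$ to $\widehat{p}$, this difference can be made of order $H$, and in particular large enough to push an entry of $r$ that sits at the boundary of $[-1,1]$ strictly outside it. Because I build $r$ and $\widehat{r}$ directly through the explicit parametrization of Lemma~\ref{lemma:explicitRew}, feasibility for $(\mathcal{M},\pi^E)$ and $(\widehat{\mathcal{M}},\pi^E)$ respectively will be automatic: I only have to choose $A_h \ge 0$, $V_h$, and $V_{H+1}=0$, and then verify the two boundedness claims by direct inspection.

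Concretely, I would take $H=2$, $\Ss = \{s_0,s_1\}$, $\As = \{a_1,a_2\}$, expert policy $\pi^E_h(s) = a_1$ everywhere, and set $A_h \equiv 0$ (so the indicator term vanishes and every policy is optimal, which is enough for feasibility). I choose the values $V_2(s_0) = 1$, $V_2(s_1) = -1$ (both saturating the bound $|V_2| \le H-2+1 = 1$), $V_1(s_0) = 1$, $V_1(s_1)=0$, and $V_3 \equiv 0$. The two transition models are made to agree everywhere except at the pair $(s_0,a_1)$ at stage $1$: under $p$ I set $p_1(s_0|s_0,a_1)=1$, while under $\widehat{p}$ I set $\widehat{p}_1(s_1|s_0,a_1)=1$; all remaining transitions are self-loops identical in the two models.

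With these choices $r_h(s,a) = V_h(s) - p_h V_{h+1}(s,a)$, and a direct check gives $r_2(s_0,\cdot)=1$, $r_2(s_1,\cdot)=-1$, $r_1(s_0,a_1)=V_1(s_0)-V_2(s_0)=0$, and $r_1(s_1,\cdot)=V_1(s_1)-V_2(s_1)=1$, so $r$ is bounded in $[-1,1]$ and, by Lemma~\ref{lemma:explicitRew}, feasible for $(\mathcal{M},\pi^E)$. The same parametrization read against $\widehat{p}$ yields $\widehat{r}$, which is feasible for $(\widehat{\mathcal{M}},\pi^E)$ by Lemma~\ref{lemma:explicitRew} (it is of the required form with the same $A \ge 0$, $V$, and $V_{H+1}=0$); however $\widehat{r}_1(s_0,a_1) = V_1(s_0) - \widehat{p}_1 V_2(s_0,a_1) = 1 - V_2(s_1) = 2 \notin [-1,1]$, which establishes the claim.

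There is essentially no hard step here: the only thing to be careful about is that the bound $|V_h(s)| \le H-h+1$ in Lemma~\ref{lemma:explicitRew} is a \emph{consequence} of $|r|\le 1$ and not a sufficient condition, so I cannot merely invoke it and must verify $|r_h(s,a)|\le 1$ by hand for every $(s,a,h)$, as done above. The remaining freedom — the unspecified transitions, the value $V_1(s_1)$, and the choice $A\equiv 0$ — can be fixed arbitrarily without affecting either conclusion, so the construction is robust and the bound of $2$ on $\widehat{r}_1(s_0,a_1)$ is exactly the quantity that the rescaling by $1/(1+\epsilon)$ in Lemma~\ref{lemma:errorProp} is designed to tame.
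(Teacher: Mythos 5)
Your construction is correct: with $A\equiv 0$, $V_2(s_0)=1$, $V_2(s_1)=-1$, $V_1(s_0)=1$, and the single flipped deterministic transition at $(s_0,a_1,1)$, one checks directly that $r\in[-1,1]$ everywhere while $\widehat{r}_1(s_0,a_1)=V_1(s_0)-V_2(s_1)=2$, and feasibility of both rewards follows from the explicit form of Lemma~\ref{lemma:explicitRew}. This is essentially the same approach as the paper's proof (an explicit small counterexample exploiting the $(p_h-\widehat{p}_h)V_{h+1}$ term), only more minimal — the paper uses $H=10$ and stochastic transitions with $\widehat{p}=1-p$ — so nothing further is needed.
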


\begin{proof}
We consider the \MDPRtext in Figure~\ref{fig:largeRew} with optimal policy and reward function defined for every $h \in \dsb{H}$ and $H=10$ as:
\begin{align*}
	& \pi^E_h(s_1) = a_1, \, \pi^E_h(s_2) = a_2, \\
	& r_h(s_1,a_1) = r_h(s_2,a_1)=0,\, r_h(s_1,a_2)=-1,\, r_h(s_2,a_2) = 1.
\end{align*}
Simple calculations lead to the V-function and advantage function values:
\begin{align*}
	& V^{\pi^E}_h(s_1) = 0,\, V^{\pi^E}_h(s_2) = H - h+1, \\
	& A^{\pi^E}_h(s_1,a_1) = 0,\,A^{\pi^E}_h(s_1,a_2) = - 1 + (H - h )/10 ,\,A^{\pi^E}_h(s_2,a_1) = - 1 - (H - h )/10,\,A^{\pi^E}_h(s_2,a_2) = 0 .
\end{align*}
We consider as alternative transition model $\widehat{p} = 1 -p$. After tedious calculations we obtain the alternative reward function:
\begin{align*}
\widehat{r}_h(s_1,a_1) = -(H-h),\,\widehat{r}_h(s_1,a_2) = - 1 + 8(H - h )/10 ,\,\widehat{r}_h(s_2,a_1) = 8 (H - h )/10,\,\widehat{r}_h(s_2,a_2) = H - h.
\end{align*}
It is simple to observe that for some $(s,a,h)$ we have $|\widehat{r}_h(s,a)| > 1$.

\begin{figure}[h!]
\centering
\begin{tikzpicture}[node distance=4cm]
\node[state] (s0) {$s_{1}$};
\node[state, right of=s0] (s1) {$s_{2}$};
\node[draw=none, below right= .3cm and 1cm of s0, fill=black] (s0+) {};

\node[draw=none, above left= .3cm and 1cm of s1, fill=black] (s0-) {};

\draw (s0) edge[->, solid, loop left] node{$a_1$} (s0);
\draw (s1) edge[->, solid, loop right] node{$a_2$} (s1);
\draw (s0) edge[-, solid, below] node{$a_2$} (s0+);
\draw (s1) edge[-, solid, above] node{$a_1$} (s0-);
\draw (s0+) edge[->, solid, below, out=-90, in=-90] node{$\scriptstyle 9/10$} (s0);
\draw (s0-) edge[->, solid, above, out=90, in=90] node{$\scriptstyle 1/10$} (s0);
\draw (s0+) edge[->, solid, below, out=-90, in=-90] node{$\scriptstyle 1/10$} (s1);
\draw (s0-) edge[->, solid, above, out=90, in=90] node{$\scriptstyle 9/10$} (s1);
%
\end{tikzpicture}
\caption{The \MDPRtext employed in Fact~\ref{fact:largeRew}.}\label{fig:largeRew}
\end{figure}
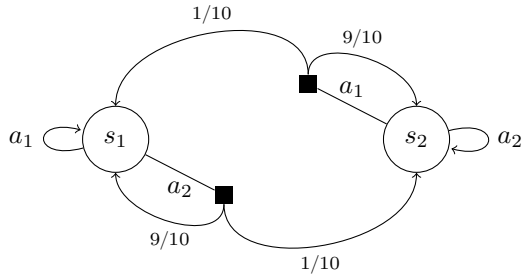
	
\end{proof}

\subsection{Proofs of Section~\ref{sec:pacFw}}

\connObj*
\begin{proof}
	Let $\mathfrak{A}$ be an $(\epsilon,\delta)$-PAC $d^{\text{G}}$-IRL algorithm. This means that with probability at least $1-\delta$, we have that for any IRL problem $\mathcal{H}_{d^{\text{G}}}(\mathcal{R},\widehat{\mathcal{R}}^\tau) \le \epsilon$. We introduce the following visitation distributions, defined for every $s,s' \in \Ss$, $h,l \in \dsb{H}$ with $l \ge h$, and $a,a' \in \As$:
	\begin{align*}
		\eta_{s,a,h,l}^{\pi}(s',a') = \Prob_{\mathcal{M},\pi} \left( s_l=s',a_l=a'|s_h=s,a_h=a\right), \qquad \eta_{s,h,l}^{\pi}(s',a') = \sum_{a \in \As} \pi_h(a|s) \eta_{s,a,h,l}^{\pi}(s',a').
	\end{align*}
	
	\textbf{$d^{\text{G}}$-IRL $\rightarrow$ $d^{\text{G}}_{Q^*}$-IRL}~~Let us consider the optimal Q-function difference and let $\pi^*$ an optimal policy under the reward function $r$, we have:
	\begin{align*}
		Q^*_h(s,a; r) - Q^*_h(s,a; \widehat{r})  & \le  Q^{\pi^*}_h(s,a; r) - Q^{\pi^*}_h(s,a; \widehat{r}) \\
		& = \sum_{l=h}^{H} \sum_{(s',a') \in \SAs} \eta_{s,a,h,l}^{\pi^*}(s',a')  (r_l(s',a') - \widehat{r}_l(s',a')) \\
		& \le \max_{(s,a,h) \in \SAHs} |r_h(s,a) - \widehat{r}_h(s,a) |\sum_{l=h}^{H} \underbrace{ \sum_{(s',a') \in \SAs} \eta_{s,a,h,l}^{\pi^*}(s',a') }_{= 1} \\
		& = (H-h+1)\max_{(s,a,h) \in \SAHs} |r_h(s,a) - \widehat{r}_h(s,a) | \\
		& \le H \max_{(s,a,h) \in \SAHs} |r_h(s,a) - \widehat{r}_h(s,a) | .
	\end{align*}
	As a consequence, we have:
	\begin{align*}
		\mathcal{H}_{d^{\text{G}}_{Q^*}}(\mathcal{R},\widehat{\mathcal{R}}^\tau) \le H \mathcal{H}_{d^{\text{G}}}(\mathcal{R},\widehat{\mathcal{R}}^\tau).
	\end{align*}
	
	\textbf{$d^{\text{G}}$-IRL $\rightarrow$ $d^{\text{G}}_{V^*}$-IRL}~~Let us consider the value functions and let $\pi^*$ (resp. $\widehat{\pi}^*$) be an optimal policy under reward function $r$ (resp. $\widehat{r}$), we have:
	\begin{align*}
		V^*_h(s;r) - V^{\widehat{\pi}^*}_h(s;r) & = V^{\pi^*}_h(s;r) - V^{\widehat{\pi}^*}_h(s;r) \pm V^{\widehat{\pi}^*}_h(s;\widehat{r}) \\
		& \le V^{\pi^*}_h(s;r)- V^{{\pi}^*}_h(s;\widehat{r}) + V^{\widehat{\pi}^*}_h(s;\widehat{r}) - V^{\widehat{\pi}^*}_h(s;r) \\
		& = \sum_{l=h}^{H} \sum_{(s',a') \in \SAs} \eta_{s,h,l}^{\pi^*}(s',a')(r_l(s',a') - \widehat{r}_l(s',a')) +  \sum_{l=h}^{H} \sum_{(s',a') \in \SAs} \eta_{s,h,l}^{\widehat{\pi}^*}(s',a')(r_l(s',a') - \widehat{r}_l(s',a')) \\
		& \le \max_{(s,a,h) \in \SAHs} |r_h(s,a) - \widehat{r}_h(s,a) | \left(\sum_{l=h}^{H} \sum_{(s',a') \in \SAs} \eta_{s,h,l}^{{\pi^*}}(s',a')+\sum_{l=h}^{H} \sum_{(s',a') \in \SAs} \eta_{s,h,l}^{\widehat{\pi}^*}(s',a') \right) \\
		& = 2(H-h+1) \max_{(s,a,h) \in \SAHs} |r_h(s,a) - \widehat{r}_h(s,a) |  \\
		& \le 2 H \max_{(s,a,h) \in \SAHs} |r_h(s,a) - \widehat{r}_h(s,a) | .
	\end{align*}
	Thus, it follows that:
	\begin{align*}
		\mathcal{H}_{d^{\text{G}}_{V^*}}(\mathcal{R},\widehat{\mathcal{R}}^\tau) \le 2H \mathcal{H}_{d^{\text{G}}}(\mathcal{R},\widehat{\mathcal{R}}^\tau).
	\end{align*}

	\textbf{$d^{\text{G}}_{Q^*}$-IRL $\rightarrow$ $d^{\text{G}}_{V^*}$-IRL}
	To prove this result, we need to introduce further tools. Specifically, we introduce the Bellman expectation operator and the Bellman optimal operator, defined for a reward function $r$, policy $\pi$, $(s,h) \in \Ss \times \dsb{H}$ and function $f: \Ss \rightarrow \Reals$:
	\begin{align*}
		T^*_{r,h} f (s) = \max_{a \in \As} \left\{ r_h(s,a) + p_h f(s,a) \right\}, \qquad T^{\pi}_{r,h} f (s) =\pi_h \left( r_h(s,a) + p_h f(s,a) \right).
	\end{align*}
	We recall the fixed-point properties: $T^*_{r,h} V^*_h =  V^*_h$ and $T^\pi_{r,h} V^\pi_h =  V^\pi_h$.	Let $\pi^*$ (resp. $\widehat{\pi}^*$) be an optimal policy under reward $r$ (resp. $\widehat{r}$). Let us consider the following derivation:
	\begin{align*}
	V^*_h(s;r) - V^{\widehat{\pi}^*}_h(s;r) & = T^*_{r,h} V^*_h(s;r) - T^{\widehat{\pi}^*}_{r,h} V^{\widehat{\pi}^*}_h(s;r) \pm T^{\pi^*}_{r,h} V_{h}^*(s;\widehat{r}) \pm T^{\pi^*}_{\widehat{r},h} V_{h}^*(s;\widehat{r}) \pm T^{*}_{\widehat{r},h} V_{h}^*(s;\widehat{r})  \pm T^{\widehat{\pi}^*}_{r,h} V^{\widehat{\pi}^*}_h(s;\widehat{r}) \\
	& = T^{\pi^*}_{r,h} V^*_{h}(s;r) - T^{\pi^*}_{r,h} V^*_{h}(s;\widehat{r}) + T^{\pi^*}_{r,h} V^*_h(s,\widehat{r}) - T^{\pi^*}_{\widehat{r},h} V^*_{h}(s;\widehat{r}) + \underbrace{T^{\pi^*}_{\widehat{r},h} V^*_{h}(s;\widehat{r}) - T^*_{\widehat{r},h} V^*_{h}(s;\widehat{r})}_{\le 0} \\
	& \quad + T^{\widehat{\pi}^*}_{\widehat{r},h} V^*_{h}(s;\widehat{r}) - T^{\widehat{\pi}^*}_{{r},h} V^*_{h}(s;\widehat{r}) + T^{\widehat{\pi}^*}_{{r},h} V^*_{h}(s;\widehat{r}) - T^{\widehat{\pi}^*}_{{r},h} V^*_{h}(s;{r})\\
	& = \pi^*_h p_h(V^*_{h+1}(\cdot;{r}) - V^*_{h+1}(\cdot;\widehat{r}))(s) + \pi^*_h(r_h - \widehat{r}_h)(s) \\
	& \quad + \widehat{\pi}^*_h (\widehat{r}_h - r_h)(s) + \widehat{\pi}^* p_h (V^*_{h+1}(\cdot;\widehat{r})-V^{\widehat{\pi}^*}_{h+1}(\cdot;{r}) )(s) \\
	& = (\pi^*_h - \widehat{\pi}^*_h) (Q^*_h(\cdot;r) - Q^*_h(\cdot;\widehat{r}))(s) + \widehat{\pi}^*p_h(V^*_{h+1}(\cdot;r) - V^{\widehat{\pi}^*}_{h+1}(\cdot;r)) (s).
	\end{align*}
	Let us apply the $L_\infty$-norm over the state space and the triangular inequality, we have:
	\begin{align*}
	\left\| V^*_h(\cdot;r) - V^{\widehat{\pi}^*}_h(\cdot;r) \right\|_\infty & \le \left\|(\pi^*_h - \widehat{\pi}^*_h) (Q^*_h(\cdot;r) - Q^*_h(\cdot;\widehat{r}))(\cdot) \right\|_\infty + \left\|\widehat{\pi}^*p_h(V^*_{h+1}(\cdot;r) - V^{\widehat{\pi}^*}_{h+1}(\cdot;r)) (\cdot)\right\|_\infty \\
	& \le 2 \left\|Q^*_h(\cdot;r) - Q^*_h(\cdot;\widehat{r}))(\cdot) \right\|_\infty + \left\|V^*_{h+1}(\cdot;r) - V^{\widehat{\pi}^*}_{h+1}(\cdot;r) \right\|_\infty.
	\end{align*}
	By unfolding the recursion over $h$, we obtain:
	\begin{align*}
	\left\| V^*_h(\cdot;r) - V^{\widehat{\pi}^*}_h(\cdot;r) \right\|_\infty \le 2\sum_{l={h+1}}^H \left\|Q^*_l(\cdot;r) - Q^*_l(\cdot;\widehat{r}))(\cdot) \right\|_\infty.
	\end{align*}
	Thus, we have:
	\begin{align*}
	\max_{(s,h) \in \Ss \times \dsb{H}} \left| V^*_h(s;r) - V^{\widehat{\pi}^*}_h(s;r) \right| \le 2 H \max_{(s,a,h) \in \SAHs} \left|Q^*_h(s,a;r) - Q^*_h(s,a;\widehat{r}) \right|.
	\end{align*}
	Since the derivation is carried out for arbitrary $\widehat{\pi}^*$, it follows that:
	\begin{align*}
		\mathcal{H}_{d^{\text{G}}_{V^*}}(\mathcal{R},\widehat{\mathcal{R}}^\tau) \le 2H \mathcal{H}_{d^{\text{G}}_{Q^*}}(\mathcal{R},\widehat{\mathcal{R}}^\tau).
	\end{align*}
\end{proof}

\subsection{Proofs of Section~\ref{sec:lb}}

\rInfLB*

\begin{proof}
	We put together the results of Theorem~\ref{thr:rInfLB1} and Theorem~\ref{thr:rInfLB2}, by recalling that $\max\{a,b\} \ge \frac{a+b}{2}$, or, equivalently, assuming to observe instances like the ones of Theorem~\ref{thr:rInfLB1} w.p. $1/2$ as well as those of Theorem~\ref{thr:rInfLB2}.
\end{proof}

\begin{restatable}{thr}{rInfLB1}\label{thr:rInfLB1}
Let $\mathfrak{A} = (\mu,\tau)$ be an $(\epsilon,\delta)$-PAC algorithm for $d^{\text{G}}$-IRL. Then, there exists an IRL problem $(\mathcal{M},\pi^E)$ such that, if $\epsilon \le 1$, $\delta < 1/16$, $S\ge 9$, $A\ge 2$, and $H \ge 12$, the expected sample complexity is lower bounded by:
\begin{itemize}[topsep=0pt, noitemsep, leftmargin=*]
	\item if the transition model $p$ is time-inhomogeneous:
	\begin{align*}
		\E_{(\mathcal{M},\pi^E),\mathfrak{A}}\left[\tau \right] \ge \frac{1}{2048} \frac{H^3SA}{\epsilon^2}  \log\left( \frac{1}{\delta} \right);
	\end{align*}
	\item if the transition model $p$ is time-homogeneous:
		\begin{align*}
		\E_{(\mathcal{M},\pi^E),\mathfrak{A}}\left[\tau \right] \ge \frac{1}{1024} \frac{H^2SA}{\epsilon^2}  \log\left( \frac{1}{\delta} \right).
	\end{align*}
\end{itemize}
\end{restatable}

\begin{proof}

\textbf{Step 1: Instances Construction}~~
The construction of the hard \MDPRtext instances follows similar steps as the ones presented in the constructions of lower bounds for policy learning~\citep{DominguesMKV21} and the hard instances are reported in Figure~\ref{fig:hardBB1} in a semi-formal way. The state space is given by $\Ss=\{s_{\text{start}}, s_{\text{root}}, s_-,s_+, s_1,\dots, s_{\overline{S}}\}$ and the action space is given by $\As = \{a_0,a_1,\dots,a_{\overline{A}}\}$. The transition model is described below and the horizon is $H \ge 3$. We introduce the constant $\overline{H} \in \dsb{H}$, whose value will be chosen later. Let us observe, for now, that if $\overline{H} = 1$, the transition model is time-homogeneous.

The agent begins in state $s_{\text{start}}$, where every action has the same effect. Specifically, if the stage $h < \overline{H}$, then there is  probability $1/2$ to remain in $s_{\text{start}}$ and a probability $1/2$ to transition to $s_{\text{root}}$. Instead, if $h \ge \overline{H}$, the state transitions to $s_{\text{root}}$ deterministically. From state $s_{\text{root}}$, every action has the same effect and the state transitions with equal probability $1/\overline{S}$ to a state $s_i$ with $i \in \dsb{\overline{S}}$. In all states $s_i$, apart from a specific one, i.e., state $s_*$, all actions have the same effect, i.e., transitioning to states $s_-$ and $s_+$ with equal probability $1/2$. State $s_*$ behaves as the other ones if the stage $h \neq h_*$, where $h_* \in \dsb{H}$ is a predefined stage. If, instead, $h=h_*$, all actions $a_j \neq a_*$ behave like in the other states, while for action $a_*$, we have a $1/2+\epsilon'$ probability of reaching $s_+$ (and consequently probability $1/2-\epsilon'$ of reaching $s_-$), with $\epsilon' \in [0,1/4]$. Notice that, having fixed $\overline{H}$, the possible values of $h^*$ are $\{3,\dots,2+\overline{H}\}$. States $s_+$ and $s_-$ are absorbing states. The expert's policy always plays action $a_0$.

Let us consider the base instance $\mathcal{M}_{0}$ in which there is no state behaving like $s_*$. Additionally, by varying the triple $\ell \coloneqq (s_*,a_*,h_*) \in \{s_1,\dots,s_{\overline{S}}\} \times \{a_1,\dots,a_{\overline{A}}\} \times \dsb{3,\overline{H}+2} \eqqcolon \mathcal{I}$, we can construct the  class of instances denoted by $\mathbb{M} = \{ \mathcal{M}_{\ell} : \ell \in \{0\} \cup \mathcal{I} \}$.

\begin{figure}[h!]
\centering
\begin{tikzpicture}[node distance=4cm]
\node[state] (s0) {$s_{\text{start}}$};
\node[state, below=2cm of s0] (s00) {$s_{\text{root}}$};

\node[state, below left of = s00, draw=none] (s1) {$\dots$};
\node[state, below right of = s00, draw=none] (sS) {$\dots$};
\node[state, below=2cm of s00, blue] (sj) {$s_*$};
\node[state, left of = s1] (s1) {$s_1$};
\node[state, right of = sS] (sS) {$s_{\overline{S}}$};
\node[state, below right=3cm and 2.5cm of sj] (plus) {$s_{+}$};
\node[state, below left=3cm and 2.5cm of sj] (minus) {$s_{-}$};
\draw (plus) edge[->, loop below, double] node{} (plus);
\draw (minus) edge[->, loop below, double] node{} (minus);
\draw (s0) edge[->, loop above, solid, double] node{$h < \overline{H}$ w.p. $\frac{1}{2}$ } (s0);
\draw (s0) edge[->, solid, right, double] node{w.p. $\frac{1}{2}$ or  $h \ge \overline{H}$} (s00);
\draw (s00) edge[->, solid, right, double] node{w.p. $\frac{1}{\overline{S}}$} (sj);
\draw (s00) edge[->, solid, above left, double] node{w.p. $\frac{1}{\overline{S}}$} (s1);
\draw (s00) edge[->, solid, above right, double] node{w.p. $\frac{1}{\overline{S}}$} (sS);

\draw (sS) edge[->, solid, below right, double] node{w.p. $\frac{1}{2}$} (plus);
\draw (sS) edge[->, solid, above, double] node[below, pos=0.3]{w.p. $\frac{1}{2}$} (minus);

\draw (s1) edge[->, solid,  double] node[below, pos=0.3]{w.p. $\frac{1}{2}$} (plus);
\draw (s1) edge[->, solid, below left, double] node{w.p. $\frac{1}{2}$} (minus);

\draw (sj) edge[->, solid, above right, bend left, blue, very thick] node[pos=0.3]{$h=h_*$ w.p. $\frac{1}{2}+\epsilon'$} (plus);
\draw (sj) edge[->, solid, above left, bend right, blue, very thick] node[pos=0.3]{$h=h_*$ w.p. $\frac{1}{2}-\epsilon'$} (minus);

\draw (sj) edge[->, solid, below, bend right, densely dashed, very thick] node[right, pos=0.6]{w.p. $\frac{1}{2}$} (plus);
\draw (sj) edge[->, solid, below, bend left, densely dashed, very thick] node[left, pos=0.6]{w.p. $\frac{1}{2}$} (minus);

\end{tikzpicture}	
\caption{Semi-formal representation of the the hard instances \MDPRtext used in the proof of Theorem~\ref{thr:rInfLB1}.}\label{fig:hardBB1}
\end{figure}

\textbf{Step 2: Feasible Set Computation}~~
Let us consider an instance $\mathcal{M}_\ell \in \mathbb{M}$, we now seek to provide a lower bound to the Hausdorff distance $\mathcal{H}_{d^{\text{G}}}\left( \mathcal{R}_{\mathcal{M}_0}, \mathcal{R}_{\mathcal{M}_{\ell}}\right)$. To this end, we focus on the triple $\ell = (s_*,a_*,h_*)$ and we enforce the convenience of action $a_0$ over action $a_*$. For the base \MDPRtext $\mathcal{M}_0$, let $r^{0} \in \mathcal{R}_{\mathcal{M}_0}$, we have:
\begin{align*}
	& r_{h_*}^0(s_*,a_0) + \frac{1}{2} \sum_{l=h_*+1}^H \left(r_l^0(s_-) + r_l^0(s_+)\right) \ge r_{h_*}^0(s_*,a_*) + \frac{1}{2} \sum_{l=h+1}^H \left(r_l^0(s_-) + r_l^0(s_+)\right) \\
	& \quad  \implies r_{h_*}^0(s_*,a_0) \ge r_{h_*}^0(s_*,a_*) ,
\end{align*}
For the alternative \MDPRtext $\mathcal{M}_{\ell}$, let $r^{\ell} \in \mathcal{R}_{\mathcal{M}_\ell}$, we have:
\begin{align*}
	& r_{h_*}^{\ell}(s_*,a_0) + \frac{1}{2} \sum_{l=h_*+1}^H \left(r_l^{\ell}(s_-) + r_l^{\ell}(s_+)\right) \ge r_{h_*}^{\ell}(s_*,a_*) + \sum_{l=h_*+1}^H \left(\left(\frac{1}{2} - \epsilon' \right) r_l^{\ell}(s_-) + \left(\frac{1}{2} + \epsilon' \right) r_l^{\ell}(s_+)\right) \\
	& \quad \implies r_{h_*}^{\ell}(s_*,a_0) \ge r_{h_*}^{\ell}(s_*,a_*)  - \epsilon' \sum_{l=h_*+1}^H \left( r_l^{\ell}(s_-)  - r_l^{\ell}(s_+)\right).
\end{align*}

In order to lower bound the Hausdorff distance $\mathcal{H}_{d^{\text{G}}}\left( \mathcal{R}_{\mathcal{M}_0}, \mathcal{R}_{\mathcal{M}_{\ell}} \right) $, we set for $\mathcal{M}_{\ell}$:

\begin{align*}
	r_l^{\ell}(s_-) = -r_l^{\ell}(s_+) = 1, \; r_{h_*}^{\ell}(s_*,a_*) = 1, \;r_{h_*}^{\ell}(s_*,a_0) = 1 - 2\epsilon'(H-h_*).
\end{align*}
Then, for notational convenience, for the \MDPRtext $\mathcal{M}_0$, we set $x \coloneqq r_{h_*}^0(s_*,a_0)$ and $y \coloneqq r_{h_*}^0(s_*,a_*)$:
\begin{align*}
	\mathcal{H}_{d^{\text{G}}}\left( \mathcal{R}_{\mathcal{M}_0}, \mathcal{R}_{\mathcal{M}_{\ell}}\right) \ge \min_{\substack{x,y \in [-1,1] \\ y \ge x}} \max \left\{ \left| x-1 \right|, \left| y-1+2\epsilon'(H-h_*)\right| \right\} = \epsilon'(H-h_*).
\end{align*}

We enforce the following constraint on this quantity:
\begin{align}\label{eq:ineqEps1}
\forall h^* \in \dsb{3,\overline{H}+2} \; : \; (H-h^*) \epsilon' \ge 2\epsilon \implies \epsilon' \ge \max_{ h^* \in \dsb{3,\overline{H}+2}} \frac{2\color{black}\epsilon}{(H-h^*)} = \frac{2\color{black}\epsilon}{(H-\overline{H}-2)}.
\end{align}
Notice that $\epsilon' \le 1/4$ whenever $H \ge \overline{H}+10$.

\textbf{Step 3: Lower bounding Probability}~~
Let us consider an $(\epsilon,\delta)$-correct algorithm $\mathfrak{A}$ that outputs the estimated feasible set $\widehat{\mathcal{R}}$. Thus, for every $\imath \in \mathcal{I}$, we can lower bound the error probability:
\begin{align*}
\delta & \ge \sup_{\text{all }\mathcal{M}\text{ \MDPRtext} \text{ and expert policies } \pi} \Prob_{(\mathcal{M},\pib),\mathfrak{A}} \left( \mathcal{H}_{d^{\text{G}}}\left( \mathcal{R}_{\mathcal{M}}, \widehat{\mathcal{R}} \right) \ge \epsilon \right) \\
& \ge \sup_{\mathcal{M} \in \mathbb{M}} \Prob_{(\mathcal{M},\pib),\mathfrak{A}} \left( \mathcal{H}_{d^{\text{G}}}\left( \mathcal{R}_{\mathcal{M}}, \widehat{\mathcal{R}} \right) \ge \epsilon \right) \\
& \ge \max_{\ell \in \{0,\imath\} }  \Prob_{(\mathcal{M}_\ell,\pib),\mathfrak{A}} \left( \mathcal{H}_{d^{\text{G}}}\left( \mathcal{R}_{\mathcal{M}_\ell}, \widehat{\mathcal{R}} \right) \ge \epsilon \right).
\end{align*}
For every $\imath \in \mathcal{I}$, let us define the \emph{identification function}:
\begin{align*}
	\Psi_{\imath} \coloneqq \argmin_{\ell \in \{0,\imath\}} \mathcal{H}_{d^{\text{G}}}\left( \mathcal{R}_{\mathcal{M}_\ell}, \widehat{\mathcal{R}} \right).
\end{align*}
Let $\jmath \in \{0,\imath\}$. If $\Psi_{\imath} = \jmath$, then, $\mathcal{H}_{d^{\text{G}}}(\mathcal{R}_{\mathcal{M}_{\Psi_{\imath}}}, \mathcal{R}_{\mathcal{M}_\jmath}) =0$. Otherwise, if $\Psi_{\imath} \neq \jmath$, we have:
\begin{align*}
	\mathcal{H}_{d^{\text{G}}}\left(\mathcal{R}_{\mathcal{M}_{\Psi_{\imath}}}, \mathcal{R}_{\mathcal{M}_\jmath} \right)& \le \mathcal{H}_{d^{\text{G}}}\left(\mathcal{R}_{\mathcal{M}_{\Psi_{\imath}}}, \widehat{\mathcal{R}}\right) + \mathcal{H}_{d^{\text{G}}}\left(\widehat{\mathcal{R}}, \mathcal{R}_{\mathcal{M}_\jmath} \right) \le 2\mathcal{H}_{d^{\text{G}}}\left(\widehat{\mathcal{R}}, \mathcal{R}_{\mathcal{M}_\jmath} \right) ,
\end{align*}
where the first inequality follows from triangular inequality and the second one from the definition of identification function $\Psi_{\imath}$. From Equation~\eqref{eq:ineqEps1}, we have that $\mathcal{H}_{d^{\text{G}}}\left(\mathcal{R}_{\mathcal{M}_{\Psi_{\imath}}}, \mathcal{R}_{\mathcal{M}_\jmath} \right) \ge 2\epsilon$. Thus, it follows that $\mathcal{H}_{d^{\text{G}}}\left(\widehat{\mathcal{R}}, \mathcal{R}_{\mathcal{M}_\jmath} \right) \ge \epsilon$. This implies the following inclusion of events for $\jmath \in \{0,\imath\}$:
\begin{align*}
 \left\{\mathcal{H}_{d^{\text{G}}}\left(\widehat{\mathcal{R}}, \mathcal{R}_{\mathcal{M}_\jmath} \right) \ge \epsilon \right\} \supseteq \left\{ \Psi_{\imath} \neq \jmath \right\}.
\end{align*}
Thus, we can proceed by lower bounding the probability:
\begin{align*}
\max_{\ell \in \{0,\imath\} }  \Prob_{(\mathcal{M}_\ell,\pib),\mathfrak{A}} \left( \mathcal{H}_{d^{\text{G}}}\left( \mathcal{R}_{\mathcal{M}_\ell}, \widehat{\mathcal{R}} \right) \ge \epsilon \right) & \ge \max_{\ell \in \{0,\imath\} } \Prob_{(\mathcal{M}_\ell,\pib),\mathfrak{A}} \left( \Psi_{\imath} \neq \ell \right) \\
& \ge \frac{1}{2} \left[ \Prob_{(\mathcal{M}_0,\pib),\mathfrak{A}}\left(  \Psi_{\imath} \neq 0 \right) + \Prob_{(\mathcal{M}_\imath,\pib),\mathfrak{A}}\left( \Psi_{\imath} \neq \imath \right) \right] \\
& =  \frac{1}{2} \left[ \Prob_{(\mathcal{M}_0,\pib),\mathfrak{A}}\left(  \Psi_{\imath} \neq 0 \right) + \Prob_{(\mathcal{M}_\imath,\pib),\mathfrak{A}}\left( \Psi_{\imath} = 0 \right) \right],
\end{align*}
where the second inequality follows from the observation that $\max\{a,b\} \ge \frac{1}{2} (a + b)$ and the equality from observing that $\Psi_{\imath} \in \{0,\imath\}$. The intuition behind this derivation is that we lower bound the probability of making a mistake $\ge \epsilon$ with the probability of failing in identifying the true underlying problem. We can now apply the Bretagnolle-Huber inequality~\citep[][Theorem 14.2]{lattimore2020bandit} (also reported in Theorem~\ref{thr:BHIneq} for completeness) with $\mathbb{P} = \Prob_{(\mathcal{M}_0,\pib),\mathfrak{A}}$, $\mathbb{Q} = \Prob_{(\mathcal{M}_0,\pib)\mathfrak{A}}$, and $\mathcal{A} = \{\Psi_{\imath} \neq 0\}$:
\begin{align*}
\Prob_{(\mathcal{M}_0,\pib),\mathfrak{A}}\left(  \Psi_{\imath} \neq 0 \right) + \Prob_{(\mathcal{M}_\imath,\pib),\mathfrak{A}}\left( \Psi_{\imath} = 0 \right) \ge \frac{1}{2} \exp \left( - D_{\text{KL}} \left( \Prob_{(\mathcal{M}_0,\pib),\mathfrak{A}}, \Prob_{(\mathcal{M}_\imath,\pib),\mathfrak{A}}\right) \right).
\end{align*}

\textbf{Step 4: KL-divergence Computation}~~
Let $\mathcal{M}\in \mathbb{M}$, we denote with $\mathbb{P}_{\mathfrak{A},\mathcal{M},\pib}$ the joint probability distribution of all events realized by the execution of the algorithm in the \MDPRtext (the presence of $\pib$ is irrelevant as we assume it known):
\begin{align*}
\Prob_{(\mathcal{M},\pib),\mathfrak{A}} = \prod_{t=1}^\tau \rho_t(s_t, a_t, h_t |H_{t-1}) p_{h_t}(s_{t}'|s_t,a_t).
\end{align*}
where $H_{t-1} = (s_1,a_1,h_1,s_1',\dots,s_{t-1},a_{t-1},h_{t-1},s_{t-1}')$ is the history. Let $\imath \in \mathcal{I}$ and denote with $p^0$ and $p^\imath$ the transition models associated with $\mathcal{M}_0$ and $\mathcal{M}_\imath$. Let us now move to the KL-divergence:
\begin{align*}
	D_{\text{KL}}\left( \mathbb{P}_{(\mathcal{M}_0,\pib),\mathfrak{A}}, \mathbb{P}_{(\mathcal{M}_\imath,\pib),\mathfrak{A}} \right) & = \E_{(\mathcal{M}_0,\pib),\mathfrak{A}} \left[ \sum_{t=1}^\tau D_{\text{KL}} \left(p_{h_t}^0(\cdot|s_t,a_t),p_{h_t}^\imath(\cdot|s_t,a_t) \right) \right] \\
	& \le \E_{(\mathcal{M}_0,\pib),\mathfrak{A}} \left[ N^\tau_{h_*}(s_*,a_*) \right] D_{\text{KL}} \left(p_{h_*}^0(\cdot|s_*,a_*),p_{h_*}^\imath(\cdot|s_*,a_*) \right) \\
	& \le 8(\epsilon')^2 \E_{(\mathcal{M}_0,\pib),\mathfrak{A}} \left[ N^\tau_{h_*}(s_*,a_*) \right].
	%
\end{align*}
having observed that the transition models differ in $\imath = (s_*,a_*,h_*)$ and defined $N^\tau_{h_*}(s_*,a_*) = \sum_{t=1}^\tau \indic\{(s_t,a_t,h_t) = (s_*,a_*,h_*)\}$ and the last passage is obtained by Lemma~\ref{lemma:KLBound} with $D=2$ (and $\epsilon=2\epsilon'$). Putting all together, we have:
\begin{align*}
	\delta \ge \frac{1}{4} \exp \left( - 8\E_{(\mathcal{M}_0,\pib),\mathfrak{A}} \left[  N_{h_*}^\tau(s_*,a_*)  \right] (\epsilon')^2 \right) \implies \E_{(\mathcal{M}_0,\pib),\mathfrak{A}} \left[  N_{h_*}^\tau(s_*,a_*)  \right] \ge \frac{\log \frac{1}{4\delta}}{8(\epsilon')^2} = \frac{(H-\overline{H}-2)^2\log \frac{1}{4\delta}}{32\epsilon^2}.
\end{align*}

Thus, summing over $(s_*,a_*,h_*) \in \mathcal{I}$, we have:
\begin{align*}
	\E_{(\mathcal{M}_0,\pib),\mathfrak{A}} \left[\tau\right] & \ge \sum_{(s_*,a_*,h_*) \in \mathcal{I}}\E_{(\mathcal{M}_0,\pib),\mathfrak{A}} \left[  N_{h_*}^\tau(s_*,a_*)  \right] \\
	& = \sum_{(s_*,a_*,h_*) \in \mathcal{I}} \frac{(H-\overline{H}-2 )^2\log \frac{1}{4\delta}}{32\epsilon^2} \\
	& =  \frac{\overline{S}\overline{A}\overline{H}(H-\overline{H} -2)^2}{32\epsilon^2}  \log \frac{1}{4\delta}.
\end{align*}
The number of states is given by $S = |\Ss| = \overline{S}+4$, the number of actions is given by $A = |\As| = \overline{A}+1$. Let us first consider the time-homogeneous case, \ie $\overline{H} = 1$:
\begin{align*}
\E_{(\mathcal{M}_0,\pib),\mathfrak{A}} \left[\tau\right] \ge \frac{(S-4)(A-1)(H-3)^2}{32\epsilon^2}  \log \frac{1}{4\delta}.
\end{align*}
For $\delta < 1/16$, $S\ge 9$, $A\ge 2$, $H \ge 10$, we obtain:
\begin{align*}
\E_{(\mathcal{M}_0,\pib),\mathfrak{A}} \left[\tau\right] \ge \frac{SAH^2}{1024\epsilon^2}  \log \frac{1}{\delta}.
\end{align*}
For the time-inhomogeneous case, instead, we select $\overline{H} = H/2$, to get:
\begin{align*}
\E_{(\mathcal{M}_0,\pib),\mathfrak{A}} \left[\tau\right] \ge \frac{(S-4)(A-1)(H/2)(H-H/2-2)^2}{\epsilon^2}  \log \frac{1}{4\delta}.
\end{align*}
For $\delta < 1/16$, $S\ge 9$, $A\ge 2$, $H \ge 12$, we obtain:
\begin{align*}
\E_{(\mathcal{M}_0,\pib),\mathfrak{A}} \left[\tau\right] \ge \frac{SAH^3}{2048\epsilon^2}  \log \frac{1}{\delta}.
\end{align*}

\end{proof}

\clearpage

\begin{restatable}{thr}{rInfLB2}\label{thr:rInfLB2}
Let $\mathfrak{A} = (\mu,\tau)$ be an $(\epsilon,\delta)$-PAC algorithm for $d^{\text{G}}$-IRL. Then, there exists an IRL problem $(\mathcal{M},\pi^E)$ such that, if $\epsilon \le 1$, $\delta \le 1/2$, $S \ge 16$, $A \ge 2$, $H \ge 131$, the expected sample complexity is lower bounded by:
\begin{itemize}[topsep=0pt, noitemsep, leftmargin=*]
	\item if the transition model $p$ is time-inhomogeneous:
	\begin{align*}
		\E_{(\mathcal{M},\pi^E),\mathfrak{A}}\left[\tau \right] \ge \frac{1}{5120} \frac{S^2AH^3}{\epsilon^2};
	\end{align*}
	\item if the transition model $p$ is time-homogeneous:
		\begin{align*}
		\E_{(\mathcal{M},\pi^E),\mathfrak{A}}\left[\tau \right]  \ge \frac{1}{2560} \frac{S^2AH^2}{\epsilon^2}.
	\end{align*}
\end{itemize}
\end{restatable}

\begin{proof}
\textbf{Step 1: Instances Construction}~~
The construction of the hard \MDPRtext instances for this second bound follows steps similar to those of reward free exploration~\citep{JinKSY20} and the instances are reported in Figure~\ref{fig:hardBB2} in a semi-formal way. The state space is given by $\Ss=\{s_{\text{start}}, s_{\text{root}}, s_1,\dots, s_{\overline{S}},s_1',\dots, s_{\overline{S}}'\}$ and the action space is given by $\As = \{a_0,a_1,\dots,a_{\overline{A}}\}$. We assume $\overline{S}$ to be divisible by $16$. The transition model is described below and the horizon is $H \ge 3$. 

The agent begins in state $s_{\text{start}}$, where every action has the same effect. Specifically, if the stage $h < \overline{H}$ ($\overline{H} \in \dsb{H}$, whose value will be chosen later), then there is  probability $1/2$ to remain in $s_{\text{start}}$ and a probability $1/2$ to transition to $s_{\text{root}}$. Instead, if $h \ge \overline{H}$, the state transitions to $s_{\text{root}}$ deterministically. From state $s_{\text{root}}$, every action has the same effect and the state transitions with equal probability $1/\overline{S}$ to a state $s_i$ with $i \in \dsb{\overline{S}}$. In every state $s_i$ and every stage $h$, action $a_0$ allows reaching states $s_1',\dots, s_{\overline{S}}'$ with equal probability $1/\overline{S}$. Instead, by playing the other actions $a_j$ with $j \ge 1$ at stage $h$, the probability distribution of the next state is given by $p_h(s_k'|s_i,a_j) = (1+\epsilon' v_k^{(s_i,a_j,h)}) / \overline{S}$ where the vector $v^{(s_i,a_j,h)} = (v_1^{(s_i,a_j,h)},\dots,v_{\overline{S}}^{(s_i,a_j,h)}) \in \mathcal{V}$, where $\mathcal{V} \coloneqq \{ \{-1,1\}^{\overline{S}} : \sum_{j=1}^{{\overline{S}}} v_j = 0\}$ and $\epsilon' \in [0,1/2]$. Notice that, having fixed $\overline{H}$, the possible values of $h$ are $\{3,\dots,2+\overline{H}\}$. States $s'_1,\dots,s'_{\overline{S}}$ are absorbing states. The expert's policy always plays action $a_0$.

Let us introduce the set $\mathcal{I} \coloneqq \{s_1,\dots,s_{\overline{S}}\} \times \{a_1,\dots,a_{\overline{A}}\} \times \dsb{3,\overline{H}+2}$. Let $\bm{v} = (v^\imath)_{\imath \in \mathcal{I}} \in \mathcal{V}^{\mathcal{I}}$ which is the set of vectors having as components the elements $v^{\imath}$ determining the probability distribution of the next state starting from the triple $\imath \in \mathcal{I}$. We denote with $\mathcal{M}_{\bm{v}}$ the \MDPRtext induced by $\bm{v}$.  We can construct the  class of instances denote by $\mathbb{M} = \{ \mathcal{M}_{\bm{v}} : \bm{v} \in \mathcal{V}^{\mathcal{I}} \}$. Moreover, we denoted with  $\mathcal{M}_{\bm{v} \stackrel{\imath}{\leftarrow} w}$ the instance in which we replace the $\imath$ component of $\bm{v}$, \ie $v^{\imath}$, with $w \in \mathcal{V}$ and $\mathcal{M}_{\bm{v} \stackrel{\imath}{\leftarrow} 0}$ the instance in which we replace the $\imath$ component of $\bm{v}$, \ie $v^{\imath}$, with the zero vector.

\begin{figure}[t]
\centering
\begin{tikzpicture}[node distance=3.5cm]
\node[state] (s0) {$s_{\text{start}}$};
\node[state, below=2cm of s0] (s00) {$s_{\text{root}}$};

\node[state, below left of = s00, draw=none] (s1xx) {$\dots$};
\node[state, below right of = s00, draw=none] (sSxx) {$\dots$};
\node[state, left of = s1xx, blue] (s1) {$s_1$};
\node[state, right of = sSxx, red] (sS) {$s_{\overline{S}}$};
\node[state, below=3cm of s1] (s1') {$s_{1}'$};
\node[state, right=3cm of s1'] (s2') {$s_{2}'$};
\node[state, below=3cm of sS] (sS') {$s_{\overline{S}}'$};
\draw (s1') edge[->, loop below, double] node{} (s1');
\draw (s2') edge[->, loop below, double] node{} (s2');
\draw (sS') edge[->, loop below, double] node{} (sS');

\node[state, below of = s00, draw=none] (s1yy) {};
\node[state, below right of = s1yy, draw=none] (s1yy) {$\dots$};
\draw (s0) edge[->, loop above, solid, double] node{$h < \overline{H}$ w.p. $\frac{1}{2}$ } (s0);
\draw (s0) edge[->, solid, right, double] node{w.p. $\frac{1}{2}$ or  $h \ge \overline{H}$} (s00);
\draw (s00) edge[->, solid, above left, double] node{w.p. $\frac{1}{\overline{S}}$} (s1);
\draw (s00) edge[->, solid, above right, double] node{w.p. $\frac{1}{\overline{S}}$} (sS);

\draw (sS) edge[->, solid, above left, very thick, dashdotted, red] node[pos=0.3]{$a_j$ w.p. $\frac{1+\epsilon' v_{1}^{(s_{\overline{S}},a_j,h)}}{\overline{S}}$} (s1');
\draw (sS) edge[->, solid, below right,  very thick, dashdotted, red] node[pos=0.4]{$a_j$ w.p. $\frac{1+\epsilon' v_{2}^{(s_{\overline{S}},a_j,h)}}{\overline{S}}$} (s2');
\draw (sS) edge[->, solid, right,  very thick, dashdotted, red] node{$a_j$ w.p. $\frac{1+\epsilon' v_{\overline{S}}^{(s_{\overline{S}},a_j,h)}}{\overline{S}}$} (sS');

\draw (s1) edge[->, solid, left, very thick, dashdotted, blue] node{$a_j$ w.p. $\frac{1+\epsilon' v_1^{(s_1,a_j,h)}}{\overline{S}}$} (s1');
\draw (s1) edge[->, solid, above, very thick, dashdotted, blue] node{$a_j$ w.p. $\frac{1+\epsilon' v_2^{(s_1,a_j,h)}}{\overline{S}}$} (s2');
\draw (s1) edge[->, solid, below left, very thick, dashdotted, blue] node[pos=0.8]{$a_j$ w.p. $\frac{1+\epsilon' v_{\overline{S}}^{(s_1,a_j,h)}}{\overline{S}}$} (sS');

%

\end{tikzpicture}	
\caption{Semi-formal representation of the the hard instances \MDPRtext used in the proof of Theorem~\ref{thr:rInfLB2}.}\label{fig:hardBB2}
\end{figure}
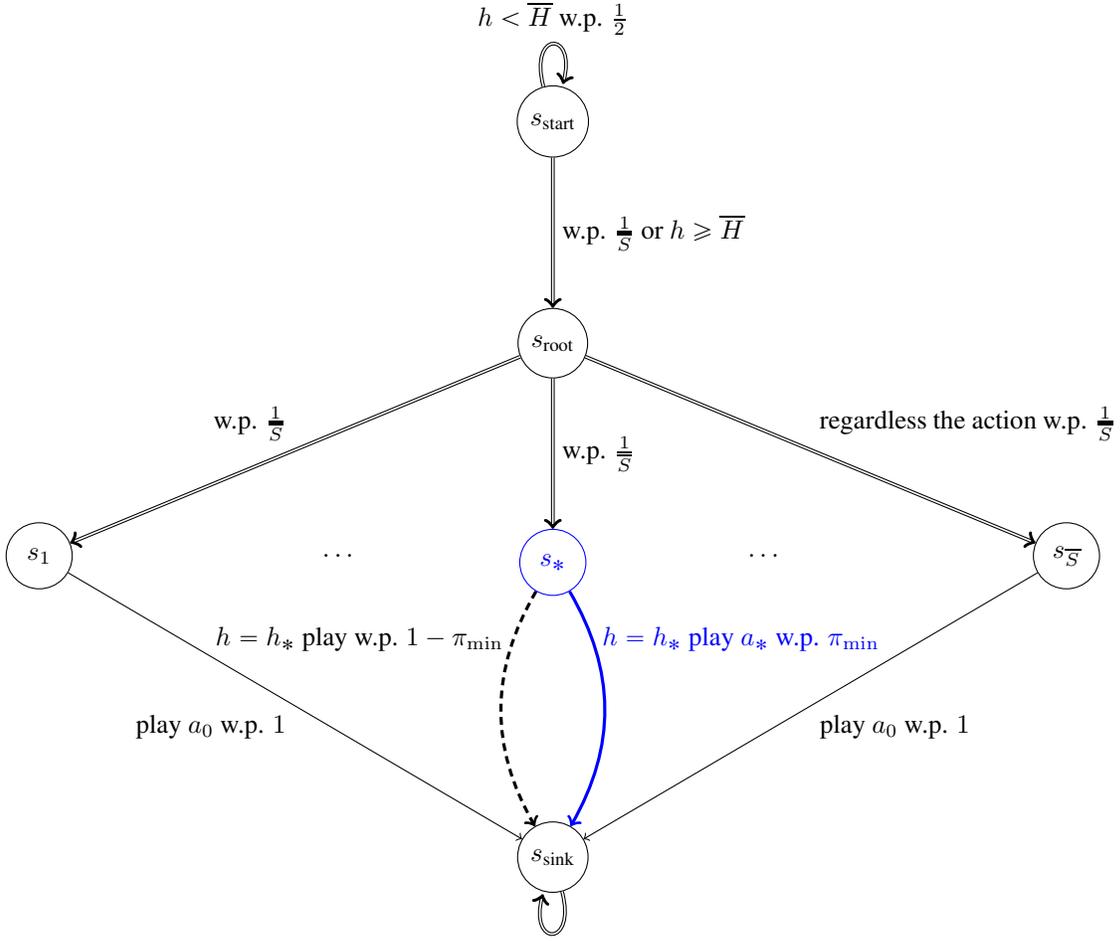

\textbf{Step 2: Feasible Set Computation}~~Thanks to Lemma~\ref{lemma:packing}, we know that there exists a subset $\overline{\mathcal{V}} \subset \mathcal{V}$ of cardinality at least $|\overline{\mathcal{V}} | \ge 2^{\overline{S}/5}$ such that for every $v,w \in \overline{\mathcal{V}} $ with $v \neq w$ we have $\sum_{j=1}^{\overline{S}} |v_j - w_j| \ge \overline{S}/16$. Thus, we consider the set $\overline{\mathcal{V}}^{\mathcal{I}} \subset {\mathcal{V}}^{\mathcal{I}}$ and to build the instances $\bm{v} \in \overline{\mathcal{V}}^{\mathcal{I}}$ and $v,w \in \overline{\mathcal{V}}$ with $v \neq w$. Let $\imath \in \mathcal{I}$, the induced instances are denoted by $\mathcal{M}_{\bm{v} \stackrel{\imath}{\leftarrow} v}, \mathcal{M}_{\bm{v} \stackrel{\imath}{\leftarrow} w} \in \mathbb{M}$.
%
%
%

To lower bound the Hausdorff distance, we focus on the triple $\imath = (s_*,a_*,h_*)$ and we enforce the convenience of action $a_0$ over action $a_*$. For both \MDPRtext $\mathcal{M}_{\bm{v} \stackrel{\imath}{\leftarrow} v}$ and $ \mathcal{M}_{\bm{v} \stackrel{\imath}{\leftarrow} w}$, let $r^{v} \in \mathcal{R}_{\mathcal{M}_{\bm{v} \stackrel{\imath}{\leftarrow} v}}$ and $r^{w} \in \mathcal{R}_{\mathcal{M}_{\bm{v} \stackrel{\imath}{\leftarrow} w}}$, we have:
\begin{align*}
	& r_{h_*}^{v}(s_*,a_0) + \frac{1}{\overline{S}} \sum_{l=h_*+1}^H \sum_{j =1}^{\overline{S}}r_l^{v}(s'_j) \ge r_{h_*}^{v}(s_*,a_*) + \sum_{l=h_*+1}^H \sum_{j =1}^{\overline{S}}  \frac{1+\epsilon' v_j}{\overline{S}}r_l^{v}(s'_j) \\
	& \quad \implies r_{h_*}^{v}(s_*,a_0) \ge r_{h_*}^{v}(s_*,a_*)  + \frac{\epsilon'}{\overline{S}}  \sum_{j =1}^{\overline{S}}   v_j \sum_{l=h_*+1}^H r_l^{v}(s'_j).
\end{align*}
\begin{align}
	& r_{h_*}^{w}(s_*,a_0) + \frac{1}{\overline{S}} \sum_{l=h_*+1}^H \sum_{j =1}^{\overline{S}}r_l^{w}(s'_j) \ge r_{h_*}^{w}(s_*,a_*) + \sum_{l=h_*+1}^H \sum_{j =1}^{\overline{S}}  \frac{1+\epsilon' w_j'}{\overline{S}}r_l^{w}(s'_j) \notag \\
	& \quad \implies r_{h_*}^{w}(s_*,a_0) \ge r_{h_*}^{w}(s_*,a_*)  + \frac{\epsilon'}{\overline{S}} \sum_{j =1}^{\overline{S}}   w_j \sum_{l=h_*+1}^H r_l^{w}(s'_j). \label{eq:constD}
\end{align}
In order to lower bound the Hausdorff distance $\mathcal{H}_{d^{\text{G}}}\left( \mathcal{M}_{\bm{v} \stackrel{\imath}{\leftarrow} v},\mathcal{M}_{\bm{v} \stackrel{\imath}{\leftarrow} w} \right) $, we set for $\mathcal{M}_{\bm{v} \stackrel{\imath}{\leftarrow} v}$:
\begin{align*}
	r_l^{v}(s'_j) = -v_j, \; r_{h_*}^{v}(s_*,a_*) = 1, \;r_{h_*}^{v}(s_*,a_0) = 1 - \epsilon'(H-h_*).
\end{align*}
We now want to find the closest reward function $r^w$ for the instance $\mathcal{M}_{\bm{v} \stackrel{\imath}{\leftarrow} w}$, recalling that there are at least $\overline{S}/16$ components of the vectors $v$ and $w$ that are different. Clearly, we can set $r_l^w(s_j')  = r_l^v(s_j') = -v_j$ for all $j \in \dsb{\overline{S}}$ in which $v_j= w_j$ since this will not increase the Hausdorff distance and make the constraint in Equation~\eqref{eq:constD} less restrictive. For symmetry reasons, we can limit our reasoning to the case in which  $v_j=-1$ and $w_j=1$ for the $j$ terms in which they are different. This, way, the constraint becomes:
\begin{align*}
	\underbrace{r_{h_*}^{w}(s_*,a_0)}_{\eqqcolon x} & \ge \underbrace{r_{h_*}^{w}(s_*,a_*)}_{\eqqcolon y}  - \frac{N_{v,w}}{\overline{S}} {\epsilon'} (H - h^*)\\
	& \qquad  + \left(1 -  \frac{N_{v,w}}{\overline{S}} \right) \epsilon' (H - h^*) \underbrace{\frac{1}{(H - h^*) \left(1 -  \frac{N_{v,w}}{\overline{S}} \right)}  \sum_{j : v_j \neq w_j}^{\overline{S}} \sum_{l=h_*+1}^H r_l^{w}(s'_j)}_{\eqqcolon z},
\end{align*}
where $N_{v,w} =\sum_{j=1}^{\overline{S}} \indic \{ v_j=w_j\}$. Notice that $z \in [-1,1]$. Let $\alpha = \frac{N_{v,w}}{\overline{S}}$, the Hausdorff distance can be lower bounded by:
\begin{align*}
	\mathcal{H}_{d^{\text{G}}}\left( \mathcal{M}_{\bm{v} \stackrel{\imath}{\leftarrow} v},\mathcal{M}_{\bm{v} \stackrel{\imath}{\leftarrow} w} \right)  & = \min_{\substack{x,y,z \in [-1,1] \\ y \ge x -\alpha \epsilon' (H-h^*)  + (1-\alpha)\epsilon' (H-h^*) z}} \max\left\{ |x-1|, |y - (1-\epsilon'(H-h^*))|, |z + 1| \right\} \\
	& \ge\min_{\substack{x,y \in [-1,1] \\ y \ge x -\alpha \epsilon' (H-h^*)}} \max\left\{ |x-1|, |y - (1-\epsilon'(H-h^*))| \right\} \\
	& = \frac{1}{2} (1-\alpha) \epsilon' (H-h^*) \ge \frac{\epsilon'}{32}(H-h^*) ,
\end{align*}
where the first inequality derives from the fact that to have a Hausdorff distance smaller than $1$, we must take $z < 0$ at least and the second inequality is obtained by recalling that $1-\alpha \ge \frac{1}{16}$ for the packing argument.

We enforce the following constraint on this quantity:
\begin{align}\label{eq:ineqEps}
\forall h^* \in \dsb{3,\overline{H}+2} \; : \; \frac{\epsilon'}{32}(H-h^*) \ge 2 \epsilon \implies \epsilon' \ge \max_{ h^* \in \dsb{3,\overline{H}+2}} \frac{\epsilon}{64(H-h^*)} = \frac{64\epsilon}{(H-\overline{H}-2)}.
\end{align}
Notice that $\epsilon' \le 1/2$ whenever $H \ge \overline{H}+ 130$.

\textbf{Step 3: Lower bounding Probability}~~Let us consider an $(\epsilon,\delta)$-correct algorithm $\mathfrak{A}$ that outputs the estimated feasible set $\widehat{\mathcal{R}}$. Thus, consider $\imath \in \mathcal{I}$ and $\bm{v} \in \overline{\mathcal{V}}^{\mathcal{I}}$, we can lower bound the error probability:
\begin{align*}
\delta & \ge \sup_{\text{all }\mathcal{M}\text{ \MDPRtext} \text{ and expert policies } \pi} \Prob_{(\mathcal{M},\pib),\mathfrak{A}} \left( \mathcal{H}_{d^{\text{G}}}\left( \mathcal{R}_{\mathcal{M}}, \widehat{\mathcal{R}} \right) \ge \epsilon \right) \\
& \ge \sup_{\mathcal{M} \in \mathbb{M}} \Prob_{(\mathcal{M},\pib),\mathfrak{A}} \left( \mathcal{H}_{d^{\text{G}}}\left( \mathcal{R}_{\mathcal{M}}, \widehat{\mathcal{R}} \right) \ge \epsilon \right) \\
& \ge \max_{w \in \overline{\mathcal{V}}}  \Prob_{(\mathcal{M}_{\bm{v} \stackrel{\imath}{\leftarrow} w },\pib),\mathfrak{A}} \left( \mathcal{H}_{d^{\text{G}}}\left( \mathcal{R}_{\mathcal{M}_{\bm{v} \stackrel{\imath}{\leftarrow} w }}, \widehat{\mathcal{R}} \right) \ge \epsilon \right).
\end{align*}
For every $\imath \in \mathcal{I}$ and $\bm{v} \in \overline{\mathcal{V}}^{\mathcal{I}}$, let us define the \emph{identification function}:
\begin{align*}
	\Psi_{\imath, \bm{v} } \coloneqq \argmin_{w \in \overline{\mathcal{V}}} \mathcal{H}_{d^{\text{G}}} \left( \mathcal{R}_{\mathcal{M}_{\bm{v} \stackrel{\imath}{\leftarrow} w }}, \widehat{\mathcal{R}} \right).
\end{align*}
Let $w \in \overline{\mathcal{V}}$. If $\Psi_{\imath, \bm{v} } = w$, then, $\mathcal{H}_{d^{\text{G}}}(\mathcal{R}_{\mathcal{M}_{\bm{v} \stackrel{\imath}{\leftarrow} \Psi_{\imath, \bm{v} } }}, \mathcal{R}_{\mathcal{M}_{\bm{v} \stackrel{\imath}{\leftarrow} w} }) =0$. Otherwise, if $\Psi_{\imath, \bm{v}} \neq w$, we have:
\begin{align*}
	\mathcal{H}_{d^{\text{G}}}(\mathcal{R}_{\mathcal{M}_{\bm{v} \stackrel{\imath}{\leftarrow} \Psi_{\imath, \bm{v} } }}, \mathcal{R}_{\mathcal{M}_{\bm{v} \stackrel{\imath}{\leftarrow} w} }) & \le \mathcal{H}_{d^{\text{G}}}(\mathcal{R}_{\mathcal{M}_{\bm{v} \stackrel{\imath}{\leftarrow} \Psi_{\imath, \bm{v} } }}, \widehat{\mathcal{R}} )+\mathcal{H}_{d^{\text{G}}}(\widehat{\mathcal{R}}, \mathcal{R}_{\mathcal{M}_{\bm{v} \stackrel{\imath}{\leftarrow} w} }) \le 2\mathcal{H}_{d^{\text{G}}}(\widehat{\mathcal{R}}, \mathcal{R}_{\mathcal{M}_{\bm{v} \stackrel{\imath}{\leftarrow} w} }) ,
\end{align*}
where the first inequality follows from triangular inequality and the second one from the definition of identification function $\Psi_{\imath,\bm{v}}$. From Equation~\eqref{eq:ineqEps}, we have that $\mathcal{H}_{d^{\text{G}}}\left(\mathcal{R}_{\mathcal{M}^{\Psi_{\imath}}_{\imath}}, \mathcal{R}_{\mathcal{M}_\imath^v} \right) \ge 2\epsilon$. Thus, it follows that $\mathcal{H}_{d^{\text{G}}}(\widehat{\mathcal{R}}, \mathcal{R}_{\mathcal{M}_{\bm{v} \stackrel{\imath}{\leftarrow} w} }) \ge \epsilon$. This implies the following inclusion of events for $w \in\overline{\mathcal{V}}$:
\begin{align*}
 \left\{\mathcal{H}_{d^{\text{G}}}(\widehat{\mathcal{R}}, \mathcal{R}_{\mathcal{M}_{\bm{v} \stackrel{\imath}{\leftarrow} w} })  \ge \epsilon \right\} \supseteq \left\{\Psi_{\imath, \bm{v} } \neq w \right\}.
\end{align*}
Thus, we can proceed by lower bounding the probability:
\begin{align*}
\max_{w \in \overline{\mathcal{V}}}  \Prob_{(\mathcal{M}_{\bm{v} \stackrel{\imath}{\leftarrow} w },\pib),\mathfrak{A}} \left( \mathcal{H}_{d^{\text{G}}}\left( \mathcal{R}_{\mathcal{M}_{\bm{v} \stackrel{\imath}{\leftarrow} w }}, \widehat{\mathcal{R}} \right) \ge \epsilon \right) & \ge \max_{w \in \overline{\mathcal{V}}}  \Prob_{(\mathcal{M}_{\bm{v} \stackrel{\imath}{\leftarrow} w },\pib),\mathfrak{A}} \left( \Psi_{\imath, \bm{v} } \neq w \right) \\
& \ge \frac{1}{|\mathcal{\overline{V}}|} \sum_{w \in \mathcal{\overline{V}}} \Prob_{(\mathcal{M}_{\bm{v} \stackrel{\imath}{\leftarrow} w },\pib),\mathfrak{A}} \left( \Psi_{\imath, \bm{v} } \neq w \right),
\end{align*}
where the second inequality follows from bounding the maximum of probability with the average. We can now apply the Fano's inequality (Theorem~\ref{thr:Fano}) with reference probability $\mathbb{P}_0 = \Prob_{(\mathcal{M}_{\bm{v} \stackrel{\imath}{\leftarrow} 0},\pib),\mathfrak{A}}$, $\mathbb{P}_w = \Prob_{(\mathcal{M}_{\bm{v} \stackrel{\imath}{\leftarrow} w },\pib),\mathfrak{A}}$, and $\mathcal{A}_w = \{\Psi_{\imath, \bm{v} } \neq w\}$:
\begin{align}\label{eq:FanoAppl}
\frac{1}{|\mathcal{\overline{V}}|} \sum_{w \in \mathcal{\overline{V}}} \Prob_{(\mathcal{M}_{\bm{v} \stackrel{\imath}{\leftarrow} w },\pib),\mathfrak{A}} \left( \Psi_{\imath, \bm{v} } \neq w \right) \ge 1 - \frac{1}{\log |\overline{\mathcal{V}}|} \left( \frac{1}{|\overline{\mathcal{V}}|} \sum_{w \in \overline{\mathcal{V}}} D_{\text{KL}} \left(\Prob_{(\mathcal{M}_{\bm{v} \stackrel{\imath}{\leftarrow} w },\pib),\mathfrak{A}},\Prob_{(\mathcal{M}_{\bm{v} \stackrel{\imath}{\leftarrow} 0},\pib),\mathfrak{A}}\right) - \log 2 \right).
\end{align}

\textbf{Step 4: KL-divergence Computation}~~
Let $\mathcal{M}$ be an instance, we denote with $\mathbb{P}_{\mathfrak{A},\mathcal{M},\pib}$ the joint probability distribution of all events realized by the execution of the algorithm in the \MDPRtext (the presence of $\pib$ is irrelevant as we assume it known):
\begin{align*}
\Prob_{(\mathcal{M},\pib),\mathfrak{A}} = \prod_{t=1}^\tau \rho_t(s_t, a_t, h_t |H_{t-1}) p_{h_t}(s_{t}'|s_t,a_t).
\end{align*}
where $H_{t-1} = (s_1,a_1,h_1,s_1',\dots,s_{t-1},a_{t-1},h_{t-1},s_{t-1}')$ is the history up to time $t-1$. Let $\imath \in \mathcal{I}$ and $v \in \mathcal{\overline{V}}$ and denote with $p^{\bm{v} \stackrel{\imath}{\leftarrow} 0}$ and $p^{\bm{v} \stackrel{\imath}{\leftarrow} w}$ the transition models associated with $\mathcal{M}_{\bm{v} \stackrel{\imath}{\leftarrow} 0}$ and $\mathcal{M}_{\bm{v} \stackrel{\imath}{\leftarrow} w }$. Let us now move to the KL-divergence and denoting $\imath = (s_*,a_*,h_*)$:
Thus, we have:
\begin{align*}
	 D_{\text{KL}} \left(\Prob_{(\mathcal{M}_{\bm{v} \stackrel{\imath}{\leftarrow} w },\pib),\mathfrak{A}},\Prob_{(\mathcal{M}_{\bm{v} \stackrel{\imath}{\leftarrow} 0},\pib),\mathfrak{A}}\right) & = \E_{(\mathcal{M}_{\bm{v} \stackrel{\imath}{\leftarrow} w },\pib),\mathfrak{A}} \left[ \sum_{t=1}^\tau D_{\text{KL}} \left(p_{h_t}^{\bm{v} \stackrel{\imath}{\leftarrow} w}(\cdot|s_t,a_t), p_{h_t}^{\bm{v} \stackrel{\imath}{\leftarrow} 0}(\cdot|s_t,a_t) \right) \right] \\
	& \le \E_{(\mathcal{M}_{\bm{v} \stackrel{\imath}{\leftarrow} w },\pib),\mathfrak{A}}  \left[ N^\tau_{h_*}(s_*,a_*) \right] D_{\text{KL}} \left(p_{h_*}^{\bm{v} \stackrel{\imath}{\leftarrow} w}(\cdot|s_*,a_*),p_{h_*}^{\bm{v} \stackrel{\imath}{\leftarrow} 0}(\cdot|s_*,a_*) \right) \\
	& \le  2 (\epsilon')^2\E_{(\mathcal{M}_{\bm{v} \stackrel{\imath}{\leftarrow} w },\pib),\mathfrak{A}}   \left[ N^\tau_{h_*}(s_*,a_*) \right],
	%
\end{align*}
having observed that the transition models differ in $\imath = (s_*,a_*,h_*)$ and defined $N^\tau_{h_*}(s_*,a_*) = \sum_{t=1}^\tau \indic\{(s_t,a_t,h_t) = (s_*,a_*,h_*)\}$ and the last passage is obtained by Lemma~\ref{lemma:KLBound} with $D=\overline{S}$. 
Plugging into Equation~\eqref{eq:FanoAppl}, we obtain:
\begin{align*}
\delta \ge \frac{1}{|\mathcal{\overline{V}}|} \sum_{w \in \mathcal{\overline{V}}} \Prob_{(\mathcal{M}_{\bm{v} \stackrel{\imath}{\leftarrow} w },\pib),\mathfrak{A}} \left( \Psi_{\imath, \bm{v} } \neq w \right)  \implies \frac{1}{|\mathcal{\overline{V}}|} \sum_{w \in \overline{\mathcal{V}}} \E_{(\mathcal{M}_{\bm{v} \stackrel{\imath}{\leftarrow} w },\pib),\mathfrak{A}}   \left[ N^\tau_{h_*}(s_*,a_*) \right] \ge \frac{(1-\delta) \log|\overline{\mathcal{V}}| - \log 2}{2(\epsilon')^2}.
\end{align*}
Since the derivation is carried out for every $\imath \in \mathcal{I}$ and $\bm{v} \in \overline{\mathcal{V}}^{\mathcal{I}}$, we can perform the summation over $\imath$ and the average over $\bm{v}$:
\begin{align*}
\sum_{\imath \in \mathcal{I}} \frac{1}{|\mathcal{\overline{V}}|^{|\mathcal{I}|}} \sum_{\bm{v} \in \overline{\mathcal{V}}^{\mathcal{I}}} \frac{1}{|\mathcal{\overline{V}}|} \sum_{w \in \overline{\mathcal{V}}} \E_{(\mathcal{M}_{\bm{v} \stackrel{\imath}{\leftarrow} w },\pib),\mathfrak{A}}   \left[ N^\tau_{h_*}(s_*,a_*) \right] & =  \frac{1}{|\mathcal{\overline{V}}|^{|\mathcal{I}|}} \sum_{\bm{v} \in \overline{\mathcal{V}}^{\mathcal{I}}}\sum_{\imath \in \mathcal{I}} \E_{(\mathcal{M}_{\bm{v}},\pib),\mathfrak{A}}   \left[ N^\tau_{h_*}(s_*,a_*) \right] \\
& \ge \overline{SAH}\frac{(1-\delta) \log|\overline{\mathcal{V}}| - \log 2}{2(\epsilon')^2}.
\end{align*}
Notice that we get a guarantee on a mean under the uniform distribution of the instances of the sample complexity. Thus, there must exist one $\bm{v}^{\text{hard}} \in \mathcal{\overline{V}}$ such that:
\begin{align*}
\E_{(\mathcal{M}_{\bm{v}^{\text{hard}},\pib}),\mathfrak{A}}  [{\tau}] \ge \sum_{\imath \in \mathcal{I}}  \E_{(\mathcal{M}_{\bm{v}^{\text{hard}},\pib} ),\mathfrak{A} } \left[ N^\tau_{h_*}(s_*,a_*) \right] \ge \overline{SAH}\frac{(1-\delta) \log|\overline{\mathcal{V}}| - \log 2}{2(\epsilon')^2}.
\end{align*}
Then, we select $\delta \le 1/2$, recall that $|\overline{\mathcal{V}}| \ge 2^{ \overline{S}/5}$, we get:
\begin{align*}
\E_{(\mathcal{M}_{\bm{v}^{\text{hard}},\pib}),\mathfrak{A}}  [{\tau}] \ge \overline{SAH}\frac{ \overline{S}/10 - \log 2}{2(\epsilon')^2} =   \overline{SAH}\frac{ (H - \overline{H} - 2)^2 (\overline{S}/10 - \log 2)}{8912\epsilon^2} 
\end{align*}
The number of states is given by $S = |\Ss| = 2\overline{S}+2$, the number of actions is given by $A = |\As| = \overline{A}+1$. Let us first consider the time-homogeneous case, \ie $\overline{H} = 1$, for $S \ge 16$, $A \ge 2$, $H \ge 130$, we have:
\begin{align*}
\E_{(\mathcal{M}_{\bm{v}^{\text{hard}},\pib}),\mathfrak{A}}  [{\tau}] \ge  \frac{S^2AH^2}{2560 \epsilon^2}.
\end{align*}
For the time inhomogeneous case, we select $\overline{H} = H/2$, to get, under the same conditions:
\begin{align*}
	\E_{(\mathcal{M}_{\bm{v}^{\text{hard}},\pib}),\mathfrak{A}}  [{\tau}] \ge  \frac{S^2AH^3}{5120\epsilon^2}.
\end{align*}
\end{proof}

\clearpage

\subsection{Proofs of Section~\ref{sec:algorithm}}

\scUSIRL*

\begin{proof}
We start with the case in which the transition model is time-inhomogeneous. In this case, we introduce the following good event:
\begin{align*}
\mathcal{E}\coloneqq \left\{\forall t \in \Nat,\,\forall (s,a,h) \in \SAHs :\,D_{\text{KL}}\Bigl(\widehat{p}_h^t(\cdot|s,a),p_h(\cdot|s,a)\Bigl)\le\frac{\beta\bigl(n_h^t(s,a),\delta\bigl)}{n_h^t(s,a)}\right\}   , 
\end{align*}
where $p_h$ is the true transition model and $\widehat{p}_h^t$ is its estimate via Equation~\eqref{eq:empiricalMDP} at time $t$. Thanks to Lemma~\ref{lemma:bound_good_event}, we have that $\Prob_{(\mathcal{M},\pi^E),\mathfrak{A}}(\mathcal{E}) \ge 1-\delta$. Thus, under the good event $\mathcal{E}$, we apply Theorem~\ref{thr:errorProp2}:
\begin{align*}
	\mathcal{H}_{d^{\text{G}}}(\mathcal{R},\widehat{\mathcal{R}}^\tau) & \le \frac{ 2 \rho^{\text{G}}((\mathcal{M},\pi^E),(\widehat{\mathcal{M}}^t,\widehat{\pi}^{E,t}))}{1+\rho^{\text{G}}((\mathcal{M},\pi^E),(\widehat{\mathcal{M}}^t,\widehat{\pi}^{E,t}))} \\
	& \le 2 \rho^{\text{G}}((\mathcal{M},\pi^E),(\widehat{\mathcal{M}},\widehat{\pi}^E)) \\
	& \le  2 \max_{(s,a,h)\in \SAHs} (H-h+1) \left( \left| \indic_{\{\pi^E_h(a|s) = 0\}} - \indic_{\{\widehat{\pi}^E_h(a|s) = 0\}} \right| + \left\| p_h(\cdot|s,a) - \widehat{p}_h(\cdot|s,a) \right\|_1  \right) \\
	& \le 2 \max_{(s,a,h)\in \SAHs} (H-h+1) \left\| p_h(\cdot|s,a) - \widehat{p}_h(\cdot|s,a) \right\|_1  \\
	& \le 2\sqrt{2} \max_{(s,a,h)\in \SAHs} (H-h+1)\sqrt{D_{\text{KL}}\Bigl(\widehat{p}_h^t(\cdot|s,a),p_h(\cdot|s,a)\Bigl)} = \max_{(s,a,h)\in \SAHs} \mathcal{C}_h^t(s,a),
\end{align*}
where we exploited the fact that the expert's policy is known in the last but one passage and used Pinsker's inequality in the last passage. When the \texttt{US-IRL} stops we have that $\max_{(s,a,h)\in \SAHs} \mathcal{C}_h^t(s,a) \le \epsilon$ and, consequently, for all $(s,a,h) \in \SAHs$ we have:
\begin{align*}
\max_{(s,a,h)\in \SAHs} \mathcal{C}_h^t(s,a) = \max\limits_{(s,a,h)\in \mathcal{S}\times\mathcal{A}\times\dsb{H}} 2 \sqrt{2}(H-h+1)\sqrt{\frac{\beta\bigl(n_h^t(s,a),\delta\bigl)}{n_h^t(s,a)}} \le \epsilon.
\end{align*}
Thus, the algorithm stops at the smallest $t$ such that:
\begin{align*}
\implies n_h^t(s,a)  \ge \frac{8(H-h+1)^2\beta\bigl(n_h^t(s,a),\delta\bigl)}{\epsilon^2}  = \frac{8(H-h+1)^2}{\epsilon^2} \left(  \log (SAH/\delta) + (S-1) \log (e (1+n_h^t(s,a)/(S-1)) \right).
\end{align*}
Thus, by applying Lemma 15 of~\cite{KaufmannMDJLV21}, we obtain:
\begin{align*}
	 n_h^\tau(s,a) \le  \frac{8(H-h+1)^2}{\epsilon^2} \left(\log \left(\frac{SAH}{\delta}\right) + (S-1) \log \left( \frac{8e(H-h+1)^2}{(S-1)\epsilon^2} \left(  \log \left(\frac{SAH}{\delta}\right) +  4e\right) \right)  \right).
\end{align*}
By recalling that $\tau = SAH n_h^\tau(s,a)$, and bounding $H-h+1\le H$, we obtain:
\begin{align*}
	\tau \le \frac{8 H^3 SA}{\epsilon^2}  \left(\log \left(\frac{SAH}{\delta}\right) + (S-1) \log \left(\frac{e}{S-1}+ \frac{8 e H^2}{(S-1)\epsilon^2} \left(  \log \left(\frac{SAH}{\delta}\right) +  4e\right) \right)  \right).
\end{align*}

If the transition model is time-homogeneous, we suppress the subscript $h$ and the algorithm \texttt{US-IRL}, will merge together all the samples collected at different stages $h$. Let us define $n^t(s,a) = \sum_{h=1}^H n^t_h(s,a)$ and $n^t(s,a,s') = \sum_{h=1}^H n^t_h(s,a,s')$. Now the transition model will be estimated straightforwardly as follows:
\begin{align*}
\widehat{p}^t(s'|s,a) \coloneqq \begin{cases}
									\frac{n^t(s,a,s')}{n^t(s,a)} & \text{if } n^t(s,a) > 0 \\
									\frac{1}{S} & \text{otherwise}								\end{cases}.
\end{align*}
Let us consider now the following good event:
\begin{align*}
\widetilde{\mathcal{E}} \coloneqq \left\{\forall t \in \Nat,\,\forall (s,a) \in \SAs :\,D_{\text{KL}}\Bigl(\widehat{p}^t(\cdot|s,a),p(\cdot|s,a)\Bigl)\le\frac{\widetilde{\beta}\bigl(n^t(s,a),\delta\bigl)}{n^t(s,a)}\right\}.
\end{align*}
Thanks to Lemma~\ref{lemma:bound_good_event}, we have that $\Prob_{(\mathcal{M},\pi^E),\mathfrak{A}}(\widetilde{\mathcal{E}}) \ge 1-\delta$. Thus, in such a case, thanks to Theorem~\ref{thr:errorProp2}, we have:
\begin{align*}
\mathcal{H}_{d^{\text{G}}}(\mathcal{R},\widehat{\mathcal{R}}^\tau) \le 2\sqrt{2} \max_{(s,a,h)\in \SAHs} (H-h+1)\sqrt{D_{\text{KL}}\Bigl(\widehat{p}_h^t(\cdot|s,a),p_h(\cdot|s,a)\Bigl)} = \max_{(s,a,h)\in \SAHs} \widetilde{\mathcal{C}}_h^t(s,a).
\end{align*}
The algorithm, therefore, stops as soon as:
\begin{align*}
\max_{(s,a,h)\in \SAHs} \widetilde{\mathcal{C}}_h^t(s,a) = \max_{(s,a,h)\in \SAHs} 2\sqrt{2} (H-h+1)\sqrt{\frac{\widetilde{\beta}\bigl(n^t(s,a),\delta\bigl)}{n^t(s,a)}} = \max_{(s,a)\in \SAs} 2\sqrt{2} H \sqrt{\frac{\widetilde{\beta}\bigl(n^t(s,a),\delta\bigl)}{n^t(s,a)}} \le \epsilon.
\end{align*}
This allows us to compute the maximum value of $n^\tau(s,a)$:
\begin{align*}
n^\tau(s,a) \le \frac{8H^2}{\epsilon^2} \left(\log \left(\frac{SA}{\delta}\right) + (S-1) \log \left(\frac{e}{S-1}+ \frac{8eH^2}{(S-1)\epsilon^2} \left(  \log \left(\frac{SA}{\delta}\right) +  4e\right) \right)  \right).
\end{align*}
Recalling that $\tau = SA n^\tau(s,a)$, we obtain:
\begin{align*}
	\tau \le \frac{8 H^2 SA}{\epsilon^2}  \left(\log \left(\frac{SA}{\delta}\right) + (S-1) \log \left( \frac{8 e H^2}{(S-1)\epsilon^2} \left(  \log \left(\frac{SA}{\delta}\right) +  4e\right) \right)  \right).
\end{align*}
\end{proof}

\begin{lemma}\label{lemma:bound_good_event}
The following statements hold:
\begin{itemize}
	\item for $\beta\bigl(n,\delta\bigl)=\log(SAH/\delta)+(S-1)\log\bigl(e(1+n/(S-1)\bigl)$, we have that $\mathbb{P}(\mathcal{E})\ge 1-\delta$;
	\item for $\widetilde{\beta}\bigl(n,\delta\bigl)=\log(SA/\delta)+(S-1)\log\bigl(e(1+n/(S-1)\bigl)$, we have that $\mathbb{P}(\widetilde{\mathcal{E}})\ge 1-\delta$.
\end{itemize}
\end{lemma}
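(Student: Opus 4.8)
The plan is to derive both claims from a single time-uniform (anytime) concentration inequality for the KL divergence of categorical distributions, followed by a union bound over the finitely many state-action(-stage) pairs. The key tool is Proposition~1 of \citet{JonssonKMDLV20}: for a fixed distribution $q$ over $S$ outcomes and its empirical estimate $\widehat{q}_n$ built from $n$ i.i.d.\ samples, for every $\delta' \in (0,1)$,
\begin{align*}
\Prob\left( \exists n \ge 1 \,:\, n\, D_{\text{KL}}(\widehat{q}_n, q) > \log(1/\delta') + (S-1) \log\left( e(1 + n/(S-1)) \right) \right) \le \delta'.
\end{align*}
The crucial feature is that this bound holds uniformly over all sample sizes $n$, which is exactly what is needed since the stopping time $\tau$, and hence the counts $n_h^t(s,a)$, is random.

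\textbf{Time-inhomogeneous case.}~~First I would fix a triple $(s,a,h) \in \SAHs$. For this triple, the observed next states are i.i.d.\ draws from $p_h(\cdot|s,a)$, a distribution over the $S$ states, and $\widehat{p}_h^t(\cdot|s,a)$ is precisely the corresponding empirical distribution after $n_h^t(s,a)$ such draws. Applying the inequality above with $q = p_h(\cdot|s,a)$ and $\delta' = \delta/(SAH)$ yields, with probability at least $1 - \delta/(SAH)$ and for all $t$ simultaneously, $n_h^t(s,a)\, D_{\text{KL}}(\widehat{p}_h^t(\cdot|s,a), p_h(\cdot|s,a)) \le \beta(n_h^t(s,a),\delta)$, since $\log(SAH/\delta) + (S-1)\log(e(1+n/(S-1))) = \beta(n,\delta)$. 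A union bound over the $SAH$ triples then gives $\Prob(\mathcal{E}) \ge 1 - \delta$. The degenerate case $n_h^t(s,a) = 0$ is handled automatically: the right-hand side $\beta/n$ is then $+\infty$ and the defining inequality of $\mathcal{E}$ holds trivially.

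\textbf{Time-homogeneous case.}~~The argument is identical, except that for each $(s,a) \in \SAs$ the algorithm merges the samples collected at all stages $h \in \dsb{H}$. Because the transition model is time-homogeneous, these merged samples are i.i.d.\ draws from the common distribution $p(\cdot|s,a)$, so $\widehat{p}^t(\cdot|s,a)$ is again a genuine empirical distribution over $S$ outcomes to which the concentration applies. Since there are now only $SA$ pairs, I would set $\delta' = \delta/(SA)$ and union bound over them, obtaining the exponent $\log(SA/\delta)$ that defines $\widetilde{\beta}$, hence $\Prob(\widetilde{\mathcal{E}}) \ge 1-\delta$.

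The main subtlety is the anytime nature of the concentration: a naive per-$t$ Hoeffding- or Weissman-type bound would require an extra union bound over $t$ and would not produce the clean $(S-1)\log(e(1+n/(S-1)))$ term (this is precisely the point noted in the footnote of Section~\ref{sec:algorithm}). The strength of Proposition~1 of \citet{JonssonKMDLV20} is that it already controls the entire trajectory of empirical distributions through a mixture-of-martingales argument, so no further union over time is needed; one only unions over the finitely many state-action(-stage) pairs.
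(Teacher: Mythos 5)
Your proof is correct and follows essentially the same route as the paper's: a union bound over the $SAH$ (resp.\ $SA$) state-action(-stage) pairs combined with the time-uniform KL concentration of Proposition~1 of \citet{JonssonKMDLV20}, with $\delta' = \delta/(SAH)$ (resp.\ $\delta/(SA)$) yielding exactly $\beta$ (resp.\ $\widetilde{\beta}$). Your additional remarks on the $n_h^t(s,a)=0$ case and on why the anytime bound obviates a further union over $t$ are accurate but not needed beyond what the paper states.
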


\begin{proof}
Let us start with the first statement.
Similarly to Lemma 10 of ~\cite{KaufmannMDJLV21}, we apply first a union bound and then technical Proposition 1 of~\cite{JonssonKMDLV20} (also reported as Lemma~\ref{lemma:jonsson} for completeness) to concentrate the KL-divergence:
\begin{align*}
\mathbb{P}(\mathcal{E}^c)=&\mathbb{P}\Biggl(\exists t \in \Nat,\,\exists (s,a,h) \in \SAHs:\,D_{\text{KL}}\Bigl(\widehat{p}_h^t(\cdot|s,a),p_h(\cdot|s,a)\Bigl)\ge\frac{\beta\bigl(n_h^t(s,a),\delta\bigl)}{n_h^t(s,a)}\Biggl)\\
\le&\sum\limits_{h\in\dsb{H}}\sum\limits_{(s,a)\in\mathcal{S}\times\mathcal{A}}\mathbb{P}\Biggl(\exists t \in \Nat:\,D_{\text{KL}}\Bigl(\widehat{p}_h^t(\cdot|s,a),p_h(\cdot|s,a)\Bigl)\ge\frac{\beta\bigl(n_h^t(s,a),\delta\bigl)}{n_h^t(s,a)}\Biggl)\\
\le&\sum\limits_{h\in\dsb{H}}\sum\limits_{(s,a)\in\mathcal{S}\times\mathcal{A}}\frac{\delta}{SAH}=\delta.
\end{align*}
The proof of the second statement is analogous having simply observed that the union bound has to be performed over $\SAs$ only.
\end{proof}

\section{Examples of Section~\ref{sec:nonReg}}\label{apx:examples}
In this appendix, we provide a detailed derivations of the examples presented in Section~\ref{sec:nonReg}.

\exampleA*
\begin{proof}
For the \MDPRtext $\mathcal{M}$, in order to make $\pi^E_1(s_0)=a_1$ optimal, we have to enforce:
\begin{align*}
	& r_1(s_0) + \frac{2+\epsilon}{4} r_2(s_+) + \frac{2-\epsilon}{4} r_2(s_-) \ge r_1(s_0) + \frac{1}{2} r_2(s_+) + \frac{1}{2} r_2(s_-) \\
	& \implies r_2(s_+) \ge r_2(s_-).
\end{align*}
Similarly, to make $\widehat{\pi}^E_1(s_0)=a_1$, we have for $\widehat{\mathcal{M}}$:
\begin{align*}
	& \widehat{r}_1(s_0) + \frac{2-\epsilon}{4} \widehat{r}_2(s_+) + \frac{2+\epsilon}{4} \widehat{r}_2(s_-) \ge \widehat{r}_1(s_0) + \frac{1}{2} \widehat{r}_2(s_+) + \frac{1}{2} \widehat{r}_2(s_-) \\
	& \implies \widehat{r}_2(s_+) \le \widehat{r}_2(s_-).
\end{align*}
Thus, suppose, we set $r_2(s_-) = 1$ and $r_2(s_+) =-1$, we have:
\begin{align*}
\mathcal{H}_{d^{\text{G}}}(\mathcal{R}_{\text{state}},\widehat{\mathcal{R}}_{\text{state}}) \ge \min_{\substack{\widehat{r}_2(s_-), \widehat{r}_2(s_+) \in [-1,1] \\ \widehat{r}_2(s_+) \le \widehat{r}_2(s_-)}} \max \left\{|1-\widehat{r}_2(s_-) |,|-1-\widehat{r}_2(s_+) | \right\} = 1.
\end{align*}
\end{proof}

\exampleB*
\begin{proof}
Consider the \MDPRtext $\mathcal{M}$ and we set $r(s_0,a_1) = 1$, $r(s_0,a_2) = 1 - \epsilon/12$, and $r(s_1,a)=1/2$ for $a \in \{a_1,a_2\}$. We immediately observe that $\pi^E$ is optimal since for $h=2$, $r(s_0,a_1) \ge r(s_0,a_2)$ and for $h=1$:
\begin{align*}
	&r(s_0,a_2) + \frac{2+\epsilon}{4} r(s_0,a_1) + \frac{2-\epsilon}{4} r(s_1,a) \ge r(s_0,a_1) + \frac{1}{2}r(s_0,a_1) + \frac{1}{2}r(s_1,a) \\
	&\iff r(s_0,a_2) + \left(\frac{\epsilon}{4}-1\right) r(s_0,a_1) - \frac{\epsilon}{4} r(s_1,a) \ge 0\\
	& \iff 1 - \frac{\epsilon}{12} + \frac{\epsilon}{4} -1 - \frac{\epsilon}{8} \ge 0.
\end{align*}
Consider now the alternative \MDPRtext $\widehat{\mathcal{M}}$, we have to enforce the following two conditions:
\begin{align}
	& \widehat{r}(s_0,a_1) \ge \widehat{r}(s_0,a_2), \label{eq:p002}\\
	& \widehat{r}(s_0,a_2) + \frac{2-\epsilon}{4} \widehat{r}(s_0,a_1) + \frac{2+\epsilon}{4} \widehat{r}(s_1,a) \ge \widehat{r}(s_0,a_1) + \frac{1}{2}\widehat{r}(s_0,a_1) + \frac{1}{2}\widehat{r}(s_1,a) \notag\\
	& \iff \widehat{r}(s_0,a_2) - \left(\frac{\epsilon}{4} + 1 \right)\widehat{r}(s_0,a_1) + \frac{\epsilon}{4} \widehat{r}(s_1,a) \ge 0. \label{eq:p001}
\end{align}
The way of enforcing Equation~\eqref{eq:p002} that is less constraining for Equation~\eqref{eq:p001} is setting $\widehat{r}(s_0,a_1) = \widehat{r}(s_0,a_2)$, to get:
\begin{align*}
- \frac{\epsilon}{4} \widehat{r}(s_0,a_1) + \frac{\epsilon}{4} \widehat{r}(s_1,a) \ge 0 \iff   \widehat{r}(s_1,a) \ge \widehat{r}(s_0,a_1) .
\end{align*}
This implies:
\begin{align*}
	\mathcal{H}_{d^{\text{G}}}(\mathcal{R}_{\text{hom}},\widehat{\mathcal{R}}_{\text{hom}}) \ge \min_{\substack{\widehat{r}(s_1,a) , \widehat{r}(s_0,a_1) \in [-1,1]\\ \widehat{r}(s_1,a) \ge \widehat{r}(s_0,a_1)}} \max\left\{\left| 1- \widehat{r}(s_0,a_1) \right|,  \left| \frac{1}{2}- \widehat{r}(s_1,a) \right| \right\} \ge \frac{1}{4},
\end{align*}
by setting $\widehat{r}(s_0,a_1)=\widehat{r}(s_1,a)=1/4$.
\end{proof}

\exampleC*
\begin{proof}
	Concerning the \MDPRtext $\mathcal{M}$, we observe that by setting $r_1(s_0,a_1)=1$, $r_1(s_0,a_2)=-1$, and $r_h(s_+,a)=-r_h(s_-, a)=1$ for $a \in \{a_1,a_2\}$ and $h \in \dsb{2,H}$, the policy $\pi^E$ is optimal. In particular, in state-stage pair $(s_0,1)$ the suboptimality gap is given by $\beta_1 = 2 + 2\epsilon (H-1)$. To enforce the optimality of $\widehat{\pi}^E = \pi^E$ in the \MDPRtext $\widehat{\mathcal{M}}$, we have:
	\begin{align*}
	& \widehat{r}_1(s_0,a_1) +  \sum_{h=2}^H \frac{1}{2} \widehat{r}_h(s_+,a_1) + \frac{1}{2} \widehat{r}_h(s_-,a_1) \ge \widehat{r}_1(s_0,a_2) +  \sum_{h=2}^H \frac{1}{2} \widehat{r}_h(s_+,a_1) + \frac{1}{2} \widehat{r}_h(s_-,a_1) + \beta_1\\
& \iff \widehat{r}_1(s_0,a_1)- \widehat{r}_1(s_0,a_2) \ge \beta_1.
	\end{align*}
	Thus, if $\beta_1 \ge 2$, we have that the feasible set $\widehat{\mathcal{R}}_{\beta\text{-sep}}$ is empty. Thus, we select $H \ge 1+1/\epsilon$ to have $\beta_1 \ge 4$.
\end{proof}

\section{Unknown Expert's Policy $\pi^E$}\label{apx:notKnownPi}
In this appendix, we extend the lower bounds and the algorithm for the case in which the expert's policy is unknown. Clearly, if the expert's policy is deterministic, under the generative model setting, its estimation is trivial as it suffices to query every state and stage (resp. state) exactly once for time-inhomogeneous (resp. time-homogeneous) policies, leading to $\E_{(\mathcal{M},\pi^E),\mathfrak{A}} \left[\tau\right] = HS$ (resp. $\E_{(\mathcal{M},\pi^E),\mathfrak{A}} \left[\tau\right] = S$). Thus, we consider a more general setting in which the expert's policy can be stochastic (still being optimal). Specifically, we consider the following assumption.

\begin{ass}\label{ass:policy}
There exists a known constant $\pi_{\min} \in (0,1]$ such that every action played by the expert's policy $\pi^E$ is played with at least probability $\pi_{\min}$:
\begin{align*}
	\forall (s,a,h) \in \SAHs \,:\, \pi^E_h(a|s) \in \{0\} \cup [\pi_{\min},1].
\end{align*}
\end{ass}

Intuitively, Assumption~\ref{ass:policy} formalizes a form of identifiability for the policy. As already mentioned in Section~\ref{sec:regul}, what matters for learning the feasible reward set is whether an action is played by the agent (not the corresponding probability). Assumption~\ref{ass:policy} enforces that every optimal action must be played with a minimum (known) non-null probability $\pi_{\min}$. We shall show that if this assumption is violated, the problem becomes non-learnable.

\subsection{Lower Bound}
The following result provides a lower bound for learning the feasible reward set according to the PAC requirement of Definition~\eqref{defi:pac} when the expert's policy is unknown, but the transition model is known. Clearly, one can combine this result with the ones of Section~\ref{sec:lb} to address the setting in which both the expert's policy and the transition model are unknown.

\begin{restatable}{thr}{rInfLB1}\label{thr:rInfLBpol}
Let $\mathfrak{A} = (\mu,\tau)$ be an $(\epsilon,\delta)$-PAC algorithm for $d^{\text{G}}$-IRL. Then, there exists an IRL problem $(\mathcal{M},\pi^E)$ where $\pi^E$ fulfills Assumption~\ref{ass:policy} such that, if $\epsilon \le 1/2$, $\delta < 1/16$, $S\ge 7$, $A\ge 2$, and $H \ge 3$, the number of samples $N$ is lower bounded in expectation by:
\begin{itemize}[topsep=0pt, noitemsep, leftmargin=*]
	\item if the expert's policy $\pi^E$ is time-inhomogeneous:
		\begin{align*}
		\E_{(\mathcal{M},\pi^E),\mathfrak{A}} \left[\tau\right] \ge \frac{SH}{8\log \frac{1}{1-\pi_{\min}} } \log \left(\frac{1}{\delta}\right).
	\end{align*}
	\item if the expert's policy $\pi^E$ is time-homogeneous:
 	\begin{align*}
\E_{(\mathcal{M},\pi^E),\mathfrak{A}} \left[\tau\right] \ge \frac{S}{4\log \frac{1}{1-\pi_{\min}} } \log \left(\frac{1}{\delta} \right);
	\end{align*}

\end{itemize}
\end{restatable}

Before presenting the proof, let us comment the result. We observe that when Assumption~\ref{ass:policy} is violated, \ie $\pi_{\min} \rightarrow 0$, the sample complexity lower bound degenerates to infinity, proving that the problem become non-learnable.

\begin{proof}
\textbf{Step 1: Instances Construction}~~
The hard \MDPRtext instances are depicted in Figure~\ref{fig:hardBB3} in a semi-formal way. The state space is given by $\Ss=\{s_{\text{start}}, s_{\text{root}}, s_1,\dots, s_{\overline{S}}, s_{\text{sink}}\}$ and the action space is given by $\As = \{a_0,a_1,\dots,a_{\overline{A}}\}$. The transition model is described below and the horizon is $H \ge 3$. We introduce the constant $\overline{H} \in \dsb{H}$, whose value will be chosen later. Let us observe, for now, that if $\overline{H} = 1$, the transition model is time-homogeneous.

The agent begins in state $s_{\text{start}}$, where every action has the same effect. Specifically, if the stage $h < \overline{H}$, then there is  probability $1/2$ to remain in $s_{\text{start}}$ and a probability $1/2$ to transition to $s_{\text{root}}$. Instead, if $h \ge \overline{H}$, the state transitions to $s_{\text{root}}$ deterministically. From state $s_{\text{root}}$, every action has the same effect and the state transitions with equal probability $1/\overline{S}$ to a state $s_i$ with $i \in \dsb{\overline{S}}$. In all states $s_i$, apart from a specific one, i.e., state $s_*$, the expert's policy plays action $a_0$ deterministically, \ie $\pi^E_h(a_0|s_i)=1$ and the state transitions deterministically to $s_{\text{sink}}$. In state $s_*$ the expert's policy plays $a_0$ as the other ones if the stage $h \neq h_*$, where $h_* \in \dsb{H}$ is a predefined stage. If, instead, $h=h_*$, the expert's action plays $a_0$ w.p. $1-\pi_{\min}$ and a specific action $a_*$ w.p. $\pi_{\min} \in [0,1/2]$. Then, the transition is deterministic to state $s_{\text{sink}}$.
Notice that, having fixed $\overline{H}$, the possible values of $h^*$ are $\{3,\dots,2+\overline{H}\}$. State $s_{\text{sink}}$ is an absorbing state. 

Let us consider the base instance ${\pi}_{0}$ in which the expert's policy always plays action $a_0$ deterministically.\footnote{In this construction, the \MDPRtext does not change across the instances, but what changes is the expert's policy. Thus, we parametrize the instances through the policy rather than the \MDPRtext.} Additionally, by varying the pair $\ell \coloneqq (s_*,h_*) \in \{s_1,\dots,s_{\overline{S}}\} \times \dsb{3,\overline{H}+2} \eqqcolon \mathcal{J}$, we can construct the  class of instances denoted by $\mathbb{M} = \{ \pi_{\ell} : \ell \in \{0\} \cup \mathcal{J} \}$.

\begin{figure}[h!]
\centering
\begin{tikzpicture}[node distance=4cm]
\node[state] (s0) {$s_{\text{start}}$};
\node[state, below=2cm of s0] (s00) {$s_{\text{root}}$};

\node[state, below left of = s00, draw=none] (s1) {$\dots$};
\node[state, below right of = s00, draw=none] (sS) {$\dots$};
\node[state, below=2cm of s00, blue] (sj) {$s_*$};
\node[state, left of = s1] (s1) {$s_1$};
\node[state, right of = sS] (sS) {$s_{\overline{S}}$};

\node[state, below=3cm of sj] (sink) {$s_{\text{sink}}$};

\draw (sink) edge[->, loop below, double] node{} (sink);

\draw (s0) edge[->, loop above, solid, double] node{$h < \overline{H}$ w.p. $\frac{1}{2}$ } (s0);
\draw (s0) edge[->, solid, right, double] node{w.p. $\frac{1}{\overline{S}}$ or  $h \ge \overline{H}$} (s00);
\draw (s00) edge[->, solid, right, double] node{w.p. $\frac{1}{\overline{S}}$} (sj);
\draw (s00) edge[->, solid, above left, double] node{w.p. $\frac{1}{\overline{S}}$} (s1);
\draw (s00) edge[->, solid, above right, double] node{regardless the action w.p. $\frac{1}{\overline{S}}$} (sS);

\draw (sS) edge[->, solid, below right,] node{play $a_0$ w.p. $1$} (sink);

\draw (s1) edge[->, solid, below left] node{play $a_0$ w.p. $1$} (sink);

\draw (sj) edge[->, solid, right, bend left, blue, very thick] node[pos=0.2]{$h=h_*$ play $a_*$ w.p. $\pi_{\min}$} (sink);

\draw (sj) edge[->, solid, left, bend right, densely dashed, very thick] node[pos=0.2]{$h=h_*$ play w.p. $1-\pi_{\min}$} (sink);

\end{tikzpicture}	
\caption{Semi-formal representation of the the hard instances \MDPRtext used in the proof of Theorem~\ref{thr:rInfLBpol}.}\label{fig:hardBB3}
\end{figure}

\textbf{Step 2: Feasible Set Computation}~~
Let us consider an instance $\pi_\ell \in \mathbb{M}$, we now seek to provide a lower bound to the Hausdorff distance $\mathcal{H}_{d^{\text{G}}}\left( \mathcal{R}_{\pi_0}, \mathcal{R}_{\pi_{\ell}}\right)$. To this end, we focus on the pair $\ell = (s_*,h_*)$ and we enforce the convenience of both actions $a_0$ and $a_*$ over the other actions. Since both actions are played with non-zero probability by the expert's policy, their value function must be the same. Let us denote with $r^{\ell} \in \mathcal{R}_{\pi_{\ell}}$, we must have for all $a_j \not\in \{a_0,a_*\}$:
\begin{align*}
	&  r_{h_*}^{\ell}(s_*,a_0)  + \sum_{l=h_*+1}^H r_l^{\ell}(s_{\text{sink}}) \ge r_{h_*}^{\ell}(s_*,a_j)  + \sum_{l=h_*+1}^H r_l^{\ell}(s_{\text{sink}}) \\
	& \implies r_{h_*}^{\ell}(s_*,a_0) \ge r_{h_*}^{\ell}(s_*,a_j),\\
	& r_{h_*}^{\ell}(s_*,a_0)  + \sum_{l=h_*+1}^H r_l^{\ell}(s_{\text{sink}}) = r_{h_*}^{\ell}(s_*,a_*)  + \sum_{l=h_*+1}^H r_l^{\ell}(s_{\text{sink}}) \\
	& \implies r_{h_*}^{\ell}(s_*,a_0) = r_{h_*}^{\ell}(s_*,a_*) .
\end{align*}
Consider now the base instance $\pi_{0}$ and denote with $r^{0} \in \mathcal{R}_{\pi_0}$. Here we have to enforce the convenience of action $a_0$ over all the others, including $a_*$:
\begin{align*}
	&  r_{h_*}^{0}(s_*,a_0)  + \sum_{l=h_*+1}^H r_l^{\ell}(s_{\text{sink}}) \ge r_{h_*}^{0}(s_*,a_j)  + \sum_{l=h_*+1}^H r_l^{\ell}(s_{\text{sink}}) \\
	& \implies r_{h_*}^{0}(s_*,a_0) \ge r_{h_*}^{0}(s_*,a_j),\\
	& r_{h_*}^{0}(s_*,a_0)  + \sum_{l=h_*+1}^H r_l^{0}(s_{\text{sink}}) \ge r_{h_*}^{0}(s_*,a_*)  + \sum_{l=h_*+1}^H r_l^{0}(s_{\text{sink}}) \\
	& \implies r_{h_*}^{0}(s_*,a_0) \ge r_{h_*}^{0}(s_*,a_*) .
\end{align*}
In order to lower bound the Hausdorff distance, we perform a valid assignment of the rewards for the base instance:
\begin{align*}
	 r_{h_*}^{0}(s_*,a_0) = 1, \, r_{h_*}^{0}(s_*,a_*) = -1, \, r_{h_*}^{0}(s_*,a_j) = -1.
\end{align*}
Thus, the Hausdorff distance can be bounded as follows, having renamed, for convenience $x = r_{h_*}^{\ell}(s_*,a_0)$ and $y =  r_{h_*}^{\ell}(s_*,a_*)$:
\begin{align*}
	\mathcal{H}_{d^{\text{G}}}(\mathcal{R}_{\pi_0}, \mathcal{R}_{\pi_{\ell}}) \ge \min_{\substack{x,y \in [-1,1] \\ x = y}} \max\{|x-1|, |y+1|\} = 1.
\end{align*}

\textbf{Step 3: Lower bounding Probability}~~
Let us consider an $(\epsilon,\delta)$-correct algorithm $\mathfrak{A}$ that outputs the estimated feasible set $\widehat{\mathcal{R}}$. Thus, for every $\imath \in \mathcal{J}$, we can lower bound the error probability:
\begin{align*}
\delta & \ge \sup_{\text{all }\mathcal{M}\text{ \MDPRtext} \text{ and expert policies } \pi} \Prob_{(\mathcal{M},\pib),\mathfrak{A}} \left( \mathcal{H}_{d^{\text{G}}}\left( \mathcal{R}_{\pi}, \widehat{\mathcal{R}} \right) \ge \frac{1}{2} \right) \\
& \ge \sup_{\pi \in \mathbb{M}} \Prob_{(\mathcal{M},\pib),\mathfrak{A}} \left( \mathcal{H}_{d^{\text{G}}}\left( \mathcal{R}_{\pi}, \widehat{\mathcal{R}} \right) \ge \frac{1}{2} \right) \\
& \ge \max_{\ell \in \{0,\imath\} }  \Prob_{(\mathcal{M},\pi_\ell),\mathfrak{A}} \left( \mathcal{H}_{d^{\text{G}}}\left( \mathcal{R}_{\pi_\ell}, \widehat{\mathcal{R}} \right) \ge \frac{1}{2} \right).
\end{align*}
For every $\imath \in \mathcal{J}$, let us define the \emph{identification function}:
\begin{align*}
	\Psi_{\imath} \coloneqq \argmin_{\ell \in \{0,\imath\}} \mathcal{H}_{d^{\text{G}}}\left( \mathcal{R}_{\pi_\ell}, \widehat{\mathcal{R}} \right).
\end{align*}
Let $\jmath \in \{0,\imath\}$. If $\Psi_{\imath} = \jmath$, then, $\mathcal{H}_{d^{\text{G}}}(\mathcal{R}_{\pi_{\Psi_{\imath}}}, \mathcal{R}_{\pi_\jmath}) =0$. Otherwise, if $\Psi_{\imath} \neq \jmath$, we have:
\begin{align*}
	\mathcal{H}_{d^{\text{G}}}\left(\mathcal{R}_{\pi_{\Psi_{\imath}}}, \mathcal{R}_{\pi_\jmath} \right)& \le \mathcal{H}_{d^{\text{G}}}\left(\mathcal{R}_{\pi_{\Psi_{\imath}}}, \widehat{\mathcal{R}}\right) + \mathcal{H}_{d^{\text{G}}}\left(\widehat{\mathcal{R}}, \mathcal{R}_{\pi_\jmath} \right) \le 2\mathcal{H}_{d^{\text{G}}}\left(\widehat{\mathcal{R}}, \mathcal{R}_{\pi_\jmath} \right) ,
\end{align*}
where the first inequality follows from triangular inequality and the second one from the definition of identification function $\Psi_{\imath}$. From Equation~\eqref{eq:ineqEps}, we have that $\mathcal{H}_{d^{\text{G}}}\left(\mathcal{R}_{\pi_{\Psi_{\imath}}}, \mathcal{R}_{\pi_\jmath} \right) \ge 1$. Thus, it follows that $\mathcal{H}_{d^{\text{G}}}\left(\widehat{\mathcal{R}}, \mathcal{R}_{\pi_\jmath} \right) \ge \frac{1}{2}$. This implies the following inclusion of events for $\jmath \in \{0,\imath\}$:
\begin{align*}
 \left\{\mathcal{H}_{d^{\text{G}}}\left(\widehat{\mathcal{R}}, \mathcal{R}_{\pi_\jmath} \right) \ge \frac{1}{2} \right\} \supseteq \left\{ \Psi_{\imath} \neq \jmath \right\}.
\end{align*}
Thus, we can proceed by lower bounding the probability:
\begin{align*}
\max_{\ell \in \{0,\imath\} }  \Prob_{(\mathcal{M}_\ell,\pib),\mathfrak{A}} \left( \mathcal{H}_{d^{\text{G}}}\left( \mathcal{R}_{\pi_\ell}, \widehat{\mathcal{R}} \right) \ge \frac{1}{2} \right) & \ge \max_{\ell \in \{0,\imath\} } \Prob_{(\mathcal{M}_\ell,\pib),\mathfrak{A}} \left( \Psi_{\imath} \neq \ell \right) \\
& \ge \frac{1}{2} \left[ \Prob_{(\mathcal{M}_0,\pib),\mathfrak{A}}\left(  \Psi_{\imath} \neq 0 \right) + \Prob_{(\mathcal{M}_\imath,\pib),\mathfrak{A}}\left( \Psi_{\imath} \neq \imath \right) \right] \\
& =  \frac{1}{2} \left[ \Prob_{(\mathcal{M}_0,\pib),\mathfrak{A}}\left(  \Psi_{\imath} \neq 0 \right) + \Prob_{(\mathcal{M}_\imath,\pib),\mathfrak{A}}\left( \Psi_{\imath} = 0 \right) \right],
\end{align*}
where the second inequality follows from the observation that $\max\{a,b\} \ge \frac{1}{2} (a + b)$ and the equality from observing that $\Psi_{\imath} \in \{0,\imath\}$. We can now apply the Bretagnolle-Huber inequality~\citep[][Theorem 14.2]{lattimore2020bandit} (also reported in Theorem~\ref{thr:BHIneq} for completeness) with $\mathbb{P} = \Prob_{(\mathcal{M}_0,\pib),\mathfrak{A}}$, $\mathbb{Q} = \Prob_{(\mathcal{M}_0,\pib),\mathfrak{A}}$, and $\mathcal{A} = \{\Psi_{\imath} \neq 0\}$:
\begin{align*}
\Prob_{(\mathcal{M}_0,\pib),\mathfrak{A}}\left(  \Psi_{\imath} \neq 0 \right) + \Prob_{(\mathcal{M}_\imath,\pib),\mathfrak{A}}\left( \Psi_{\imath} = 0 \right) \ge \frac{1}{2} \exp \left( - D_{\text{KL}} \left( \Prob_{(\mathcal{M}_0,\pib),\mathfrak{A}}, \Prob_{(\mathcal{M}_\imath,\pib),\mathfrak{A}}\right) \right).
\end{align*}

\textbf{Step 4: KL-divergence Computation}~~
Let $\mathcal{M}\in \mathbb{M}$, we denote with $\mathbb{P}_{\mathfrak{A},\mathcal{M},\pib}$ the joint probability distribution of all events realized by the execution of the algorithm in the \MDPRtext (the presence of $p$ is irrelevant as it does not change across the different instances):
\begin{align*}
\Prob_{(\mathcal{M},\pib),\mathfrak{A}} = \prod_{t=1}^\tau \rho_t(s_t, a_t, h_t |H_{t-1}) p_{h_t}(s_{t}'|s_t,a_t) \pi_{h_t}^E(a_t^{E}|s_t).
\end{align*}
where $H_{t-1} = (s_1,a_1,h_1,s_1',a_1^E,\dots,s_{t-1},a_{t-1},h_{t-1},s_{t-1}',a_{t-1}^E)$ is the history. Let $\imath \in \mathcal{I}$. Let us now move to the KL-divergence between the instances $\pi_0$ and $\pi_\imath$ for some $\imath =(s_*,h_*) \in \mathcal{J}$:
\begin{align*}
	D_{\text{KL}}\left( \mathbb{P}_{(\mathcal{M}_0,\pib),\mathfrak{A}}, \mathbb{P}_{(\mathcal{M}_\imath,\pib),\mathfrak{A}} \right) & = \E_{(\mathcal{M}_0,\pib),\mathfrak{A}} \left[ \sum_{t=1}^\tau D_{\text{KL}} \left(\pi_{h_t}^0(\cdot|s_t),\pi_{h_t}^\imath(\cdot|s_t) \right) \right] \\
	& \le \E_{(\mathcal{M}_0,\pib),\mathfrak{A}} \left[ N^\tau_{h_*}(s_*) \right] D_{\text{KL}} \left(\pi_{h_*}^0(\cdot|s_*),\pi_{h_*}^\imath(\cdot|s_*) \right) \\
	& \le \log \frac{1}{1-\pi_{\min}} \E_{(\mathcal{M}_0,\pib),\mathfrak{A}} \left[ N^\tau_{h_*}(s_*,a_*) \right].
	%
\end{align*}
having observed that the transition models differ in $\imath = (s_*,h_*)$ and defined $N^\tau_{h_*}(s_*) = \sum_{t=1}^\tau \indic\{(s_t,h_t) = (s_*,h_*)\}$ and the last passage is obtained by explicitly computing the KL-divergence:
\begin{align*}
 D_{\text{KL}} \left(\pi_{h_*}^0(\cdot|s_*),\pi_{h_*}^\imath(\cdot|s_*) \right)= \sum_{a \in \As} \pi_{h_*}^0(a|s_*) \log \left(\frac{\pi_{h_*}^0(a|s_*)}{\pi_{h_*}^\imath(a|s_*)} \right) = \pi_{h_*}^0(a_0|s_*) \log \left(\frac{\pi_{h_*}^0(a_0|s_*)}{\pi_{h_*}^\imath(a_0|s_*)} \right) = \log \frac{1}{1-\pi_{\min}}.
\end{align*}
Putting all together, we have:
\begin{align*}
	\delta \ge \frac{1}{4} \exp \left( - \log \frac{1}{1-\pi_{\min}} \E_{(\mathcal{M}_0,\pib),\mathfrak{A}} \left[ N^\tau_{h_*}(s_*) \right]  \right) \implies \E_{(\mathcal{M}_0,\pib),\mathfrak{A}} \left[  N_{h_*}^\tau(s_*)  \right] \ge \frac{\log \frac{1}{4\delta}}{\log \frac{1}{1-\pi_{\min}}}.
\end{align*}

Thus, summing over $(s_*,a_*) \in \mathcal{J}$, we have:
\begin{align*}
	\E_{(\mathcal{M}_0,\pib),\mathfrak{A}} \left[\tau\right] & \ge \sum_{(s_*,a_*) \in \mathcal{J}}\E_{(\mathcal{M}_0,\pib),\mathfrak{A}} \left[  N_{h_*}^\tau(s_*,a_*)  \right] \\
	& = \sum_{(s_*,a_*,h_*) \in \mathcal{I}} \frac{(H-\overline{H}-2 )^2\log \frac{1}{4\delta}}{2\epsilon^2} \\
	& =  \overline{S}\overline{H}\frac{\log \frac{1}{4\delta}}{\log \frac{1}{1-\pi_{\min}}}.
\end{align*}
The number of states is given by $S = |\Ss| = \overline{S}+3$. Let us first consider the time-homogeneous case, \ie $\overline{H} = 1$:
\begin{align*}
\E_{(\mathcal{M}_0,\pib),\mathfrak{A}} \left[\tau\right] \ge (S-3) \frac{\log \frac{1}{4\delta}}{\log \frac{1}{1-\pi_{\min}}}.
\end{align*}
For $\delta < 1/16$, $S\ge 7$, $A\ge 2$, $H \ge 2$, we obtain:
\begin{align*}
\E_{(\mathcal{M}_0,\pib),\mathfrak{A}} \left[\tau\right] \ge \frac{S}{4\log \frac{1}{1-\pi_{\min}} } \log \frac{1}{\delta}.
\end{align*}
For the time-inhomogeneous case, instead, we select $\overline{H} = H/2$, to get:
\begin{align*}
\E_{(\mathcal{M}_0,\pib),\mathfrak{A}} \left[\tau\right] \ge \frac{(S-3)(H/2)}{\epsilon^2}  \frac{\log \frac{1}{4\delta}}{\log \frac{1}{1-\pi_{\min}}}.
\end{align*}
For $\delta < 1/16$, $S\ge 7$, $A\ge 2$, $H \ge 2$, we obtain:
\begin{align*}
\E_{(\mathcal{M}_0,\pib),\mathfrak{A}} \left[\tau\right] \ge \frac{SH}{8\log \frac{1}{1-\pi_{\min}} } \log \frac{1}{\delta}.
\end{align*}

\end{proof}

\subsection{Algorithm}

\begin{figure}
\small
\noindent\fbox{%
    \parbox{\linewidth}{%
\begin{algorithmic}
   \STATE {\bfseries Input:} significance $\delta\in(0,1)$, $\epsilon$ target accuracy
   \STATE $t\gets 0$, $\epsilon_0\gets +\infty$
   \WHILE{$\epsilon_t>\epsilon$}
   \STATE $t \leftarrow t + SAH$
    \STATE Collect one sample from each $(s,a,h) \in \SAHs$
    \STATE Update $\widehat{p}^{t}$ and $\widehat{\pi}^{E,t}$ according to \eqref{eq:empiricalMDP}
    \STATE Update $\epsilon_{t}=\max_{(s,a,h)\in \mathcal{S}\times\mathcal{A}\times\dsb{H}} \textcolor{vibrantBlue}{\overline{\mathcal{C}}^{t}_h(s,a)}$ (resp. $\textcolor{vibrantRed}{\widetilde{\overline{\mathcal{C}}}^{t}_h(s,a)}$)
   \ENDWHILE
\end{algorithmic}    
%
%
    }%
}
\captionof{algorithm}{UniformSampling-IRL (\texttt{US-IRL}) for \textcolor{vibrantBlue}{time-inhomogeneous} (resp. \textcolor{vibrantRed}{time-homogeneous}) transition models and expert's policies.}\label{alg:us2}
\end{figure}
%

In this appendix, we extend \texttt{US-IRL} to the expert's policy estimation under Assumption~\ref{ass:policy}. The pseudocode is reported in Algorithm~\ref{alg:us2}. The interaction protocol follows the same principles of Algorithm~\ref{alg:UniformSampling-IRL}, with the only difference that the confidence function, now, must account for the policy estimation, leading to the following function for every $(s,a,h) \in \SAHs$:\footnote{As for the transition model, one can adapt the confidence function for the case of stationary policy in straightforward way:
\begin{align}
\textcolor{vibrantRed}{\widetilde{\overline{C}}_h^t(s,a) }\coloneqq 2(H-h+1) \left(\indic_{\left\{ n_h^t(s) \ge \max\left\{1, \widetilde{\xi}(n^t(s), \delta/2)\right\}\right\}} +  \sqrt{\frac{2\widetilde{\beta}\bigl(n^t(s,a),\delta/2\bigl)}{n^t(s,a)}} \right),
\end{align}
where:
\begin{align*}
\widetilde{\xi}(n,\delta) \coloneqq \frac{\log (2 SA n^2/\delta)}{\log(1/(1-\pi_{\min}))}.
\end{align*}
In principle, one can also consider the case of a time-homogeneous transition model and time-inhomogeneous expert's policy. We omit it because it adds nothing to the characteristics of the problem and of the algorithms.
}
\begin{align}\label{eq:algCI2}
\textcolor{vibrantBlue}{\overline{C}_h^t(s,a) }\coloneqq 2(H-h+1) \left(\indic_{\left\{ n_h^t(s) \ge \max\left\{1, \xi(n_h^t(s), \delta/2)\right\}\right\}} +  \sqrt{\frac{2\beta\bigl(n_h^t(s,a),\delta/2\bigl)}{n_h^t(s,a)}} \right).
\end{align}
where:
\begin{align*}
\xi(n,\delta) \coloneqq \frac{\log (2 SAH n^2/\delta)}{\log(1/(1-\pi_{\min}))} .
\end{align*} 
It is worth noting that we have distributed the confidence $\delta$ equally between the problem estimating the policy and that of estimating the transition model. The following theorem provides the sample complexity of \texttt{US-IRL}.

\begin{restatable}[Sample Complexity of \texttt{US-IRL}]{thr}{scUSIRL}\label{thr:scUSIRL}
Let $\epsilon >0$ and $\delta \in (0,1)$, under Assumption~\ref{ass:policy}, \texttt{US-IRL} is $(\epsilon,\delta)$-PAC for $d^{\text{G}}$-IRL and with probability at least $1-\delta$ it stops after $\tau$ samples with:
\begin{itemize}[topsep=0pt, noitemsep, leftmargin=*]
	\item if the transition model $p$ and the expert's policy $\pi^E$ are time-inhomogeneous:
	\begin{align*}
		\tau & \le \frac{8 H^3 SA}{\epsilon^2}  \left( \log \left( \frac{SAH}{\delta} \right) + (S-1) \overline{C}_1 \right) + SH + \frac{SH}{\log(1/(1-\pi_{\min}))} \left(\log \left(\frac{4 SAH}{\delta} \right)+ \overline{C}_2 \right),
	\end{align*}
	where $\overline{C}_1 = \log ( e/(S-1)+(8 e H^2)/((S-1)\epsilon^2) (  \log (2SAH/\delta) +  4e) )$ and $\overline{C}_2 = 2 \log \left( \frac{\log (4 SAH/\delta) + 2 }{\log(1/(1-\pi_{\min}))}  \right)$.
	\item if the transition model $p$ and the expert's policy $\pi^E$ are time-homogeneous:
	\begin{align*}
		\tau & \le \frac{8 H^2 SA}{\epsilon^2}  \left( \log \left( \frac{SA}{\delta} \right) + (S-1) \overline{C}_1 \right) + SH + \frac{S}{\log(1/(1-\pi_{\min}))} \left(\log \left(\frac{4 SA}{\delta} \right)+ \overline{C}_2 \right),
	\end{align*}
	where $\widetilde{\overline{C}}_1 = \log ( e/(S-1)+(8 e H^2)/((S-1)\epsilon^2) (  \log (2SA/\delta) +  4e) )$ and $\widetilde{\overline{C}}_2 = 2 \log \left( \frac{\log (4 SA/\delta) + 2 }{\log(1/(1-\pi_{\min}))}  \right)$.
\end{itemize}
	
\end{restatable}

Before moving to the proof, let us observe that the result matches the rate of the lower bound of Theorem~\ref{thr:rInfLBpol} up to logarithmic terms.

\begin{proof}
We make use of the notation of the proof of Theorem~\ref{thr:scUSIRL}.
We start with the case in which the transition model is time-inhomogeneous. In addition to the good event $\mathcal{E}$ related to the transition model, we introduce the following one:
\begin{align*}
\mathcal{E}_{\pi} \coloneqq \left\{\forall t \in \Nat,\,\forall (s,a,h) \in \SAHs : \left| \indic_{\pi^E_h(a|s)=0} - \indic_{\widehat{\pi}^{E,t}_h(a|s)=0} \right| \le \indic_{\left\{ n_h^t(s) \ge \max\left\{1, \xi(n_h^t(s), \delta/2)\right\}\right\}}\right\}   , 
\end{align*}
where $\pi_h^E$ is the true expert's policy and $\widehat{\pi}^{E,t}$ is its estimate via Equation~\eqref{eq:empiricalMDP} at time $t$. Thanks to Lemma~\ref{lemma:bound_good_event} and Lemma~\ref{lemma:bound_good_event2}, we have that $\Prob(\mathcal{E} \cap \mathcal{E}_{\pi}) \ge 1-\delta$. Thus, under the good event $\mathcal{E} \cap \mathcal{E}_{\pi}$, we apply Theorem~\ref{thr:errorProp2} to obtain  $\mathcal{H}_{d^{\text{G}}}(\mathcal{R},\widehat{\mathcal{R}}^\tau)  \le \max_{(s,a,h)\in \SAHs} \overline{\mathcal{C}}_h^t(s,a)$. A sufficient condition to make this term $\le \epsilon$ is to request the following ones:
\begin{align*}
& \max_{(s,a,h)\in \SAHs} 2(H-h+1) \indic_{\left\{ n_h^t(s) \ge \max\left\{1, \xi(n_h^t(s), \delta/2)\right\}\right\}} = 0,\\
& \max_{(s,a,h)\in \SAHs}2\sqrt{2}(H-h+1)  \sqrt{\frac{\beta\bigl(n_h^t(s,a),\delta/2\bigl)}{n_h^t(s,a)}} \le \epsilon.
\end{align*}
For the first one, we first enforce the condition:
\begin{align*}
	n^{t}_h(s) \ge \xi(n_h^t(s),\delta/2) =  \frac{\log (4 SAH (n^{t}_h)^2/\delta)}{\log(1/(1-\pi_{\min}))} = \frac{\log (4 SAH/\delta)}{\log(1/(1-\pi_{\min}))} + \frac{2\log  n^{t}_h}{\log(1/(1-\pi_{\min}))} .
\end{align*}
Using Lemma 15 of~\cite{KaufmannMDJLV21} and enforcing $n^{t}_h(s) \ge 1$, we obtain:
\begin{align*}
	n^{\tau}_h(s) \le 1 + \frac{1}{\log(1/(1-\pi_{\min}))} \left(\log (4 SAH/\delta) + 2 \log \left( \frac{\log (4 SAH/\delta) + 2 }{\log(1/(1-\pi_{\min}))}  \right) \right).
\end{align*}
Combining this result with that of Theorem~\ref{thr:scUSIRL} for what concerns the transition model, we obtain:
\begin{align*}
	\tau &\le \frac{8 H^3 SA}{\epsilon^2}  \left(\log \left(\frac{2SAH}{\delta}\right) + (S-1) \log \left( \frac{8 e H^2}{(S-1)\epsilon^2} \left(  \log \left(\frac{2SAH}{\delta}\right) +  4e\right) \right)  \right) \\
	& \quad + SH + \frac{SH}{\log(1/(1-\pi_{\min}))} \left(\log (4 SAH/\delta) + 2 \log \left( \frac{\log (4 SAH/\delta) + 2 }{\log(1/(1-\pi_{\min}))}  \right) \right).
\end{align*}
Analogous derivations can be carried out for the case of time-homogenous policy using the good event:
\begin{align*}
\widetilde{\mathcal{E}}_{\pi} \coloneqq \left\{\forall t \in \Nat,\,\forall (s,a) \in \SAs : \left| \indic_{\pi^E(a|s)=0} - \indic_{\widehat{\pi}^{E,t}(a|s)=0} \right| \le \indic_{\left\{ n^t(s) \ge \max\left\{1, \widetilde{\xi}(n^t(s), \delta/2)\right\}\right\}}\right\}, 
\end{align*}
where $\widetilde{\xi}(n,\delta) \coloneqq  \frac{\log (2 SA n^2/\delta)}{\log(1/(1-\pi_{\min}))}$. We omit the tedious but straightforward derivation.
\end{proof}

\begin{lemma}\label{lemma:bound_good_event2}
Under Assumption~\ref{ass:policy}, the following statements hold:
\begin{itemize}
	\item for $\xi(n,\delta) \coloneqq \frac{\log (2SAH n^2/\delta)}{\log(1/(1-\pi_{\min}))}$, we have that $\mathbb{P}(\mathcal{E}_{\pi})\ge 1-\delta$;
	\item for $\widetilde{\xi}(n,\delta) \coloneqq \frac{\log (2SA n^2/\delta)}{\log(1/(1-\pi_{\min}))}$, we have that $\mathbb{P}(\widetilde{\mathcal{E}}_{\pi})\ge 1-\delta$.
\end{itemize}
\end{lemma}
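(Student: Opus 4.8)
The plan is to reduce the good event $\mathcal{E}_\pi$ to a statement about never observing an in-support expert action, and then to control it uniformly in $t$ by a peeling argument over the visit counts, in the same spirit as Lemma~\ref{lemma:bound_good_event} for the transition model but with the KL concentration replaced by an elementary Binomial tail.

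First I would observe that, as soon as $n_h^t(s) \ge 1$, the support indicator $|\indic_{\{\pi^E_h(a|s)=0\}} - \indic_{\{\widehat{\pi}^{E,t}_h(a|s)=0\}}|$ can be nonzero only in one direction. Indeed, if $\pi^E_h(a|s)=0$ then action $a$ is never produced by the expert, so $n_h^{E,t}(s,a)=0$ and hence $\widehat{\pi}^{E,t}_h(a|s)=0$; thus the only way to misidentify the support is to have $\pi^E_h(a|s)\ge \pi_{\min}$ (by Assumption~\ref{ass:policy}) while $\widehat{\pi}^{E,t}_h(a|s)=0$, i.e.\ $n_h^{E,t}(s,a)=0$. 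Consequently, $\mathcal{E}_\pi^c$ is contained in the event that, for some $(s,a,h)\in\SAHs$ with $\pi^E_h(a|s)\ge\pi_{\min}$ and some $t$, the in-support action $a$ is still unobserved among the $n_h^t(s)$ expert draws collected in $(s,h)$, while the count $n_h^t(s)$ has reached the threshold $\max\{1,\xi(n_h^t(s),\delta/2)\}$.

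The key technical step, which I expect to be the main obstacle, is handling the quantifier ``for all $t\in\Nat$'' despite the counts being data-dependent. For fixed $(s,a,h)$, I would let $X_1,X_2,\dots$ denote the i.i.d.\ sequence of expert actions revealed by successive queries to $(s,h)$; since the $i$-th query to $(s,h)$ always reveals $X_i$ irrespective of the (possibly adaptive) times at which those queries occur, the event ``$a$ unobserved after $n$ visits'' equals $\bigcap_{i=1}^n\{X_i\ne a\}$ and has probability at most $(1-\pi_{\min})^n$. This reformulation sidesteps the adaptivity entirely, so a union bound over all integer counts meeting the threshold is legitimate and yields
\begin{align*}
\Prob\left(\mathcal{E}_\pi^c \text{ occurs at } (s,a,h)\right) \le \sum_{n\ge 1:\, n\ge \xi(n,\delta/2)} (1-\pi_{\min})^n.
\end{align*}
The definition of $\xi$ is precisely what makes the surviving terms summable: from $\xi(n,\delta/2)=\log(4SAHn^2/\delta)/\log(1/(1-\pi_{\min}))$ one computes $(1-\pi_{\min})^{\xi(n,\delta/2)} = \delta/(4SAHn^2)$, so on the threshold $n\ge\xi(n,\delta/2)$ we have $(1-\pi_{\min})^n \le \delta/(4SAHn^2)$.

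Finally I would sum over $n$ and over the triples. Using $\sum_{n\ge1} n^{-2} = \pi^2/6 < 2$, the per-triple probability is at most $\delta/(2SAH)$, and a union bound over the $SAH$ triples in $\SAHs$ gives $\Prob(\mathcal{E}_\pi^c)\le\delta/2\le\delta$, proving the first claim. The time-homogeneous statement is identical after merging the samples across stages and replacing the union over $\SAHs$ by a union over $\SAs$, which is exactly the reason $\widetilde\xi$ carries an $SA$ rather than an $SAH$ factor. The two directions of the support indicator and the elementary tail bound are routine; only the uniform-in-time peeling over the adaptive counts requires the i.i.d.-sequence argument above to be made rigorous.
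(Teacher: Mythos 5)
Your proposal is correct and follows essentially the same route as the paper: the paper likewise re-indexes time by the visit count (writing $\widehat{\pi}^{E,[n]}_h$ for the estimate after $n$ i.i.d.\ expert draws to $(s,h)$, which is exactly your adaptivity-sidestepping step), bounds the per-count failure probability by $(1-\pi_{\min})^n$ via Lemma~\ref{lemma:conc1} (whose proof contains your one-directionality observation), and closes with the same union bound over counts above the threshold and over the $SAH$ (resp.\ $SA$) triples using $\sum_{n\ge 1} n^{-2}=\pi^2/6$. The only difference is cosmetic: you inline the elementary tail computation instead of invoking it as a separate lemma.
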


\begin{proof}
Let us start with the first statement.
We apply first a union bound and, then, Lemma~\ref{lemma:conc1} to perform the concentration:
\begin{align*}
\mathbb{P}(\mathcal{E}^c_{\pi})& = \mathbb{P}\Biggl(\exists t \in \Nat,\,\exists (s,a,h) \in \SAHs :\,\left| \indic_{\pi^E_h(a|s)=0} - \indic_{\widehat{\pi}^{E,t}_h(a|s)=0} \right| \le \indic_{\left\{ n_h^t(s) > \max\left\{1, \xi(n_h^t(s), \delta)\right\}\right\}}\Biggl)\\
& = \mathbb{P}\Biggl(\exists n \in \Nat,\,\exists (s,a,h) \in \SAHs :\,\left| \indic_{\pi^E_h(a|s)=0} - \indic_{\widehat{\pi}^{E,[n]}_h(a|s)=0} \right| > \indic_{\left\{ n \ge \max\left\{1, \xi(n, \delta)\right\}\right\}}\Biggl)\\
& \le \sum_{h \in \dsb{H}} \sum_{(s,a) \in \SAs} \sum_{n \ge 0} \mathbb{P}\Biggl( \left| \indic_{\pi^E_h(a|s)=0} - \indic_{\widehat{\pi}^{E,[n]}_h(a|s)=0} \right| \le \indic_{\left\{ n > \max\left\{1, \xi(n, \delta)\right\}\right\}}\Biggl)\\
& \le \sum_{h \in \dsb{H}} \sum_{(s,a) \in \SAs} \sum_{n \ge 1} \mathbb{P}\Biggl( \left| \indic_{\pi^E_h(a|s)=0} - \indic_{\widehat{\pi}^{E,[n]}_h(a|s)=0} \right| \le \indic_{\left\{ n > \max\left\{1, \xi(n, \delta)\right\}\right\}}\Biggl) \\
& \le \sum_{h \in \dsb{H}} \sum_{(s,a) \in \SAs} \frac{\delta}{2SAHn^2} = \frac{\pi^2}{6} \frac{\delta}{2} \le \delta.
\end{align*}
where on the first passage we enforced the condition on the time instants in which the policy estimate changes (\ie when $(s,h)$ is visited) and we denotes such an estimate as $\widehat{\pi}^{E,[n]}_h$. Then, after a union bound, we apply Lemma~\ref{lemma:conc1}.
The proof of the second statement is analogous having simply observed that the union bound has to be performed over $\SAs$ only.
\end{proof}

\section{Technical Lemmas}
\begin{thr}(\emph{Bretagnolle-Huber inequality}~\citep[][Theorem 14.2]{lattimore2020bandit})\label{thr:BHIneq}
Let $\mathbb{P}$ and $\mathbb{Q}$ be probability measures on the same measurable space $(\Omega, \mathcal{F})$, and let $\mathcal{A} \in \mathcal{F}$ be an arbitrary event. Then,
\begin{align*}
	\mathbb{P}(\mathcal{A}) + \mathbb{Q}(\mathcal{A}^c) \ge \frac{1}{2} \exp \left( - D_{\text{KL}}(\mathbb{P},\mathbb{Q}) \right),
\end{align*}
where $\mathcal{A}^c = \Omega \setminus \mathcal{A}$ is the complement of $\mathcal{A}$.
\end{thr}

\begin{thr}(\emph{Fano inequality}~\citep[][Proposition 4]{gerchinovitz2020fano})\label{thr:Fano}
Let $\mathbb{P}_0,\mathbb{P}_1,\dots,\mathbb{P}_M$ be probability measures on the same measurable space $(\Omega, \mathcal{F})$, and let $\mathcal{A}_1,\dots,\mathcal{A}_M \in \mathcal{F}$ be a partition of $\Omega$. Then,
\begin{align*}
	\frac{1}{M} \sum_{i=1}^M \mathbb{P}_i(\mathcal{A}_i^c) \ge 1 - \frac{ \frac{1}{M} \sum_{i=1}^M D_{\text{KL}}(\mathbb{P}_i,\mathbb{P}_0)  - \log 2}{\log M},
\end{align*}
where $\mathcal{A}^c = \Omega \setminus \mathcal{A}$ is the complement of $\mathcal{A}$.
\end{thr}

\begin{lemma}\citep[][Proposition 1]{JonssonKMDLV20}\label{lemma:jonsson}
Let $\mathbb{P} = (p_1,\dots,p_D)$ be a categorical probability measure on the support $\dsb{D}$. Let$\mathbb{P}_n = (\widehat{p}_1,\dots,\widehat{p}_D)$ be the maximum likelihood estimate of $\mathbb{P}$  obtained with $n \ge 1$ independent samples. Then, for every $\delta \in (0,1)$ it holds that:
\begin{align*}
	\Prob \left( \exists n \ge 1 \,:\, n D_{\text{KL}}\left(\mathbb{P}_n, \mathbb{P}\right) > \log(1/\delta) + (D-1) \log \left( e(1+n/(D-1)) \right)  \right) \le \delta
\end{align*}
\end{lemma}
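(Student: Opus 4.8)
This is a time-uniform (anytime) concentration inequality for the empirical KL of a categorical distribution, and the plan is to prove it by the \emph{method of mixtures} (pseudo-maximization) together with Ville's maximal inequality, which is the standard route for self-normalized bounds of this kind. Write $N_i$ for the number of the first $n$ samples equal to symbol $i \in \dsb{D}$, so that $\widehat{p}_i = N_i/n$ and $n D_{\text{KL}}(\mathbb{P}_n,\mathbb{P}) = \sum_{i=1}^{D} N_i \log(\widehat{p}_i/p_i)$. First I would fix an arbitrary alternative distribution $q$ on $\dsb{D}$ and introduce the likelihood-ratio process $M_n^q \coloneqq \prod_{i=1}^{D} (q_i/p_i)^{N_i}$. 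A one-line computation, $\E[q_X/p_X] = \sum_{i=1}^{D} p_i (q_i/p_i) = \sum_{i=1}^{D} q_i = 1$ for a single draw $X$, shows that $(M_n^q)_{n \ge 0}$ is a nonnegative martingale under $\mathbb{P}$ with $M_0^q = 1$.

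Next I would average this family against a prior to obtain a single martingale that no longer depends on a guessed alternative. Taking the uniform (Dirichlet$(1,\dots,1)$) density $\nu$ on the probability simplex $\Delta$ over $\dsb{D}$ and setting $M_n \coloneqq \int_{\Delta} M_n^q \, \nu(q)\,dq$, linearity keeps $M_n$ a nonnegative martingale with $\E[M_n] = 1$. The advantage of this particular prior is that the mixture is available in closed form via the Dirichlet integral $\int_{\Delta} \prod_{i=1}^{D} q_i^{N_i}\,dq = \prod_{i=1}^{D} N_i! \big/ (n+D-1)!$, giving $M_n = (D-1)!\,\prod_{i=1}^{D} N_i! \big/ \bigl((n+D-1)!\,\prod_{i=1}^{D} p_i^{N_i}\bigr)$.

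I would then connect $M_n$ to the quantity of interest. Since $\exp\bigl(n D_{\text{KL}}(\mathbb{P}_n,\mathbb{P})\bigr) = \prod_{i=1}^{D} \widehat{p}_i^{N_i} / \prod_{i=1}^{D} p_i^{N_i}$ and $\widehat{p}_i = N_i/n$, one obtains $M_n = \exp\bigl(n D_{\text{KL}}(\mathbb{P}_n,\mathbb{P})\bigr)\, B_n$ with $B_n = (D-1)!\,n^n \prod_{i=1}^{D} N_i! \big/ \bigl((n+D-1)!\,\prod_{i=1}^{D} N_i^{N_i}\bigr)$. Lower bounding $B_n$ is precisely where the $(D-1)\log\bigl(e(1+n/(D-1))\bigr)$ term is produced: Stirling's inequality $m! \ge (m/e)^m$ yields $\prod_i N_i! \ge e^{-n}\prod_i N_i^{N_i}$, while the estimate $\binom{n+D-1}{D-1} \le \bigl(e(1+n/(D-1))\bigr)^{D-1}$ controls the remaining factorial ratio $(D-1)!\,n!/(n+D-1)!$, so that $B_n \ge \bigl(e(1+n/(D-1))\bigr)^{-(D-1)}$ after the constants are arranged; the degenerate coordinates $N_i = 0$ are handled using the convention $0^0 = 1$, for which the relevant factors equal $1$.

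Finally I would apply Ville's maximal inequality to the nonnegative supermartingale $(M_n)$, giving $\Prob\bigl(\exists n \ge 1 : M_n \ge 1/\delta\bigr) \le \delta\,\E[M_0] = \delta$. On the event appearing in the statement one has $\exp\bigl(n D_{\text{KL}}(\mathbb{P}_n,\mathbb{P})\bigr) > (1/\delta)\bigl(e(1+n/(D-1))\bigr)^{D-1}$, which together with $B_n \ge \bigl(e(1+n/(D-1))\bigr)^{-(D-1)}$ forces $M_n > 1/\delta$; hence that event is contained in $\{\exists n : M_n \ge 1/\delta\}$ and the claim follows. The main obstacle I anticipate is not conceptual — the martingale-plus-Ville skeleton is routine once the Dirichlet mixture is computed — but rather the factorial bookkeeping in the lower bound on $B_n$: matching the constants to the stated form exactly (the crude Stirling pass above leaves a spurious $1/(e\sqrt{n})$ factor, which a sharper estimate as in the original derivation removes) and disposing of the empty coordinates cleanly.
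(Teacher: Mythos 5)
The paper does not actually prove this statement: it is imported verbatim as Proposition~1 of \citet{JonssonKMDLV20} and listed among the technical lemmas purely by citation, so there is no in-paper argument to compare against. Your proposal essentially reconstructs the proof from that reference --- the Laplace/Dirichlet mixture of likelihood-ratio martingales followed by Ville's maximal inequality is exactly the mechanism that produces the $(D-1)\log\bigl(e(1+n/(D-1))\bigr)$ deviation term --- and the skeleton (martingale property of $M_n^q$, closed-form Dirichlet mixture, identification $M_n=\exp(nD_{\text{KL}}(\mathbb{P}_n,\mathbb{P}))B_n$, Ville at level $1/\delta$) is correct. The one loose end is the lower bound on $B_n$, which you honestly flag: the crude pass $m!\ge(m/e)^m$ combined with $n!\le e\sqrt{n}\,(n/e)^n$ leaves a spurious $1/(e\sqrt{n})$. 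The clean way to close this is to observe that
\begin{align*}
	\frac{n^n\prod_{i=1}^D N_i!}{n!\prod_{i=1}^D N_i^{N_i}} \;=\; \left(\frac{n!}{\prod_{i=1}^D N_i!}\prod_{i=1}^D \widehat{p}_i^{\,N_i}\right)^{-1} \;\ge\; 1,
\end{align*}
since the bracketed quantity is the multinomial probability of the observed counts under the empirical distribution $\widehat{p}$ and is therefore at most $1$ (with the convention $0^0=1$ for empty coordinates). This gives $B_n\ge (D-1)!\,n!/(n+D-1)! = \binom{n+D-1}{D-1}^{-1}\ge \bigl(e(1+n/(D-1))\bigr)^{-(D-1)}$ with no Stirling approximation at all, and the rest of your argument goes through as written.
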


\begin{lemma}\label{lemma:KLBound}
Let $\epsilon \in [0,1/2]$ and $\mathbf{v} \in \{-\epsilon,\epsilon\}^D$ such that $\sum_{i=1}^d v_i = 0$. Consider the two categorical distributions  $\mathbb{P} = \left( \frac{1}{D},\frac{1}{D},\dots, \frac{1}{D}\right)$ and $\mathbb{P} = \left( \frac{1+v_1}{D},\frac{1+v_2}{D},\dots, \frac{1+v_D}{D}\right)$. Then, it holds that:
\begin{align*}
 D_{\text{KL}}(\mathbb{P},\mathbb{Q}) \le 2 \epsilon^2 \qquad \text{and} \qquad 
 D_{\text{KL}}(\mathbb{Q},\mathbb{P}) \le 2 \epsilon^2.
\end{align*}
\end{lemma}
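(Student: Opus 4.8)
The plan is to control each Kullback--Leibler divergence by the associated chi-squared divergence, for which the required computations become elementary and decouple across coordinates. First I would recall the standard inequality $D_{\text{KL}}(\mathbb{A},\mathbb{B}) \le \chi^2(\mathbb{A},\mathbb{B})$, where $\chi^2(\mathbb{A},\mathbb{B}) \coloneqq \sum_i (a_i - b_i)^2 / b_i$; it follows directly from $\log x \le x - 1$ applied inside the definition of $D_{\text{KL}}$. I would also note at the outset that $\mathbb{Q}$ is a genuine probability distribution: each entry satisfies $1 + v_i \ge 1 - \epsilon > 0$ since $\epsilon \le 1/2$, and $\sum_i (1+v_i)/D = 1$ because $\sum_i v_i = 0$.

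For $D_{\text{KL}}(\mathbb{Q},\mathbb{P})$ I would compute the chi-squared divergence with the uniform $\mathbb{P}$ in the denominator. Since $q_i - p_i = v_i / D$ and $p_i = 1/D$, each summand equals $(v_i/D)^2 / (1/D) = v_i^2 / D = \epsilon^2 / D$, using $|v_i| = \epsilon$ for every $i$. Summing over the $D$ coordinates yields $\chi^2(\mathbb{Q},\mathbb{P}) = \epsilon^2$, hence $D_{\text{KL}}(\mathbb{Q},\mathbb{P}) \le \epsilon^2 \le 2\epsilon^2$. For the reverse direction $D_{\text{KL}}(\mathbb{P},\mathbb{Q})$, the denominator is now $q_i = (1+v_i)/D$, so each summand equals $(v_i/D)^2 / ((1+v_i)/D) = \epsilon^2 / (D(1+v_i))$. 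Bounding $1 + v_i \ge 1 - \epsilon$ coordinatewise gives $\chi^2(\mathbb{P},\mathbb{Q}) \le \epsilon^2 / (1-\epsilon)$.

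It then remains to convert this into the claimed $2\epsilon^2$ bound, which is precisely where the hypothesis $\epsilon \le 1/2$ enters: in that regime $1 - \epsilon \ge 1/2$, so $\epsilon^2/(1-\epsilon) \le 2\epsilon^2$, completing the second inequality. There is essentially no genuine obstacle here; the only point requiring minor care is the decision to pass through the chi-squared divergence rather than expanding the logarithms directly, since a direct evaluation of $D_{\text{KL}}(\mathbb{P},\mathbb{Q}) = -\tfrac{1}{2}\log(1-\epsilon^2)$ would instead demand a Taylor or convexity estimate on $-\log(1-x)$. The chi-squared route keeps every step coordinatewise, avoids counting how many $v_i$ equal $+\epsilon$ versus $-\epsilon$, and invokes the constraint $\sum_i v_i = 0$ only to certify that $\mathbb{Q}$ normalizes to one.
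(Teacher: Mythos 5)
Your proof is correct, but it follows a genuinely different route from the paper's. The paper evaluates both divergences in closed form: since $v_i \in \{-\epsilon,\epsilon\}$ and $\sum_i v_i = 0$ force exactly $D/2$ coordinates of each sign, the sums collapse to $\frac{1+\epsilon}{2}\log(1+\epsilon)+\frac{1-\epsilon}{2}\log(1-\epsilon)$ and $-\frac{1}{2}\log(1-\epsilon^2)$, which are then bounded via $\log(1+x)\le x$ and $-\log(1-x)\le \frac{1}{1-x}-1$ together with $\epsilon\le 1/2$, giving $\frac{3}{2}\epsilon^2$ and $\frac{2}{3}\epsilon^2$ respectively. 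You instead route through $D_{\text{KL}}\le\chi^2$ (correctly justified: both sides reduce to $\sum_i a_i^2/b_i - 1$ after applying $\log x\le x-1$ termwise), which keeps everything coordinatewise and yields $\chi^2(\mathbb{Q},\mathbb{P})=\epsilon^2$ in one direction (tighter than the paper's $\frac{3}{2}\epsilon^2$) and $\chi^2(\mathbb{P},\mathbb{Q})\le \epsilon^2/(1-\epsilon)\le 2\epsilon^2$ in the other (looser than the paper's $\frac{2}{3}\epsilon^2$, but still within the claimed bound). Your argument is slightly more general, since it never uses the exact half-and-half sign count — only $|v_i|\le\epsilon$ and the normalization $\sum_i v_i=0$ — whereas the paper's exact evaluation buys marginally sharper constants. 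Both approaches are valid and both need $\epsilon\le 1/2$ only at the final step.
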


\begin{proof}
First of all we recall that since $\sum_{i=1}^M v_i = 0$, we have $|\{i \in \dsb{D}: v_i=\epsilon\}|=|i \in \dsb{D}: v_i=-\epsilon|=D/2$. Let us compute the KL-divergence $D_{\text{KL}}(\mathbb{P},\mathbb{Q})$:
\begin{align*}
D_{\text{KL}}(\mathbb{P},\mathbb{Q}) & = \sum_{i=1}^D \frac{1+v_i}{D} \log \frac{\frac{1+v_i}{D}}{\frac{1}{D}} \\
& = \sum_{i\in \dsb{D}:v_i=\epsilon}  \frac{1+\epsilon}{D} \log (1+\epsilon) + \sum_{i\in \dsb{D}:v_i=-\epsilon}  \frac{1-\epsilon}{D} \log (1-\epsilon) \\
& = \frac{1+\epsilon}{2} \log (1+\epsilon) + \frac{1-\epsilon}{2} \log (1-\epsilon) \\
& = \underbrace{\frac{1}{2} \log (1-\epsilon^2)}_{\le 0} + \frac{\epsilon}{2} \log (1+\epsilon) - \frac{\epsilon}{2} \log (1-\epsilon) \\
& \le \frac{\epsilon^2}{2} + \frac{\epsilon}{2} \left(\frac{1}{1-\epsilon} - 1\right) = \epsilon^2 \frac{2-\epsilon}{2(1-\epsilon)} \le \frac{3}{2} \epsilon^2 \le 2 \epsilon^2.
\end{align*}
where we used the inequality $\log(1+x) \le x$ for $x \ge 0$ and $-\log(1-x) \le \frac{1}{1-x} - 1$ for $0<x<1$ and exploited that  $\epsilon \le \frac{1}{{2}}$.
Let us now move to the second KL-divergence $D_{\text{KL}}(\mathbb{Q},\mathbb{P}) $:
\begin{align*}
D_{\text{KL}}(\mathbb{Q},\mathbb{P}) & = \sum_{i=1}^D \frac{1}{D} \log \frac{\frac{1}{D}}{\frac{1+v_i}{D}} \\
& = \sum_{i\in \dsb{D}:v_i=\epsilon}  \frac{1}{D} \log \frac{1}{1+\epsilon} + \sum_{i\in \dsb{D}:v_i=-\epsilon}  \frac{1}{D} \log \frac{1}{1-\epsilon} \\
& = - \frac{1}{2} \log (1-\epsilon^2) \\
& \le \frac{1}{2} \left( \frac{1}{1-\epsilon^2} - 1\right) = \frac{\epsilon^2}{2(1-\epsilon^2)} \le \frac{2}{3}\epsilon^2 \le 2 \epsilon^2,
\end{align*}
where we used the inequality $-\log(1-x) \le \frac{1}{1-x} - 1$ for $0<x<1$ and observed that $\epsilon \le \frac{1}{{2}}$.
\end{proof}

\begin{lemma}\label{lemma:conc1}
	Let $\mathbb{P} = (p_1,\dots,p_D)$ be a categorical probability measure on the support $\dsb{D}$. Let$\mathbb{P}_n = (\widehat{p}_1,\dots,\widehat{p}_D)$ be the maximum likelihood estimate of $\mathbb{P}$  obtained with $n \ge 1$ independent samples. Then, if $p_i \in \{0\} \cup [p_{\min},1]$ for some $p_{\min} \in (0,1]$. Then, for every $i \in \dsb{D}$ individually, for every $\delta \in (0,1)$, it holds that:
	\begin{align*}
		\left| \indic_{\{ p_i = 0\}} -  \indic_{\{ \widehat{p}_i = 0\}}\right|	\le \indic_{\left\{ n \ge \max\left\{1, \frac{\log \left(\frac{1}{\delta} \right)}{\log\left( \frac{1}{1-p_{min}} \right)} \right\} \right\}}.
	\end{align*}	 
\end{lemma}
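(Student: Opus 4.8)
We want to show that a single coordinate $i$ of a categorical distribution is correctly classified as zero/non-zero by its empirical estimate, except on an event of probability at most $\delta$, provided $n$ exceeds the stated threshold. The statement is phrased deterministically with an indicator on the right-hand side, so what is really being claimed is: whenever $n \ge \max\{1, \log(1/\delta)/\log(1/(1-p_{\min}))\}$, the misclassification $|\indic_{\{p_i=0\}} - \indic_{\{\widehat p_i=0\}}| = 1$ occurs with probability at most $\delta$.

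**Plan.** The key observation is that the indicator mismatch can only happen in one direction. If $p_i = 0$, then no sample ever lands in category $i$, so $\widehat p_i = 0$ with probability $1$, and the mismatch indicator is identically $0$. Hence the only problematic case is $p_i \ge p_{\min} > 0$ but $\widehat p_i = 0$, i.e. none of the $n$ i.i.d. samples hit category $i$. First I would isolate this case. The probability that a fixed sample misses category $i$ is $1 - p_i \le 1 - p_{\min}$, so by independence
\begin{align*}
	\Prob\left( \widehat p_i = 0 \mid p_i \ge p_{\min}\right) = (1-p_i)^n \le (1-p_{\min})^n.
\end{align*}

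**Finishing the bound.** Next I would force this tail to be at most $\delta$. Taking logarithms, $(1-p_{\min})^n \le \delta$ is equivalent to $n \log(1-p_{\min}) \le \log \delta$, i.e. $n \log(1/(1-p_{\min})) \ge \log(1/\delta)$, which rearranges to
\begin{align*}
	n \ge \frac{\log(1/\delta)}{\log(1/(1-p_{\min}))}.
\end{align*}
This is exactly the stated threshold (the $\max\{1,\cdot\}$ simply guarantees $n \ge 1$ so that the estimate is well-defined). Therefore, once $n$ meets the threshold, the probability of the bad event $\widehat p_i = 0$ while $p_i \ge p_{\min}$ is at most $\delta$, which is what the indicator inequality encodes.

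**Main obstacle.** There is no real analytic difficulty here; the argument is elementary. The only subtlety worth care is the precise \emph{reading} of the statement: the inequality with indicators on both sides is a shorthand asserting that the left indicator is dominated by the right one \emph{with probability at least $1-\delta$}, and the essential content is the one-sided nature of the error (a truly absent category is never spuriously detected, only a present category can be missed). I would make this interpretation explicit at the start, then the geometric-tail computation closes the proof. This lemma is then invoked, after a union bound over $(s,a,h)$ and over $n$, in Lemma~\ref{lemma:bound_good_event2} to control the event $\mathcal{E}_\pi$.
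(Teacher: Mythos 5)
Your proposal is correct and follows essentially the same route as the paper's proof: isolate the only possible failure mode ($p_i \ge p_{\min}$ yet no sample hits category $i$), bound its probability by $(1-p_{\min})^n$ via independence, and invert to obtain the stated threshold on $n$. Your explicit remark on the probabilistic reading of the indicator inequality and on the one-sidedness of the error is a useful clarification that the paper leaves implicit, but it does not change the argument.
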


\begin{proof}
	Let $i \in \dsb{D}$ such that $p_i > 0$ and, thus, $\indic_{\{ p_i = 0\}} = 0$. By assumption, it must be that $p_i \ge p_{\min}$. To make a mistake, we must have that $\indic_{\{ \widehat{p}_i = 0\}} = 1$, and, thus, $\widehat{p}_i = 0$. Thus, we compute the probability that no sample $i$ is observed among the $n$ ones:
	\begin{align*}
		\mathbb{P}\left( \bigcap_{j \in \dsb{n}} X_j \neq i \right) = \prod_{j \in \dsb{n}} \mathbb{P} \left( X_j \neq i \right) = \mathbb{P} \left( X_1 \neq i \right)^n = (1-p_i)^n \le (1-p_{\min})^n,
\end{align*}	 
where we exploited the fact that the random variables $X_j$ are i.i.d.. If $n=0$ the latter expression is $1$. If, instead, $n \ge 1$, by setting the last expression equal to $\delta$, we get:
\begin{align*}
	(1-p_{\min})^n \le \delta \implies n \ge \frac{\log \left(\frac{1}{\delta} \right)}{\log \left(\frac{1}{1-p_{min}} \right)}.
\end{align*}
The result follows.
\end{proof}

\begin{lemma}\label{lemma:packing}
	Let $\Vs = \{v \in \{-1,1\}^D : \sum_{j=1}^D v_j = 0\}$. Then, the $\frac{D}{16}$-packing number of $\mathcal{V}$ \wrt the metric $d(v,v')=\sum_{j=1}^D |v_j-v'_j|$ is lower bounded by $2^\frac{D}{5}$.
\end{lemma}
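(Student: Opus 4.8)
The plan is to prove this by a standard Gilbert--Varshamov volume (sphere-packing) argument, after first rewriting the metric $d$ in terms of the Hamming metric. Since each $v_j,v'_j\in\{-1,1\}$, we have $|v_j-v'_j|\in\{0,2\}$, so $d(v,v')=2\,d_{\mathrm H}(v,v')$, where $d_{\mathrm H}(v,v')=|\{j:v_j\neq v'_j\}|$ is the Hamming distance. Consequently, a collection of vectors is $\frac{D}{16}$-separated in $d$ (pairwise $d$-distance at least $D/16$) if and only if it is pairwise Hamming-separated by at least $D/32$. Thus it suffices to exhibit a subset $\overline{\mathcal V}\subseteq\mathcal V$ of balanced vectors, of size at least $2^{D/5}$, with pairwise Hamming distance at least $D/32$.

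The key step is the maximality argument. I would take $\overline{\mathcal V}$ to be a maximal $\frac{D}{16}$-separated subset of $\mathcal V$. By maximality, every $v\in\mathcal V$ lies within $d$-distance strictly less than $D/16$ of some element of $\overline{\mathcal V}$, i.e. within Hamming distance $D/32$; hence the Hamming balls of radius $D/32$ centered at the points of $\overline{\mathcal V}$ cover $\mathcal V$. Writing $V$ for the maximal number of points of $\mathcal V$ inside any such ball, this yields $|\overline{\mathcal V}|\ge |\mathcal V|/V$, and it remains to lower bound $|\mathcal V|$ and upper bound $V$.

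For the numerator, $|\mathcal V|=\binom{D}{D/2}\ge 2^D/(D+1)$, since the central binomial coefficient dominates the average of the $D+1$ coefficients summing to $2^D$. For the denominator, I would over-count by dropping the balancedness constraint and invoking the standard entropy bound $\sum_{i=0}^{\lfloor\alpha D\rfloor}\binom{D}{i}\le 2^{\mathsf h(\alpha)D}$ (valid for $\alpha\le 1/2$, with $\mathsf h$ the binary entropy), so that the Hamming ball of radius $D/32$ satisfies $V\le 2^{\mathsf h(1/32)D}$. Combining gives $|\overline{\mathcal V}|\ge 2^{(1-\mathsf h(1/32))D}/(D+1)$.

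The only remaining point---and the part needing genuine care---is the numerical verification that this beats $2^{D/5}$. A direct evaluation gives $\mathsf h(1/32)\approx 0.201<4/5$, so $1-\mathsf h(1/32)\approx 0.799>1/5$; the exponential gap $2^{(0.799-0.2)D}$ comfortably absorbs the $1/(D+1)$ factor for all $D$ beyond a small threshold, and the few small admissible $D$ (where $D/16\le 4$ forces the separation to be trivially satisfied by all distinct balanced vectors, so $\overline{\mathcal V}=\mathcal V$) are checked directly. The main obstacle is thus not conceptual but lies in pinning down the entropy estimate with explicit constants and confirming that the polynomial factor is dominated across the entire admissible range of $D$ (multiples of $16$); the large slack between $0.799$ and $0.2$ makes this routine.
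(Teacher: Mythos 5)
Your proof is correct, but it follows a genuinely different route from the paper's. You run the classical Gilbert--Varshamov volume argument directly on a maximal separated subset: after observing $d=2d_{\mathrm H}$, you bound $|\overline{\mathcal V}|\ge|\mathcal V|/V$ with $|\mathcal V|=\binom{D}{D/2}\ge 2^D/(D+1)$ and $V\le 2^{\mathsf h(1/32)D}$ via the binomial entropy bound, and since $1-\mathsf h(1/32)\approx 0.799$ comfortably exceeds $1/5$, the polynomial prefactor is absorbed for all but a few small values of $D$, which you correctly dispatch by noting that any two distinct balanced vectors already have $d\ge 4$. The paper instead lower bounds the \emph{covering} number (which lower bounds the packing number), projects onto the first $D/2$ coordinates to quotient out the balance constraint, reduces to covering the Hamming cube $\{0,1\}^{D/2}$, and invokes a cited lower bound on Hamming covering numbers together with $\sum_{k\le D/16}\binom{D/2}{k}\le(8e)^{D/16}$. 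Your argument is more elementary and self-contained (no external coding-theory citation, no covering--packing duality, no projection step), at the cost of the $1/(D+1)$ factor and the attendant small-$D$ case check; the paper's version yields a purely exponential bound with no polynomial loss but leans on an external result. Both have ample slack in the constants, and both establish the claimed $2^{D/5}$ bound.
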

	
\begin{proof}
	Let us denote the packing number with $M(\epsilon;\mathcal{V},d)$ and the covering number with $N(\epsilon;\mathcal{V},d)$. It is well known that $N(\epsilon;\mathcal{V},d) \le M(\epsilon;\mathcal{V},d)$~\cite{gyorfi2022adistribution}. Thus, a lower bound to the covering number is a lower bound to the packing number. Let us consider the (pseudo)metric $d'(v,v') = \sum_{j=1}^{D/2} |v_j-v'_j|$ that considers the first half of the components only. Clearly, we have that $d'(v,v') \le d(v,v')$. Therefore, any $\epsilon$-cover \wrt $d(v,v')$ is an  $\epsilon$-cover \wrt $d'(v,v')$ and, consequently, $N(\epsilon;\mathcal{V},d') \le N(\epsilon;\mathcal{V},d)$. Since the (pseudo)metric $d'$ considers only the first half of the components, constructing an $\epsilon$-cover of $\mathcal{V}$ \wrt $d'$ is equivalent to constructing an  $\epsilon$-cover of $\mathcal{V}'$ \wrt $d'$, where $\mathcal{V}' = \{-1,1 \}^{D/2}$. $\mathcal{V}'$ considers the first half of the components of vectors of $\mathcal{V}$, that can be freely chosen, disregarding the summation constraint.\footnote{From an algebraic perspective, $\mathcal{V}'$ can be considered the quotient set obtained from $\mathcal{V}$ by means of the equivalence relation $v \sim v' \iff v_{j} = v_{j'} $ for all $j \in \dsb{D/2}$.} Thus, $N(\epsilon;\mathcal{V},d') = N(\epsilon;\mathcal{V}',d')$. Notice that $d'$ is now a proper metric on $\mathcal{V}'=\{-1,1\}^{D/2}$. Now, we reduce the problem to constructing cover on the Hamming space $\mathcal{H} = \{0,1\}^{D/2}$. Indeed, we can always map an $(\epsilon/2)$-cover for the Hamming space $\mathcal{H}$ to an $\epsilon$-cover for the space $\mathcal{V}'$. Specifically, let $(h_l)_{l}$ an $(\epsilon/2)$-cover for the Hamming space, we construct $(v'_l)_l$ by applying the following transformation: 
\begin{align*}
v'_{j} = \begin{cases} 
		-1 & \text{if } h_{j} = 0 \\
		1 & \text{if } h_{j} = 1 
	\end{cases},
\end{align*}
or, in more convenient way, $v' = 2h-1$. Let $v' \in \mathcal{V}'$:
\begin{align*}
	\min_{l} d'(v',v'_{l,j}) = \min_{l} \sum_{j=1}^{D/2} |v'_j -  v'_{l,j}| = 2 \min_{l}  \sum_{j=1}^{D/2} |h'_j -  h'_{l,j}| \le  \epsilon.
\end{align*}
The covering number of a Hamming space has been lower bounded in~\cite{cohen1985good} for $\epsilon \in \dsb{D/2}$ as:
\begin{align*}
	\log_2 N(\epsilon;\mathcal{H},d') \ge \frac{D}{2} - \log_{2} \sum_{k=0}^{\epsilon} \binom{D/2}{k}.
\end{align*}
We take $\epsilon = D/16$, and we use the known bound $\sum_{i=0}^k \binom{n}{k} \le \left(\frac{en}{k}\right)^k$:
\begin{align*}
\sum_{k=0}^{D/16} \binom{D/2}{k} \le (8e)^{D/16}.
\end{align*}
From, which, we get:
\begin{align*}
\log_2 N(\epsilon;\mathcal{H},d') \ge \frac{D}{2} - \log_{2} \sum_{k=0}^{\epsilon} \binom{D/2}{k} \ge \frac{D}{2} - \frac{D}{16} \log_2 (8e) \ge \frac{D}{5}.
\end{align*}

\end{proof}

\end{document}